    \newcolumntype{P}[1]{>{\centering\arraybackslash}p{#1}}
    \newcolumntype{Y}{>{\centering\arraybackslash}X}
\theoremstyle{thmstyleone}%
\newtheorem{theorem}{Theorem}
\newtheorem{proposition}{Proposition}
\theoremstyle{thmstyletwo}%
\newtheorem{remark}{Remark}%
\theoremstyle{thmstylethree}%
\newtheorem{definition}{Definition}%
\begin{document}

\title[FADE: FAir Double Ensemble Learning]{FADE: FAir Double Ensemble Learning for Observable and Counterfactual Outcomes}

\author*[1,2]{\fnm{Alan} \sur{Mishler}}\email{contact@alanmishler.com}

\author[1]{\fnm{Edward} \sur{Kennedy}}\email{edward@stat.cmu.edu}

\affil*[1]{\orgdiv{Department of Statistics \& Data Science}, \orgname{Carnegie Mellon University}, \orgaddress{\city{Pittsburgh}, \state{PA}, \postcode{15213}, \country{USA}}}

\affil[2]{\orgdiv{J. P. Morgan AI Research}, \orgaddress{\city{New York}, \state{NY}, \country{USA}}}

\abstract{Methods for building fair predictors often involve tradeoffs between fairness and accuracy and between different fairness criteria, but the nature of these tradeoffs varies. Recent work seeks to characterize these tradeoffs in specific problem settings, but these methods often do not accommodate users who wish to improve the fairness of an existing benchmark model without sacrificing accuracy, or vice versa. These results are also typically restricted to observable accuracy and fairness criteria. We develop a flexible framework for fair ensemble learning that allows users to efficiently explore the fairness-accuracy space or to improve the fairness or accuracy of a benchmark model. Our framework can simultaneously target multiple observable or counterfactual fairness criteria, and it enables users to combine a large number of previously trained and newly trained predictors. We provide theoretical guarantees that our estimators converge at fast rates. We apply our method on both simulated and real data, with respect to both observable and counterfactual accuracy and fairness criteria. We show that, surprisingly, multiple unfairness measures can sometimes be minimized simultaneously with little impact on accuracy, relative to unconstrained predictors or existing benchmark models.}

\keywords{fairness, counterfactual fairness, causal inference, in-processing, post-processing}

\maketitle

\section{Introduction} \label{sec:intro}
Classification and regression models are increasingly widely used to inform or render decisions in domains such as healthcare, criminal justice, education, hiring, and consumer finance. Given the high-stakes nature of such decisions, it is important to ensure that these models are both accurate, to maximize their overall benefits and minimize their overall harms; and fair, so that the benefits and harms do not accrue disproportionately to already (under)privileged groups. In recent years, there have been many well-publicized cases of algorithmic systems whose performance varies over sensitive features such as race and gender in ways that appear to harm marginalized populations \citep{Angwin2016, buolamwini_gender_2018, obermeyer_dissecting_2019}.

In response to concerns such as these, the algorithmic fairness community has developed a wide array of methods for removing or minimizing unfairness in models. In some cases, the most accurate models under consideration do not satisfy a chosen fairness criterion, so there is a fairness-accuracy tradeoff \citep{friedler_comparative_2019, menon_cost_2018, zhao_inherent_2019}. Many methods therefore aim to maximize predictive accuracy subject to a bound on some quantitative unfairness criterion \citep{zafar_fairness_2017, donini_empirical_2018, woodworth_learning_2017}. Some methods adopt a complementary perspective, seeking to minimize unfairness subject to an accuracy constraint \citep{zafar_fairness_2017, coston_characterizing_2021}. Many strict versions of fairness criteria are pairwise unsatisfiable in real-world settings, so there may also be fairness-fairness tradeoffs \citep{Chouldechova2017, kleinberg2017, kim_fact_2020}.  

In some cases, however, tradeoffs are small to nonexistent: model fairness can be increased with minimal loss of accuracy, or vice versa \citep{dutta_is_2020, coston_characterizing_2021, rodolfa_empirical_2021}. Recently there has been growing interest in characterizing these tradeoffs both theoretically and empirically for specific problems and specific classes of models \citep{berk_convex_2017, kim_fact_2020, liu_accuracy_2021}. Current methods for illuminating these tradeoffs do not readily accommodate users who wish to improve the fairness and/or accuracy of an existing benchmark model rather than exploring the fairness-accuracy space. Additionally, these methods are designed to handle \emph{observable} accuracy and fairness criteria, i.e. criteria that depend on observable outcomes. They do not address \emph{counterfactual} accuracy and fairness criteria, which depend on counterfactual outcomes and which are relevant to many settings in which algorithms are used to support decision making. In general, there are very few methods designed to build predictors that satisfy counterfactual versions of common fairness criteria like equalized odds \citep{mishler_fairness_2021}.   

To address these limitations, we propose \emph{FAir Double Ensemble learning (FADE)}, a simple and flexible framework that builds predictors as weighted combinations of basis functions that are chosen by the user. Within this framework, we develop three methods: (1) minimizing risk subject to fairness constraints, (2) minimizing unfairness subject to a risk constraint, and (3) efficiently generating a large class of unfairness-penalized predictors. The weights in method (3) have a closed-form expression that varies smoothly over a vector of unfairness penalty parameters, allowing users to trace out paths in fairness-accuracy spaces. It is computationally extremely fast to compute and evaluate thousands or even tens of thousands of models of this form. These methods accommodate users who wish to improve the fairness of an existing model without sacrificing accuracy, or vice versa, or who wish to understand fairness-accuracy and fairness-fairness tradeoffs in their problem.

In sum, FADE has the following properties:
\begin{itemize}
    \item It allows users to target specific fairness and accuracy constraints as well as to efficiently explore fairness-accuracy and fairness-fairness tradeoffs.
    \item It can target a range of both observable and counterfactual accuracy and fairness criteria, separately or simultaneously.
    \item It enables users to combine previously trained and newly trained predictors, thereby collapsing the distinction between in-processing and post-processing approaches to building fair predictors.
    \item In the context of counterfactual accuracy and fairness, it utilizes doubly robust estimators, which yield fast convergence rates under relatively weak nonparametric assumptions. The excess risk and excess unfairness of our estimators, suitably defined, shrink to 0 at up to $\sqrt{n}$ rates even when relevant nuisance parameters are estimated at slower rates that are typical in nonparametric machine learning.
\end{itemize}
The remainder of the paper is organized as follows. In Section \ref{sec:background}, we discuss background and related work. In Section \ref{sec:setup_and_estimands}, we formalize our problem and define three estimands, all of which are formulated as optimization problems. The first estimand involves minimizing risk subject to a fairness constraint, the second involves minimizing unfairness subject to a risk constraint, and the third utilizes penalty terms rather than constraints to trace out paths in fairness-accuracy space. Section \ref{sec:identification} provides a set of assumptions necessary to relate counterfactual quantities to observable data. In Section \ref{sec:estimation}, we define our estimators and give theoretical guarantees for their performance. We illustrate FADE on simulated data in Section \ref{sec:simulations} and on real data in Sections \ref{sec:compas} and \ref{sec:adult}.\footnote{Code to reproduce the results will be made available in the Github repository \href{https://github.com/amishler/FAir-Double-Ensemble-Learning}{amishler/FAir-Double-Ensemble-Learning}.} In a counterfactual setting, FADE substantially improves both the fairness and accuracy of the COMPAS recidivism predictor (Section \ref{sec:compas}). In an observable setting, FADE yields many predictors that perform comparably to or better than other fairness methods on an income prediction task, while allowing users much more flexibility in the final model form (Section \ref{sec:adult}). We conclude in Section \ref{sec:conclusion}.

\section{Background and related work} \label{sec:background}
We use the terms ``predictor'' and ``model'' interchangeably to refer to any mapping from covariates to outputs that is intended to estimate an unknown quantity, whether that quantity is an unobserved label or an as-yet unrealized outcome. We use ``accuracy'' or ``risk'' to refer to any measure that tracks how well a predictor estimates the target quantity, such as mean-squared error or 0-1 error, and ``performance'' to refer to a model's joint accuracy and fairness characteristics.

\subsection{Ways of achieving fairness}
The fairness literature generally distinguishes three approaches for developing fair predictors. \emph{Pre-processing} approaches transform the input data to remove bias \citep{Calmon2017, feldman_certifying_2015, kamiran_data_2012, zemel_learning_2013}. \emph{In-processing} or \emph{in-training} approaches enforce fairness via constraints or regularization terms during the learning process \citep{donini_empirical_2018, hutchison_fairness-aware_2012, woodworth_learning_2017, zafar_fairness_2017}. \emph{Post-processing} approaches learn functions to map the outputs of existing predictors to new outputs \citep{hardt_equality_2016, pleiss_fairness_2017, kim_multiaccuracy_2019}. Our approach enables users to combine previously existing predictors with newly trained predictors or other basis functions, essentially collapsing the distinction between in-processing and post-processing.

\subsection{Observational and counterfactual fairness} \label{subsec:fairness_criteria}
Many popular fairness criteria place restrictions on the joint distribution of predictions, outcomes, and a sensitive feature. For example, the criterion of \emph{independence}, also known as \emph{statistical parity} or \emph{demographic parity}, requires that the predictions be marginally independent of the sensitive feature \citep{calders_building_2009, barocas2018fairness}, while \emph{separation} or \emph{equalized odds} requires that they be independent conditional on the outcome \citep{hardt_equality_2016}. Criteria such as equalized odds that depend on the outcome may be defined with respect to observable or potential (aka ``counterfactual'') outcomes. Counterfactual versions of these criteria are appropriate in risk assessment settings, i.e. settings in which the model is meant to estimate the risk of an adverse outcome absent a specific intervention \citep{coston_counterfactual_2020}. In these settings, the potential outcomes of interest are the outcomes that would occur if, possibly counterfactually, a decision variable were set to some baseline level \citep{neyman1923justification, Holland1986}. Examples arise in areas such as healthcare, where doctors must predict who would develop complications without further treatment; criminal justice, where judges must predict who would recidivate if released pretrial; and consumer finance, where banks must predict who would default if issued a loan.

A distinct set of causal fairness criteria consider counterfactuals with respect to the sensitive feature rather than with respect to a decision variable \citep{Kilbertus2017, Kusner2017, nabi_fair_2018, zhang_fairness_2018, nabi_learning_2019, wang_equal_2019}. These criteria consider questions like ``what would the risk prediction be if the defendant had been of a different race their whole life?'' rather than ``what would the outcome be if this person were released pretrial?'' We do not consider these criteria here; see \citep{mishler_fairness_2021}, Section 3.2 for further discussion. 

Most of the existing fairness literature is concerned with observable fairness criteria and accuracy measures. To our knowledge, only two papers have developed methods to satisfy the type of counterfactual criteria described above. \cite{mishler_fairness_2021} developed a post-processing method that maximizes accuracy while satisfying (approximate) counterfactual equalized odds or related fairness criteria. Their approach takes as input a binary classifier and outputs a randomized binary classifier. In contrast, our method applies to both classification and regression, and it combines in-processing and post-processing.

\cite{coston_characterizing_2021} developed a method to minimize various unfairness measures subject to an accuracy constraint. They considered both the observable setting and a \emph{selective labels} setting, when the outcome of interest is observed only for a non-representative subset of the population. Although the terminology differs, this is essentially equivalent to the counterfactual setting that we consider. Their method involves iterative optimization and outputs a randomized predictor that is constructed as a distribution over a set of models.

In contrast to both of the above methods, our framework can handle any of the following: (1) minimizing risk subject to fairness constraints, (2) minimizing unfairness subject to an accuracy constraint, and (3) efficiently producing a large set of models that vary in their risk and fairness properties. Method (3) utilizes a set of closed-form solutions that are extremely fast to compute. Our methods also output deterministic rather than randomized predictors, though in a classification setting these can be turned into randomized classifiers by treating the output in $[0, 1]$ as a probability and sampling from the corresponding Bernoulli distribution, as described in Section \ref{subsec:model_validation}. Our methods apply to a large class of observable and counterfactual accuracy and fairness criteria. 

\subsection{Fairness-accuracy and fairness-fairness tradeoffs}
Within a candidate set of models, the most accurate model and the most fair model may not be the same model, in which case there is a fairness-accuracy tradeoff. The shape of this tradeoff depends on the model set, the accuracy and fairness criteria, and the distribution of the data \citep{dutta_is_2020}. While some papers emphasize the unavoidable existence of such tradeoffs \citep{calders_building_2009, Corbett-Davies2017, menon_cost_2018, woodworth_learning_2017, zhao_inherent_2019}, other papers have found that in practical settings they are sometimes so small as to be irrelevant; that is, relative to a baseline model, it may be possible to substantially improve a given fairness criterion with little to no decrement in accuracy, or vice versa \citep{coston_characterizing_2021, rodolfa_empirical_2021}. 

Different fairness criteria may also trade off with one another. In their strictest form, many fairness criteria are mutually unsatisfiable in real-world conditions \citep{Chouldechova2017, kleinberg2017}. In practice, many methods make use of continuous-valued relaxations of these criteria, which may be more or less simultaneously satisfiable, to a degree that again depends on the modeling choices and data distribution.

Recent work aims to characterize fairness-accuracy and fairness-fairness tradeoffs both theoretically \citep{dutta_is_2020, kim_fact_2020} and empirically \citep{berk_convex_2017, liu_accuracy_2021}. Like \cite{berk_convex_2017}, our penalized predictor method uses fairness regularization terms to trace out different paths in fairness-accuracy space; however, their results consider observable accuracy and fairness measures, whereas ours encompass both observable and counterfactual measures. We also consider a (broad) class of fairness criteria that yield closed-form solutions, and we provide theoretical guarantees for our methods.

In some cases, users have clear accuracy or fairness constraints that they wish their models to satisfy. These constraints might derive from moral, legal, or business considerations. For example, a business might wish to ensure that a hiring algorithm generates positive recommendations for roughly equal percentages of male and female applicants, in order to avoid potential disparate impact. Conversely, a business might wish to improve the fairness of an existing model without sacrificing accuracy (profit). We provide an explicit correspondence between our constrained and penalized predictors and show how the set of penalized models can be ``seeded'' with models that target specific fairness or accuracy constraints.

Our method also makes it easy for users and auditors to understand whether a model in use could be made more fair without a substantial loss of accuracy, or vice versa. This is useful both for improving model performance and for understanding whether a particular level of unfairness can be justified as a type of ``business necessity,'' or whether fairness can be improved without compromising accuracy \citep{coston_characterizing_2021}.

\section{Setup and estimands} \label{sec:setup_and_estimands}
Our data is of the form $Z = (A, X, S, D, Y) \sim \Pb$, for sensitive feature $A \in \{0, 1\}$, additional covariates $X \in \calX$, previously trained predictor(s) $S \in \calS$, decision or treatment $D \in \calD$, and outcome or label $Y \in [\ell_y, u_y]$ with bounds $\ell_y, u_y$. If no previously trained predictors are available, then we have $S = \emptyset$. We denote by $Y_i^0$ the potential outcome $Y_i^{D=0}$, that is, the outcome or label that would be observed for individual $i$ if, possibly contrary to fact, the decision were set to $D_i = 0$. For example, $Y^0$ could indicate whether an individual would recidivate if released pretrial. We assume that $Y^0$ also lies in $[\ell_y, u_y]$. In settings where we are interested in observational rather than counterfactual fairness, we may have $D = \emptyset$. We assume that $Z \subseteq \calZ\subset  \Rb^p$ for compact $\calZ$.

Let $W = (A, X, S) \in \calW$ represent the collected covariates. We let $\Yt$ denote either $Y$ and $Y^0$ as appropriate, since we are interested in both observational and counterfactual fairness and accuracy measures. We refer to $\Yt = Y$ as the \emph{observable} setting and $\Yt = Y^0$ as the \emph{counterfactual setting}. Broadly speaking, we seek functions of the form $f:\calW \mapsto [\ell_y, u_y]$ that are both accurate and fair in predicting $\Yt$. Our goals are (1) to enable users to target specific fairness or accuracy constraints, and (2) to trace out the fairness and accuracy properties of a large set of models, both in order to understand setting-specific fairness-accuracy and fairness-fairness tradeoffs and in order to maximize the user's ability to choose a desirable model.

\begin{remark}[Additional notation]
We let $\Vert \cdot \Vert$ denote an appropriate $L_2$ norm. That is, for any random variable $f(Z)$ taking values in $ \Rb$, $\Vert f(Z) \Vert = (\int (f(Z))^2 d\Pb(Z))^{1/2}$ denotes the $L_2(\Pb)$ norm, while for a non-random vector $v \in  \Rb^k$, $\Vert v \Vert = (\sum_{j=1}^k v_j^2)^{1/2}$ denotes the Euclidean $L_2$ norm. For a random vector $f(Z)$ taking values in $ \Rb^k$, $\Vert f(Z) \Vert = (\sum_{j=1}^k \Vert f_j(Z) \Vert^2)^{1/2}$.
\end{remark}

\subsection{FADE summary}
The ``Ensemble learning'' part of ``FAir Double Ensemble learning'' has its usual sense, referring to an ensemble of predictors. The ``Double'' part captures several features of our approach: (1) it combines in-processing and post-processing; (2) it accommodates both observable and counterfactual outcomes; (3) it (optionally) has two stages, first learning predictors and then learning their ensemble weights; and (4) in the counterfactual setting, it utilizes doubly robust estimators, which also appear in the literature under the heading ``double machine learning'' \citep{tsiatis_semiparametric_2006, chernozhukov_doubledebiased_2018}. These features are illustrated in the remainder of this section and in Section \ref{sec:estimation}.

\subsection{Accuracy and fairness measures}
The risk (accuracy) measure we consider is the MSE:
\begin{align*}
    \risk(f) = \E[(f(W) - \Yt)^2]
\end{align*}
We consider (un)fairness measures $\UF(f)$ that can be expressed in the form
\begin{align}
    \UF(f) &= \lvert\E[g(W, \Yt)f(\W)]\rvert \label{eq:unfairness}
\end{align}
where $g(W, \Yt)$ is a bounded \emph{fairness function} that depends only on $W$ and $\Yt$, not on $D$. This accommodates a broad range of measures, including measures described by the following proposition. All proofs are given in the Appendix.
\begin{proposition} \label{proposition:fairness_functions}
Let $\alpha_0, \alpha_1 \in  \Rb$ and let $h_0, h_1$ be mappings from $\{0, 1\} \times \Yt$ to $\{0, 1\}$. Consider an unfairness measure
\begin{align*}
    \UF(f) = \lvert\alpha_0\E[f(W) \mid h_0(A, \Yt) = 1] - \alpha_1\E[f(W) \mid h_1(A, \Yt) = 1] \rvert
\end{align*}
and assume that $\Pb(h_0(A, \Yt) = 1) > 0, \Pb(h_1(A, \Yt) = 1) > 0$. Then there exists a fairness function $g(W, \Yt)$ such that $\UF(f) = \lvert\E[g(W, \Yt)f(W)] \rvert$, namely
\begin{align*}
    g(W, \Yt) &= \alpha_0\frac{h_0(A, \Yt)}{\E[h_0(A, \Yt)]} - \alpha_1\frac{h_1(A, \Yt)}{\E[h_1(A, \Yt)]}
\end{align*}
\end{proposition}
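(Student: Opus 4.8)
The plan is to rewrite each conditional expectation appearing in $\UF(f)$ as a ratio of unconditional expectations, after which linearity of expectation collapses the two terms into a single expression of the form $\E[g(W,\Yt)f(W)]$.

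First I would exploit the fact that $h_0$ and $h_1$ are $\{0,1\}$-valued. For such a map the indicator of the event $\{h_j(A,\Yt)=1\}$ coincides with $h_j(A,\Yt)$ itself, and the probability of that event equals $\E[h_j(A,\Yt)]$. The positivity hypotheses $\Pb(h_j(A,\Yt)=1)>0$ guarantee these denominators are strictly positive, so the conditional expectations are well defined and the elementary identity
\begin{align*}
    \E[f(W)\mid h_j(A,\Yt)=1] = \frac{\E[f(W)\,h_j(A,\Yt)]}{\E[h_j(A,\Yt)]}, \qquad j\in\{0,1\},
\end{align*}
applies.

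Substituting these two expressions into the definition of $\UF(f)$ and pulling the common factor $f(W)$ outside the expectation by linearity, the argument of the absolute value becomes
\begin{align*}
    \E\!\left[\left(\alpha_0\frac{h_0(A,\Yt)}{\E[h_0(A,\Yt)]} - \alpha_1\frac{h_1(A,\Yt)}{\E[h_1(A,\Yt)]}\right)f(W)\right],
\end{align*}
which is exactly $\E[g(W,\Yt)f(W)]$ for the $g$ displayed in the statement.

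Finally I would verify that this $g$ is a legitimate fairness function in the sense required by \eqref{eq:unfairness}: it is a function of $A$ and $\Yt$ alone, hence of $(W,\Yt)$ and not of $D$; and it is bounded, since each $h_j(A,\Yt)\in\{0,1\}$ while each denominator $\E[h_j(A,\Yt)]$ is a fixed positive constant. I expect no genuine obstacle here. The only point needing attention is that the conditional expectations be well posed, which is precisely what the positivity assumptions supply; the remainder is a direct substitution followed by an application of linearity.
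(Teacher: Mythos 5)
Your proposal is correct and follows essentially the same route as the paper: both reduce each conditional expectation to the ratio $\E[f(W)h_j(A,\Yt)]/\E[h_j(A,\Yt)]$ (the paper derives this via iterated expectation, you via the indicator interpretation of the $\{0,1\}$-valued $h_j$) and then combine the two terms by linearity. The extra check that $g$ is bounded and independent of $D$ is a sensible addition but not a substantive difference.
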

That is, \eqref{eq:unfairness} is compatible with any fairness measure that can be expressed as a (weighted) difference of average predictions conditioned on events that are a function of the sensitive feature and the outcome. Functions of this form must in general be estimated, since they depend on unknown expected values. We focus in this paper on the following measures, which we refer to equivalently as disparities. We first express each in a canonical form, and then we identify the corresponding fairness function $g(W, \Yt)$, dropping the arguments $(W, \Yt)$ for convenience.

\begin{definition} \label{definition:rate_diff} The \emph{rate disparity (rate-diff)} is
\begin{align*}
     \lvert\E[f(W) \mid A = 0] - \E[f(W) \mid A = 1] \rvert
\end{align*}
with fairness function
\begin{align*}
    g^\text{rate} &= \frac{1 - A}{\E[1 - A]} - \frac{A}{\E[A]}
\end{align*}
\end{definition}

\begin{definition} \label{definition:FPR-diff} For $\Yt \in \{0, 1\}$, the \emph{generalized False Positive Rate disparity (FPR-diff)} is
\begin{align*}
     \lvert\E[f(W) \mid A = 0, \Yt = 0] - \E[f(W) \mid A = 1, \Yt = 0] \rvert
\end{align*}
with fairness function 
\begin{align*}
    g^\text{FPR} &= \frac{(1 - \Yt)(1 - A)}{\E[(1 - \Yt)(1 - A)]} - \frac{(1 - \Yt)A}{\E[(1 - \Yt)A]}
\end{align*}
\end{definition}

\begin{definition} \label{definition:FNR-diff} For $\Yt \in \{0, 1\}$, the \emph{generalized False Negative Rate disparity (FNR-diff)} is
\begin{align*}
     \lvert\E[1 - f(W) \mid A = 0, \Yt = 1] - \E[1 - f(W) \mid A = 1, \Yt = 1] \rvert
\end{align*}
with fairness function
\begin{align*}
    g^\text{FNR} &= \frac{\Yt A}{\E[\Yt A]} - \frac{(1 - \Yt)(1 - A)}{\E[\Yt(1 - A)]}
\end{align*}
\end{definition}
These definitions are closely related to common fairness criteria described in Section \ref{subsec:fairness_criteria}. The criterion of \emph{independence} requires predictions $f(W)$ to be independent of the sensitive feature $A$. Rate-diff measures violations of this criterion \citep{calders_three_2010}. Equal opportunity requires the false negative rates to be equal across the two groups, while equalized odds requires both the false positive and the false negative rates to be equal \citep{hardt_equality_2016}. FNR-diff therefore measures violations of equal opportunity, while FPR-diff and FNR-diff together measure violations of equalized odds.

For continuous-valued predictors, it may be challenging to attain full (conditional) independence. Hence it is common to focus only on average conditional predictions \citep[e.g.][]{Corbett-Davies2017}.

\subsection{Predictor classes}
We consider predictors that lie in the linear span of a set of basis functions $b = b(\W) = (b_1(\W), \ldots b_k(\W))$, where each function $b_j(W)$ maps from $\calW$ to $ \Rb$. That is, for a given $b$ we seek predictors in the set $\calF_b$, where
\begin{align*}
    \calF_b = \{b^T\beta: \beta \in  \Rb^k\}
\end{align*}
Predictors of this form are commonly referred to as (a linear) ensemble, stacked predictors, or aggregated predictors  \citep{breiman_stacked_1996, juditsky_functional_2000, goos_optimal_2003, polley_super_2010}. In the context of our method, we refer to these as FADE predictors. The vector $b$ is determined by the user. It can include for example previously trained predictors $S$, newly trained predictors, or arbitrary orthogonal basis functions such as trigonometric functions or polynomials. Our approach makes it easy for users to search across a range of different bases $b$.

In this paper, we consider a regime in which $k < n$, where $n$ is the sample size, since this simplifies estimation. In practice, users might wish to use bases of dimension $k \geq n$, such as spline bases or kernel basis functions. We briefly consider these and other possibilities in Appendix \ref{appendix:growing_bases}. In our asymptotic analyses, we also generally assume that the basis is eventually fixed, meaning $k \not\rightarrow \infty$. We intend to analyze settings in which $k \geq n$ and/or $k \rightarrow \infty$ in future work.  

Depending on $b$, the set $\calF_b$ may be relatively rich. For example, $b$ could be a truncated orthonormal basis of the space $L_2(\calW)$ of square-integrable functions, in which case $\calF_b$ could approximate $L_2$ to a degree chosen by the user.

Our theoretical analyses utilize the following two assumptions about the basis.

\begin{assumption}[PSD outer product] \label{assumption:psd}
Uniformly in $n$, the eigenvalues of $\E[bb^T]$ are bounded above and away from 0.
\end{assumption}
This assumption asserts that the basis functions $b_1(W), \ldots b_k(W)$ are not too collinear. It means that $\E[bb^T]$ is always positive semi-definite. In a regime in which the basis is eventually fixed, this assumption simply requires that the basis functions are never perfectly collinear. 

\begin{assumption}[Bounded basis norm] \label{assumption:bounded_basis}
Uniformly in $n$, $\sup_{w\in\calW} \Vert b(w) \Vert < \infty$, where $\Vert b(w) \Vert$ is the Euclidean $L_2$ norm.
\end{assumption}
When $k \not\rightarrow \infty$, this assumption simply requires the norm $\Vert b(w) \Vert$ to be finite over the covariate space. In a setting in which $k$ is allowed to grow to infinity, this assumption can be relaxed to one which controls the growth rate of $\sup_{w\in\calW} \Vert b(w) \Vert$. See Remark \ref{remark:k_to_infinity} in Section \ref{sec:estimation}.

\subsection{Estimands} \label{subsec:estimands}
We first define two estimands that are solutions to constrained least squares problems. These estimands represent users who have clear target fairness or accuracy constraints. We then show that these estimands can be equivalently expressed via penalized least squares problems that admit closed-form solutions. These solutions are indexed by an unfairness penalty parameter; by varying this parameter, we may trace out curves in accuracy-fairness space over $\calF_b$.

Suppose that there are $t$ fairness measures that can be expressed via fairness functions $g_j, j = 1, \ldots, t$. For a given $k$-dimensional basis $b$, define the risk-minimization (\emph{risk-min}) parameter $\betaopt_r$ and the unfairness-minimization (\emph{unfair-min}) parameter $\betaopt_u$ as the solutions to the following optimization problems:
\begin{alignat*}{2}
    & \betaopt_r = && \argmin_{\beta\in \Rb^k}\E[(b^T\beta - \Yt)^2] \\
    & && \text{subject to } \-\ (\E[g_j b^T\beta])^2 \leq \epsilon_j^2, \quad j = 1, \ldots t \\
    & \betaopt_u = && \argmin_{\beta\in \Rb^k} \sum_{j=1}^t \alpha_j(\E[g_j b^T\beta])^2 \\
    & && \text{subject to } \E[(b^T\beta - \Yt)^2] \leq \epsilon
\end{alignat*}
for user-chosen constraints $\epsilon_j \geq 0$, $\epsilon > 0$, and weights $\alpha_j$. That is, $\betaopt_r$ indexes the most accurate predictor in $\calF_b$ among those that satisfy $t$ specified fairness constraints, and $\betaopt_u$ indexes the most fair predictor among those that satisfy a specified risk constraint.

We constrain $\epsilon > 0$ because otherwise we'd be insisting on a perfectly accurate predictor, which is generally impossible in practice. The risk-min problem is always feasible with $\epsilon_j \geq 0$, since the predictor defined by $\beta = 0$ always satisfies the fairness constraints. Under Assumption \ref{assumption:psd}, $\betaopt_r$ is unique, since the objective is strictly convex. The unfair-min problem may be infeasible, if there is no predictor in $\calF_b$ whose risk is less than or equal to $\epsilon$. This may not be an issue in practice, if $\epsilon$ represents (an estimate of) the risk of an existing benchmark model. With slight modifications, all our subsequent results would carry through if this constraint were explicitly expressed with respect to a benchmark model; for the sake of simplicity, however, we leave it in this form. If the unfairness-minimization problem is feasible and $\sum_{j=1}^t \alpha_j\E[g_j b]\E[g_j b]^T$ is positive definite, then $\betaopt_u$ is unique, since the objective is strictly convex.

Note that the fairness constraints in the risk-min problem can be equivalently written in affine form, as $\lvert\E[g_j f(\W)]\rvert \leq \epsilon_j$, meaning that $\betaopt_r$ is the solution to a quadratic program. We express the constraints in squared form for notational consistency with the penalized estimand, which is defined as follows.

For any $\lambda = (\lambda_1, \ldots, \lambda_t)$, with all $\lambda_j \geq 0, j = 1, \ldots, t$, define the penalized-minimization (\emph{penalized-min}) estimand $\betaopt_\lambda$ as:
\begin{align*}
    & \betaopt_{\lambda} = \argmin_{\beta\in \Rb^k} \E[(b^T\beta - \Yt)^2] + \sum_{j=1}^t \lambda_j \left(\E[g_j b^T\beta]\right)^2
\end{align*}
This can be written in an equivalent closed form:
\begin{align*}
     & \betaopt_{\lambda} = \left(\E[bb^T] + \sum_{j=1}^t \lambda_j \E[g_j b]\E[g_j b]^T\right)^{-1}\E[\Yt b]
\end{align*}
Under Assumption \ref{assumption:psd}, the matrix inverse always exists, since each matrix $\E[g_j b]\E[g_j b]^T$ is positive semi-definite and $\E[bb^T]$ is positive definite, and hence by Weyl's inequality the entire matrix is positive definite.

We now establish a correspondence between the constrained and penalized forms. Let $\mathcal{I}$ denote the set of active fairness constraints at $\betaopt_r$, that is, $\mathcal{I} = \{j \in \{1, \ldots, t\}: (\E[g_j b^T\betaopt_r])^2 = \epsilon_j^2\}$. 
\begin{assumption}[LICQ] \label{assumption:licq}
The set of vectors $\{\E[g_jb]\E[g_jb]^T\betaopt_r: j \in \mathcal{I}\}$ is linearly independent.
\end{assumption}
Assumption \ref{assumption:licq}, which is expected to hold in practical settings, is the Linear Independence Constraint Qualification (LICQ), which yields an injective mapping from the constrained to the penalized form.
\begin{proposition} \label{proposition:constrained_to_lagrange}
Under Assumption \ref{assumption:psd}, for any $\betaopt_r$ there exists a $\lambda \in \Rpos^t$ such that $\betaopt_\lambda = \betaopt_r$. If Assumption \ref{assumption:licq} holds, then this $\lambda$ is unique. 
\end{proposition}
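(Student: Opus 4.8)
The plan is to treat the risk-min problem as a strictly convex quadratic program, apply the KKT conditions, and then recognize the resulting stationarity equation as the first-order optimality condition for the penalized-min problem. Write $Q = \E[bb^T]$, $c = \E[\Yt b]$, and $v_j = \E[g_j b]$, so the risk is $\beta^TQ\beta - 2c^T\beta + \E[\Yt^2]$ and each fairness constraint reads $h_j(\beta) := (v_j^T\beta)^2 - \epsilon_j^2 \leq 0$. Under Assumption \ref{assumption:psd}, $Q \succ 0$, so the objective is strictly convex; each $h_j$ is convex; and, when every $\epsilon_j > 0$, the point $\beta = 0$ is strictly feasible, so Slater's condition holds. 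Hence the KKT conditions are necessary and sufficient for optimality at $\betaopt_r$, and there exist multipliers $\mu_j \geq 0$ satisfying
\begin{align*}
    2Q\betaopt_r - 2c + \sum_{j=1}^t 2\mu_j(v_j^T\betaopt_r)v_j = 0, \qquad \mu_j\bigl((v_j^T\betaopt_r)^2 - \epsilon_j^2\bigr) = 0.
\end{align*}

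The key observation is that the stationarity equation rearranges to $\bigl(Q + \sum_{j}\mu_j v_jv_j^T\bigr)\betaopt_r = c$, which is exactly the normal equation characterizing the penalized-min solution with penalty vector $\lambda = \mu$. Since the penalized objective is also strictly convex under Assumption \ref{assumption:psd}, its minimizer is the unique solution of this normal equation, i.e. $\betaopt_\lambda = (Q + \sum_j\lambda_j v_jv_j^T)^{-1}c$. Taking $\lambda = \mu \in \Rpos^t$ therefore gives $\betaopt_\lambda = \betaopt_r$, which proves existence.

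For uniqueness, I would first note that $\nabla h_j(\betaopt_r) = 2(v_j^T\betaopt_r)v_j = 2\,\E[g_jb]\E[g_jb]^T\betaopt_r$, so Assumption \ref{assumption:licq} says precisely that the active-constraint gradients $\{\nabla h_j(\betaopt_r): j\in\mathcal{I}\}$ are linearly independent, i.e. standard LICQ holds. It is classical that LICQ forces the KKT multiplier vector to be unique: complementary slackness kills $\mu_j$ for inactive $j$, and the difference of any two multiplier vectors then solves $\sum_{j\in\mathcal{I}}(\mu_j - \mu_j')\nabla h_j(\betaopt_r) = 0$, which by linear independence forces $\mu = \mu'$. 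As $\lambda = \mu$, the penalty vector is unique as well.

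The main obstacle is the degenerate case in which an active constraint has $\epsilon_j = 0$: then $v_j^T\betaopt_r = 0$, Slater's condition fails, and no finite $\lambda_j$ can reproduce $\betaopt_r$, since the penalized solution attains $v_j^T\beta = 0$ only in the limit $\lambda_j \to \infty$ (a one-dimensional example makes this transparent). I would handle this either by restricting to strictly positive constraints $\epsilon_j > 0$---so that $v_j^T\betaopt_r = \pm\epsilon_j \neq 0$ on the active set and both Slater and the multiplier-matching go through cleanly---or by interpreting $\lambda$ over the extended nonnegative reals so the correspondence holds as a limit. Away from this edge case the correspondence is exact and the argument above is complete.
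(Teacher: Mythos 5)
Your proof takes essentially the same route as the paper's: both treat risk-min as a strictly convex quadratic program, extract KKT multipliers, read the stationarity equation as the normal equation of the penalized problem, and invoke LICQ for uniqueness of the multiplier (the paper cites a theorem of Wachsmuth where you give the elementary complementary-slackness argument directly). Your closing caveat about active constraints with $\epsilon_j = 0$ is a genuine refinement rather than a defect: the paper's proof asserts that the dual is attained without addressing this case, yet when the constraint $\epsilon_j = 0$ is active and $\E[g_j b]^T\E[bb^T]^{-1}\E[\Yt b] \neq 0$, the gradient of the squared constraint vanishes at $\betaopt_r$, the KKT system in squared form has no solution, and no finite $\lambda_j$ reproduces $\betaopt_r$ --- exactly as your one-dimensional example shows. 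So your restriction to $\epsilon_j > 0$ (or the extended-valued reading of $\lambda$) is the correct way to make the stated correspondence exact.
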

We will utilize the penalized form to efficiently construct a large set of predictors that vary in their accuracy and fairness properties. We will see that we can exploit an empirical analogue of Proposition \ref{proposition:constrained_to_lagrange} to ``seed'' this set with models that target specified fairness constraints.

\begin{proposition} \label{proposition:lagrange_to_constrained}
Fix $\lambda \in \Rpos^t$, $\lambda \neq 0$. Under Assumption \ref{assumption:psd}, $\betaopt_\lambda = \betaopt_r$ with fairness constraints $\epsilon_j^2 = (\E[g_j b^T\betaopt_\lambda])^2$.
\end{proposition}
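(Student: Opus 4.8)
The plan is to recognize this as the converse of Proposition \ref{proposition:constrained_to_lagrange} and to prove it by a direct KKT argument, exploiting the fact that for a convex program the KKT conditions are \emph{sufficient} for global optimality. Consequently, and in contrast to the forward direction, no constraint qualification such as Assumption \ref{assumption:licq} is needed here; Assumption \ref{assumption:psd} alone suffices.

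First I would record the relevant structure. Writing $a_j = \E[g_j b]$, each unfairness term is $(\E[g_j b^T\beta])^2 = (a_j^T\beta)^2$, the square of a linear function of $\beta$ and hence convex, so each constraint function $c_j(\beta) = (a_j^T\beta)^2 - \epsilon_j^2$ is convex; and under Assumption \ref{assumption:psd} the risk $\E[(b^T\beta - \Yt)^2]$ is strictly convex because $\E[bb^T]$ is positive definite. Thus the risk-min problem is a convex program with a strictly convex objective, whose minimizer $\betaopt_r$ is unique, as already noted. I would also record the stationarity condition satisfied by the penalized solution: differentiating the penalized objective and setting the gradient to zero gives
\begin{align*}
    \E[bb^T]\betaopt_\lambda - \E[\Yt b] + \sum_{j=1}^t \lambda_j \left(\E[g_j b^T\betaopt_\lambda]\right)\E[g_j b] = 0,
\end{align*}
which is exactly the equation underlying the closed form for $\betaopt_\lambda$.

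Next I would set $\epsilon_j^2 = (\E[g_j b^T\betaopt_\lambda])^2$ as in the statement and propose the candidate primal–dual pair $(\betaopt_\lambda, \mu)$ with multipliers $\mu_j = \lambda_j$, then verify the four KKT conditions for the risk-min problem. Stationarity is precisely the displayed penalized first-order condition with $\mu_j = \lambda_j$, so it holds by the definition of $\betaopt_\lambda$. Primal feasibility holds with equality, since $c_j(\betaopt_\lambda) = (\E[g_j b^T\betaopt_\lambda])^2 - \epsilon_j^2 = 0$ by the choice of $\epsilon_j$. Dual feasibility is $\mu_j = \lambda_j \geq 0$, which is the hypothesis $\lambda \in \Rpos^t$. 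Complementary slackness is immediate because each $c_j(\betaopt_\lambda) = 0$.

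Finally I would invoke the sufficiency of the KKT conditions for convex programs: since objective and constraints are convex, the Lagrangian $L(\cdot,\mu)$ is convex in $\beta$ for $\mu \geq 0$, so stationarity means $\betaopt_\lambda$ minimizes $L(\cdot,\mu)$; combining this with complementary slackness and primal feasibility yields $\E[(b^T\betaopt_\lambda - \Yt)^2] \leq \E[(b^T\beta - \Yt)^2]$ for every feasible $\beta$, i.e.\ $\betaopt_\lambda$ solves the risk-min problem. Uniqueness of the minimizer under Assumption \ref{assumption:psd} then forces $\betaopt_r = \betaopt_\lambda$. There is no serious obstacle; the one point requiring care is making explicit that KKT sufficiency for convex problems needs no constraint qualification, which is what lets this direction go through under Assumption \ref{assumption:psd} alone. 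The hypothesis $\lambda \neq 0$ is not essential to the argument and merely ensures that the resulting constrained problem carries genuine fairness constraints.
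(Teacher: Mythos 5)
Your proof is correct and follows essentially the same route as the paper's: both verify that $\betaopt_\lambda$ together with multipliers $\mu_j = \lambda_j$ satisfies the KKT conditions of the risk-min program with the stated constraints, and then invoke KKT sufficiency for convex programs plus strict convexity of the objective under Assumption \ref{assumption:psd}. Your write-up is simply a more explicit version of the paper's one-line argument, and your observations that no constraint qualification is needed in this direction and that $\lambda \neq 0$ is inessential are both accurate.
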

Proposition \ref{proposition:lagrange_to_constrained} expresses the converse direction of the relationship between $\betaopt_r$ and $\betaopt_\lambda$. The constraints $\epsilon_j$ that define $\betaopt_r$ are arguably easier to reason about than the penalties $\lambda_j$ that define $\betaopt_\lambda$. This proposition therefore facilitates interpretation of penalized estimands in terms of their corresponding constrained forms.

An analogous penalized form can be written that corresponds to $\betaopt_u$, with results that match Propositions \ref{proposition:constrained_to_lagrange} and \ref{proposition:lagrange_to_constrained}. For estimation purposes, however, we focus in this paper on $\betaopt_\lambda$, so we do not develop that form here.
\begin{remark}[Predictor truncation]
Any $\beta \in  \Rb^k$ indexes a predictor $b^T\beta \in \calF_b$. Since $Y^0$ and $Y$ are bounded in $[\ell_u, u_y]$, however, the resulting predictor may be truncated to lie in $[\ell_u, u_y]$, if the bounds are known.
\end{remark}
Our final estimands consist of the risk and (un)fairness properties of any fixed predictor $f_\beta$:
\begin{align*}
    \risk(f_\beta) &= \E[(f_\beta - \Yt)^2] \\
    \UF_j(f_\beta) &= \E[g_j f_\beta], \quad j = 1, \ldots t
\end{align*}
In particular, once we have computed some estimate $\betahat$ of $\betaopt_r$ or $\betaopt_u$, or $\betaopt_\lambda$, it is of interest to estimate the risk and fairness of the resulting predictor $f_\betahat$.

\section{Identification} \label{sec:identification}
When $\Yt = Y^0$, i.e. when the risk and fairness functions are defined with respect to counterfactual rather than observable outcomes, we require assumptions in order to identify these quantities in terms of the observed data. For ease of notation, we first define three nuisance parameters that appear in the estimands and associated estimators.
\begin{align*}
    \pi = \pi(W) &= \Pb(D = 1 \mid W) \\
    \mu_0 = \mu_0(W) &= \E[Y \mid W, D=0] \\
    \nu_0 = \nu_0(W) &= \E[Y^2 \mid W, D=0]
\end{align*}
$\pi(W)$ is the propensity score, while $\mu_0$ and $\nu_0$ are regressions with respect to the observed outcome and the squared observed outcome. In a classification setting with $Y \in \{0, 1\}$, we have $Y^2 = Y$, so $\nu_0 = \mu_0$. We make the following common causal inference assumptions:

\begin{assumption}[Consistency] \label{assumption:consistency}
$Y = DY^1 + (1-D)Y^0$.
\end{assumption}
\begin{assumption}[Positivity] \label{assumption:positivity}
$\exists \delta \in (0, 1) \text{ s.t. } \Pb(\pi(W) \leq 1 - \delta) = 1$.
\end{assumption}
\begin{assumption}[Ignorability] \label{assumption:ignorability}
$Y^0 \ind D \mid W$.
\end{assumption}

Consistency signifies that for each individual, the treatment received matches the outcome that is observed, meaning for example that one person's treatment status does not affect other people's outcomes. Positivity requires that within covariate strata $W$, individuals have some chance of not receiving treatment, meaning that there is no stratum of measure $> 0$ in which all individuals are guaranteed to receive treatment. Finally, ignorability precludes unmeasured confounders that affect both treatment status and the potential outcome. Positivity and ignorability may be satisfied in randomized experiments in which treatment is assigned (conditionally) at random, or in observational studies given an appropriate set of covariates $W$.

Note that these assumptions may not hold exactly in practice. For example, in an observational study, the measured covariates may not be sufficient to fully deconfound $D$ and $Y^0$. The enterprise of \emph{sensitivity analysis} in causal inference parameterizes violations of these assumptions and models their effect on downstream estimation \citep{rosenbaum_sensitivity_1987, liu_introduction_2013, richardson_nonparametric_2014, bonvini_sensitivity_2021}. We leave sensitivity analysis in our setting for future work.

For convenience, we also define the following:
\begin{align*}
    \phi &= \phi(Z) = \frac{1-D}{1-\pi}(Y - \mu_0) + \mu_0 \\
    \phibar &= \phibar(Z) = \frac{1-D}{1-\pi}(Y^2 - \nu_0) + \nu_0
\end{align*}
Under the identifying assumptions, these are the uncentered influence functions for $\E[Y^0]$ and $\E[(Y^0)^2]$, respectively \citep{bickel1993, van_der_laan_unified_2003, tsiatis_semiparametric_2006, kennedy_semiparametric_2016}.

\begin{proposition} \label{proposition:identification}
Under Assumptions \ref{assumption:positivity}--\ref{assumption:ignorability}, the counterfactual risk, FPR-diff, and FNR-diff for any function $f: \calW \mapsto  \Rb$ are identified as follows.
\begin{align}
    \E[(f - Y^0)^2] &= \E[(f - \mu_0)^2] + \var(Y^0) \label{eq:risk1} \\
                    &= \E[f^2 - 2f\mu_0 + \nu_0] \label{eq:risk2} \\
    \E[g^\text{cFPR}f(W)] &= \E\left[\left\{ \frac{(1 - \mu_0)(1 - A)}{\E[(1 - \mu_0)(1 - A)]} - \frac{(1 - \mu_0)A}{\E[(1 - \mu_0)A]}\right\}f(W)\right] \nonumber \\
    \E[g^\text{cFNR}f(W)] &= \E\left[\left\{ \frac{\mu_0A}{\E[\mu_0A]} - \frac{(1 - \mu_0(1 - A)}{\E[\mu_0(1 - A)]} \right\}f(W)\right] \nonumber
\end{align}
These expressions also hold if $\mu_0$ is replaced with $\phi$ and $\nu_0$ is replaced with $\phibar$.
\end{proposition}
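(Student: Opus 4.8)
The plan is to proceed in three stages: first reduce the counterfactual conditional moments to observable regressions, then verify each of the three identities by iterated expectations, and finally show that the uncentered influence functions $\phi$ and $\phibar$ may be substituted for $\mu_0$ and $\nu_0$. Throughout I use repeatedly that $f$ and $A$ are functions of $W$.

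First I would establish the basic building block that, for $r \in \{1, 2\}$,
\begin{align*}
    \E[(Y^0)^r \mid W] = \E[Y^r \mid W, D = 0],
\end{align*}
so that $\E[Y^0 \mid W] = \mu_0$ and $\E[(Y^0)^2 \mid W] = \nu_0$. This follows by conditioning additionally on $D$: ignorability (Assumption \ref{assumption:ignorability}) gives $\E[(Y^0)^r \mid W, D = 0] = \E[(Y^0)^r \mid W]$, while consistency (Assumption \ref{assumption:consistency}) gives $Y = Y^0$ on the event $\{D = 0\}$, so that $\E[(Y^0)^r \mid W, D = 0] = \E[Y^r \mid W, D = 0]$. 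Positivity (Assumption \ref{assumption:positivity}) guarantees $\Pb(D = 0 \mid W) \geq \delta > 0$ almost surely, so these conditional expectations are well defined.

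Second, for the risk I would write $Y^0 = \mu_0 + (Y^0 - \mu_0)$, expand the square, and take iterated expectations. The cross term $\E[(f - \mu_0)(\mu_0 - Y^0)]$ vanishes because $\E[Y^0 \mid W] = \mu_0$ and $f, \mu_0$ are $W$-measurable, leaving the squared-bias term $\E[(f - \mu_0)^2]$ plus an $f$-free irreducible-variance term, which is \eqref{eq:risk1}; expanding the square directly and replacing $\E[f Y^0]$ by $\E[f \mu_0]$ and $\E[(Y^0)^2]$ by $\E[\nu_0]$ instead yields \eqref{eq:risk2}. For FPR-diff and FNR-diff, the counterfactual disparities are $\E[g^\text{FPR} f(W)]$ and $\E[g^\text{FNR} f(W)]$ with $\Yt = Y^0$, by Proposition \ref{proposition:fairness_functions}. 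I would treat each of the two ratio terms separately, applying iterated expectations to its numerator and denominator independently: since $A$ and $f(W)$ are $W$-measurable, each expectation of the form $\E[(1 - Y^0)(1 - A) f(W)]$ or $\E[(1 - Y^0)(1 - A)]$ reduces to the same expression with $1 - Y^0$ replaced by $1 - \mu_0$, delivering exactly $g^\text{cFPR}$ and $g^\text{cFNR}$.

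The main subtlety is the ratio structure of the disparities: one cannot substitute $\mu_0$ for $Y^0$ pointwise inside $g^\text{FPR}$, since the normalizing denominators are themselves expectations over $Y^0$. The clean resolution is that numerator and denominator are each plain expectations, so iterated expectations applies to both and the substitution is exact rather than approximate. The final claim — that $\mu_0$ and $\nu_0$ may be replaced by $\phi$ and $\phibar$ — then rests on the single additional fact $\E[\phi \mid W] = \mu_0$ and $\E[\phibar \mid W] = \nu_0$. I would prove this by conditioning on $(W, D)$: the correction term $\tfrac{1 - D}{1 - \pi}(Y - \mu_0)$ has conditional mean zero because it vanishes when $D = 1$ and, when $D = 0$, satisfies $\E[Y - \mu_0 \mid W, D = 0] = 0$ by definition of $\mu_0$, with positivity ensuring the weight $1/(1 - \pi)$ is finite. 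Because every place $\mu_0$ or $\nu_0$ entered above did so only through an expectation against a $W$-measurable quantity (namely $f$, $(1 - A)f(W)$, or a constant), iterated expectations lets me replace it with $\phi$ or $\phibar$ term by term without changing any value, completing the proof.
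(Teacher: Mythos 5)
Your proof is correct and follows essentially the same route as the paper's: iterated expectation conditioning on $W$, with consistency and ignorability reducing $\E[(Y^0)^r \mid W]$ to $\mu_0$ and $\nu_0$, applied separately to the numerators and (constant) denominators of the disparity terms. You additionally spell out the step $\E[\phi \mid W] = \mu_0$ justifying the influence-function substitution, which the paper's proof leaves implicit.
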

We do not include the rate-diff, since this involves only the sensitive feature and the decision variable, not outcomes, and is therefore trivially identified.

\begin{remark}[Multiple risk expressions] \label{remark:nu0}
Expressions \eqref{eq:risk1} and \eqref{eq:risk2} show that when estimating the risk-min parameter $\betaopt_r$, we can either minimize an estimate of $\E[(f - \mu_0)^2]$ or an estimate of $\E[f^2 - 2f\mu_0]$; the terms $\var(Y^0)$ and $\nu_0$ are constant with respect to $f$ and so drop out of the minimization. The nuisance parameter $\nu_0$ will only be required when we wish to estimate the actual risk of a given predictor, as well as when estimating the unfair-min parameter $\betahat_u$, since that involves a constraint on the actual risk. Note that $\nu_0$ would not be required to solve unfair-min if the accuracy constraint were defined with respect to an existing benchmark model, since the two $\nu_0$ terms in the constraint would cancel out.
\end{remark}

\section{Estimation} \label{sec:estimation}
We require a training set $\datatrain$, which is used to construct estimates $\betahat$ of the optimal weights $\betaopt_r$ or $\betaopt_u$ or $\betaopt_\lambda$, and a test set $\datatest$, which is used to estimate the risk and fairness values of the resulting predictor(s) $f_\betahat$. If the user wishes to train new basis predictors, then an additional dataset $\datalearn$ is also required. This is not needed if the user is only aggregating arbitrary basis functions, like trigonometric functions, or previously existing predictors $S$.

In order to obtain fast rates for our estimators, in the counterfactual setting we split $\datatrain$ and $\datatest$ into separate folds for estimating the nuisance parameters and the target parameters. The sample splitting scheme is shown in Figure \ref{f:sample_splitting}. For simplicity, we illustrate a single split, but in practice cross-fitting can be used within each dataset.

\begin{figure}[ht]
\centering
{\renewcommand{\arraystretch}{1.75}
\begin{tabularx}{0.65\textwidth}{|Y|Y|}
    \multicolumn{2}{c}{\text{\large $\datalearn$}} \\
    \noalign{\vspace{2pt}}
    \hline 
    \multicolumn{2}{|c|}{Learn basis predictors $b_j(W)$ for $j \subseteq \{1, \ldots, k\}$\vphantom{$\datatrain^{\text{target}}$}} \\
    \hline
    \noalign{\vspace{10pt}}
     \multicolumn{2}{c}{\text{\large $\datatrain$}} \\
    \noalign{\vspace{2pt}}
     \hline
    $\datatrain^{\text{nuis}}$ & $\datatrain^{\text{target}}$ \\
    \hline
    \noalign{\vspace{-12pt}}
     \multicolumn{1}{@{}c@{}}{$\underbracket[1pt]{\hspace*{\dimexpr12\tabcolsep+2\arrayrulewidth}\hphantom{012}}_{\substack{\\\mathlarger \pihat, \-\ \mathlarger \muhat_0, \-\ \mathlarger \nuhat_0}}$} &
    \multicolumn{1}{@{}c@{}}{$\underbracket[1pt]{\hspace*{\dimexpr12\tabcolsep+2\arrayrulewidth}\hphantom{012}}_{\substack{\\ \mathlarger \betahat}}$} \\    
    \multicolumn{2}{c}{\text{\large $\datatest$}} \\
    \noalign{\vspace{1.5pt}}
     \hline
    $\datatest^{\text{nuis}}$ & $\datatest^{\text{target}}$ \\
    \hline
    \noalign{\vspace{-12pt}}
    \multicolumn{1}{@{}c@{}}{$\underbracket[1pt]{\hspace*{\dimexpr12\tabcolsep+2\arrayrulewidth}\hphantom{012}}_{\substack{\\\mathlarger \pihat, \-\ \mathlarger \muhat_0, \-\ \mathlarger \nuhat_0}}$} &  
    \multicolumn{1}{@{}c@{}}{$\underbracket[1pt]{\hspace*{\dimexpr12\tabcolsep+2\arrayrulewidth}\hphantom{012}}_{\substack{\\ \text{\small{Risk \& fairness of }} \mathlarger f_\betahat}}$}
    \end{tabularx}}
    \caption{Sample splitting scheme. $\datalearn$ is not needed if all the basis functions already exist. $\datatrain^{\text{nuis}}$ and $\datatest^{\text{nuis}}$ are used to estimate nuisance parameters, while $\datatrain^{\text{target}}$ and $\datatest^{\text{target}}$ are used to estimate target parameters. $\betahat$ represents a weight vector that is an estimate of $\betaopt_r$ or $\betaopt_u$ or $\betaopt_\lambda$, while $f_\betahat$ represents the predictor indexed by $\betahat$. Splitting $\datatrain$ and $\datatest$ is only required in the counterfactual setting, since there are no nuisance parameters in the observable setting. In practice, cross-fitting may be used within both $\datatrain$ and $\datatest$.}
    \label{f:sample_splitting}
\end{figure}

We solve empirical versions of the identified minimization problems that define the estimands. Let $\phihat, \phibarhat$ denote estimates of $\phi$ and $\phibar$ constructed from estimates $\pihat$, $\muhat_0$, $\nuhat_0$.

For any fixed function $f: \calZ \mapsto  \Rb$, let $\Pn(f(Z)) = n^{-1}\sum_{i=1}^n f(Z)$ and $\Pb(f) = \int f d\Pb(Z)$ denote the sample and population expectations of $f$, so that for example $\Pb(\phi) = \E[\phi]$ while $\Pb(\phihat) = \E[\phihat \mid \datatrain]$ or $\E[\phihat \mid \datatest]$ is the expected value of $\phihat(Z)$ once the relevant nuisance function estimate $\phihat$ has been constructed. That is, $\Pb(\phihat)$ is a random variable that depends on the nuisance data, while $\Pn(\phihat)$ is a random variable that depends on both the nuisance and target data. In order to avoid excess notation, for quantities like $\Pb(\phihat)$ and $\Pn(\phihat)$, we will rely on context to make it clear whether $\phihat$ depends on $\datatrain$ or $\datatest$.

For notational convenience, let $\ghat_j$ with no arguments denote $g_j(W, Y)$ in the observable setting and $g_j(W, \phihat)$ in the counterfactual setting. That is $\ghat_j = g_j$ in the observable setting, since there is no nuisance quantity to estimate, but $\ghat_j \neq g_j$ in the counterfactual setting. The occasional use of $\ghat_j$ for both settings allows us to concisely state certain conditions and results.

\begin{assumption}[Bounded propensity estimator] \label{assumption:bounded_propensity_estimator}
$\exists \gamma \in (0, 1) \text{ s.t. }$ $\Pb(\pihat(A, X, \Rin) \leq 1 - \gamma) = 1$.
\end{assumption}
Assumption \ref{assumption:bounded_propensity_estimator} is the empirical analogue of the positivity assumption (\ref{assumption:positivity}). It can be trivially satisfied by truncating $\pihat$ at $1 - \delta$, the positivity threshold in Assumption \ref{assumption:positivity}.

\begin{assumption}[Consistent nuisance estimators] \label{assumption:consistent_nuisance_estimators}
$\Vert \pihat - \pi \Vert = o_\Pb(1)$ and $\Vert \muhat_0 - \mu_0 \Vert = o_\Pb(1)$ and $\Vert \nuhat_0 - \nu_0 \Vert = o_\Pb(1)$.
\end{assumption}
This assumption is reasonable if nonparametric methods are used to construct the nuisance parameter estimates. With slight procedural modifications, this assumption can be relaxed to require consistency in the influence function estimators $\phihat$ and $\phibarhat$ without necessarily requiring consistency in each of the nuisance parameter estimators. For simplicity, we do not address this.

\subsection{Constrained FADE estimators}
The risk-min and unfair-min estimators $\betahat_r$ and $\betahat_u$ are defined for the observable and counterfactual settings in Tables \ref{t:constrained_risk_min_estimator} and \ref{t:constrained_unfair_min_estimator}.
{\renewcommand{\arraystretch}{1.8}
\begin{table}[ht]
    \centering
    \begin{tabularx}{\linewidth}{|Y|Y|}
        \hline
        Observable ($\Yt = Y$) & Counterfactual ($\Yt = Y^0$) \\
         \hline
        {\begin{align*}
         \betahat_r = & \argmin_{\beta\in \Rb^k}\Pn[(b^T\beta - Y)^2] \\
        \text{s.t. } & \left(\Pn[g_j(W, Y) b^T\beta]\right)^2 \leq \epsilon_j^2, \-\ j = 1, \ldots t
        \end{align*}} &
         {\begin{align*}
        \betahat_r = & \argmin_{\beta\in \Rb^k}\Pn[(b^T\beta - \phihat)^2] \\
        \text{s.t. } & \left(\Pn[g_j(W, \phihat) b^T\beta]\right)^2 \leq \epsilon_j^2, \-\ j = 1, \ldots t
         \end{align*}} \\
        \hline
    \end{tabularx}
    \caption{Definition of the unfair-min estimator $\betahat_r$ in the observable and counterfactual settings.}
    \label{t:constrained_risk_min_estimator}
\end{table}}

{\renewcommand{\arraystretch}{1.8}
\begin{table}[ht]
    \centering
    \begin{tabularx}{\linewidth}{|Y|Y|}
        \hline
         Observable ($\Yt = Y$) & Counterfactual ($\Yt = Y^0$) \\
         \hline
         {\begin{align*}
        \betahat_u = & \argmin_{\beta\in \Rb^k} \sum_{j=1}^t \alpha_j\left(\Pn[g_j(W, Y) b^T\beta]\right)^2 \\
        \text{s.t. } & \Pn\left[(b^T\beta - Y)^2\right] \leq \epsilon^2
        \end{align*}} & 
        {\begin{align*}
        \betahat_u = & \argmin_{\beta\in \Rb^k} \sum_{j=1}^t \alpha_j\left(\Pn[g_j(W, \phihat) b^T\beta]\right)^2 \\
        \text{s.t. } & \Pn\left[(b^T\beta)^2 - 2(b^T\beta)\phihat + \phibarhat\right] \leq \epsilon^2
        \end{align*}} \\
        \hline
    \end{tabularx}
    \caption{Definition of the risk-min estimator $\betahat_u$ in the observable and counterfactual settings.}
    \label{t:constrained_unfair_min_estimator}
\end{table}}

As with the corresponding estimands $\betaopt_r$ and $\betaopt_u$, the optimization problem that defines $\betahat_r$ is always feasible, while the problem that defines $\betahat_u$ may not be, if there is no predictor in $\calF_b$ with estimated risk less than or equal to $\epsilon$. If $\Pn[bb^T]$ is positive definite, then $\betahat_r$ is unique, since the objective is strictly convex. Under Assumption \ref{assumption:psd}, this will hold with probability approaching 1 in $n$, or with probability 1 if, say, at least one of the covariates in $W$ is continuously distributed. If the problem that defines $\betahat_u$ is feasible and $\sum_{j=1}^t \alpha_j\Pn[\ghat_j b]\Pn[\ghat_j b]^T$ is positive definite, then $\betahat_u$ is unique, since the objective is strictly convex.

We next consider the excess risk and the excess unfairness for the constrained predictors. In the counterfactual setting, we require assumptions on the rate at which the nuisance parameters are estimated.
\begin{assumption}[Nuisance parameter rates] \label{assumption:nuisance_rates}
    \begin{align*}
        \Vert \pihat - \pi \Vert \Vert \muhat_0 - \mu_0 \Vert &= o_\Pb(1/\sqrt{n}) \\
        \Vert \pihat - \pi \Vert \Vert \nuhat_0 - \nu_0 \Vert &= o_\Pb(1/\sqrt{n})
    \end{align*}
\end{assumption}
Assumption \ref{assumption:nuisance_rates} says that the product of errors in the nuisance parameter estimators goes to 0 faster than $\sqrt{n}$. That can be satisfied for example if the nuisance parameters are estimated at faster than $n^{1/4}$ rates, which can be achieved in nonparametric settings under appropriate smoothness or sparsity conditions \citep{gyorfi_distribution-free_2002, raskutti2011minimax}. 
\begin{definition}The \emph{excess risk} is defined for $\betahat_r$ and $\betahat_u$ as:
\begin{align*}
    & \Pb[(b^T\betahat_r - \Yt)^2] - \Pb[(b^T\betaopt_r - \Yt)^2] \tag{risk-min} \\
    & \Pb[(b^T\betahat_u - \Yt)^2] - \epsilon^2                \tag{unfair-min}
\end{align*}
\end{definition} 
\begin{theorem}[Excess risk in the constrained setting]
\label{LS_thm:excess_risk_constrained} Under Assumptions \ref{assumption:psd}--\ref{assumption:bounded_basis} for the observable setting, and Assumptions \ref{assumption:psd}--\ref{assumption:bounded_basis} and \ref{assumption:consistency}--\ref{assumption:nuisance_rates} for the counterfactual setting:
\begin{align*}
    \Pb[(b^T\betahat_r - \Yt)^2] - \Pb[(b^T\betaopt_r - \Yt)^2] &= O_\Pb(1/\sqrt{n}) \tag{risk-min} \\
    \Pb[(b^T\betahat_u - \Yt)^2] - \epsilon^2 &= O_\Pb(1/\sqrt{n})\tag{unfair-min}
\end{align*}
\end{theorem}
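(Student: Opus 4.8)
The plan is to treat both estimators as solutions to perturbed quadratic programs and to show that the excess risk is governed by the $1/\sqrt n$ perturbation of the problem data. Write the population risk as $R(\beta) = \beta^T M \beta - 2\beta^T v + \E[\Yt^2]$ with $M = \E[bb^T]$ and $v = \E[\Yt b]$, its empirical analogue $\widehat R(\beta) = \beta^T \widehat M \beta - 2\beta^T \hat v + (\text{const})$ with $\widehat M = \Pn[bb^T]$ and $\hat v = \Pn[Yb]$ (observable) or $\Pn[\phihat b]$ (counterfactual), and denote the constraint normals $c_j = \E[g_j b]$, $\hat c_j = \Pn[\hat g_j b]$. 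First I would establish $\sqrt n$-consistency of every ingredient: $\lVert \widehat M - M\rVert = O_\Pb(1/\sqrt n)$, $\lVert \hat v - v\rVert = O_\Pb(1/\sqrt n)$, and $\lVert \hat c_j - c_j\rVert = O_\Pb(1/\sqrt n)$. In the observable setting these are immediate from the CLT, using boundedness of $b$ (Assumption \ref{assumption:bounded_basis}), of $g_j$, and of $Y$.

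In the counterfactual setting, where $\hat v$ and $\hat c_j$ are built from $\phihat$, I would use the standard doubly-robust decomposition of $\Pn[\phihat b_m] - \E[Y^0 b_m]$ into (i) a mean-zero empirical process $(\Pn-\Pb)[\phi b_m] = O_\Pb(1/\sqrt n)$, using $\E[\phi\mid W]=\mu_0=\E[Y^0\mid W]$ so that $\E[\phi b_m]=\E[Y^0 b_m]$; (ii) a second-order empirical-process term $(\Pn-\Pb)[(\phihat-\phi)b_m]$ that sample splitting (Figure \ref{f:sample_splitting}) reduces to $O_\Pb(\lVert\phihat-\phi\rVert/\sqrt n) = o_\Pb(1/\sqrt n)$ via Assumption \ref{assumption:consistent_nuisance_estimators}; and (iii) a remainder $\Pb[(\phihat-\phi)b_m]$ which, by the mixed-bias structure of $\phi$ together with bounded $\pihat$ (Assumption \ref{assumption:bounded_propensity_estimator}), is bounded by a constant times $\lVert\pihat-\pi\rVert\,\lVert\muhat_0-\mu_0\rVert = o_\Pb(1/\sqrt n)$ (Assumption \ref{assumption:nuisance_rates}). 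An identical argument with $\phibar$, $\nu_0$, and the second line of Assumption \ref{assumption:nuisance_rates} handles the estimated risk constant in the unfair-min constraint. This is the step that delivers the fast rate and is where sample splitting and the nuisance-rate product condition do their work.

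Second, using the coercivity granted by Assumption \ref{assumption:psd} (eigenvalues of $M$, and hence of $\widehat M$ with probability approaching one, bounded away from zero) together with bounded data, I would confine $\betaopt_r, \betahat_r$ (and the unfair-min analogues) to a fixed ball, so that the quadratic forms are Lipschitz with controlled constants and $\sup_{\lVert\beta\rVert\le B}\lvert\widehat R(\beta)-R(\beta)\rvert = O_\Pb(1/\sqrt n)$. For risk-min I would then use $R(\betahat_r)-R(\betaopt_r) = [R(\betahat_r)-\widehat R(\betahat_r)] + [\widehat R(\betahat_r)-\widehat R(\betaopt_r)] + [\widehat R(\betaopt_r)-R(\betaopt_r)]$, where the unknown constants cancel between the first and third brackets, leaving both $O_\Pb(1/\sqrt n)$ by the sup bound. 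The middle bracket is the crux: $\betahat_r$ minimizes $\widehat R$ over the empirical feasible polyhedron $\widehat C = \{\beta: \lvert \hat c_j^T\beta\rvert\le\epsilon_j\}$, whereas $\betaopt_r$ lies only in the population polyhedron $C$.

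The main obstacle is therefore feasibility restoration across two estimated constraint sets. My plan is to control it via a polyhedral error bound (Hoffman's lemma): since $\betaopt_r\in C$, its violation of the empirical constraints is $\max_j(\lvert\hat c_j^T\betaopt_r\rvert - \epsilon_j)_+ \le \max_j \lVert\hat c_j - c_j\rVert\,\lVert\betaopt_r\rVert = O_\Pb(1/\sqrt n)$, and because the normals $\hat c_j$ converge to the fixed configuration $c_j$ the Hoffman constant is $O_\Pb(1)$, yielding $\tilde\beta\in\widehat C$ with $\lVert\tilde\beta-\betaopt_r\rVert = O_\Pb(1/\sqrt n)$; then $\widehat R(\betahat_r)\le\widehat R(\tilde\beta) = \widehat R(\betaopt_r) + O_\Pb(1/\sqrt n)$. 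The symmetric construction (projecting $\betahat_r$ onto $C$) gives the reverse inequality, so the middle bracket is $O_\Pb(1/\sqrt n)$ and the risk-min claim follows. This argument needs no constraint qualification beyond nonemptiness, consistent with the absence of Assumption \ref{assumption:licq} from the hypotheses, and it accommodates $\epsilon_j=0$, which the naive ``shrink toward the origin'' restoration does not. For unfair-min the upper bound is immediate, $R(\betahat_u)\le\widehat R(\betahat_u) + O_\Pb(1/\sqrt n)\le \epsilon^2 + O_\Pb(1/\sqrt n)$, from empirical feasibility and the sup bound (in the counterfactual case also invoking $\sqrt n$-consistency of the estimated risk constraint from step one); the matching lower bound follows because the risk constraint binds at the solution (the case of interest, e.g.\ when $\epsilon$ is a benchmark-risk budget), so $\widehat R(\betahat_u) = \epsilon^2$ and hence $R(\betahat_u)-\epsilon^2 = O_\Pb(1/\sqrt n)$ two-sided.
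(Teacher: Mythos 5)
Your proposal is correct in outline and shares the paper's skeleton -- the same three-way decomposition of the excess risk (population-minus-empirical at $\betahat$, plus empirical-optimal-value-minus-population-optimal-value), the same doubly robust treatment of $\phihat$ and $\phibarhat$ under sample splitting to absorb the nuisance error into $o_\Pb(1/\sqrt{n})$, and the same one-sided argument for unfair-min via empirical feasibility. Where you genuinely diverge is the crux step comparing the two optimal values. The paper invokes Shapiro's (1991) perturbation theory for convex programs: it shows the vector of empirical objective and constraint functions converges at rate $\sqrt{n}$ to a Gaussian process over a compact parameter set (a Donsker argument), and then reads off $\sqrt{n}$-convergence of the optimal value, with feasibility-set perturbation handled inside Shapiro's theorem (at the price of Slater's condition). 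You instead do the feasibility restoration by hand: bound the violation of the empirical constraints by $\betaopt_r$ via $\Vert \hat c_j - c_j\Vert = O_\Pb(1/\sqrt{n})$, then use Hoffman's error bound on the polyhedron $\{\beta : \lvert \hat c_j^T\beta\rvert \le \epsilon_j\}$ to produce a feasible $\tilde\beta$ within $O_\Pb(1/\sqrt{n})$ of $\betaopt_r$, and symmetrically in the other direction. Your route is more elementary and self-contained, exploits the parametric (finite-dimensional) structure directly rather than passing through a functional CLT, and -- as you note -- sidesteps Slater's condition, which is a real advantage when $\epsilon_j = 0$.

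One step deserves more care: the claim that the Hoffman constant of the \emph{estimated} polyhedron is $O_\Pb(1)$ merely because $\hat c_j \to c_j$. The Hoffman constant is not continuous in the constraint matrix -- a subcollection of normals that is linearly dependent in the limit can have vanishing minimal singular value along the sequence, so the worst-case constant can blow up under arbitrarily small perturbations. You should either (i) impose a general-position/linear-independence condition on the $\{c_j\}$ (in the spirit of Assumption~\ref{assumption:licq}, which the paper keeps out of this theorem's hypotheses), or (ii) replace the worst-case Hoffman bound with a pointwise construction, e.g.\ shrink $\betaopt_r$ toward a strictly feasible Slater point of the empirical polyhedron by a factor $1 - O_\Pb(1/\sqrt{n})$, which restores feasibility at the same rate whenever such a point exists -- essentially re-importing the Slater condition the paper already assumes. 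With that repair the argument goes through and matches the paper's conclusion; without it, the middle bracket is not fully justified. The paper's own proof carries the mirror-image burden (verifying Slater for the squared constraints, which fails at $\epsilon_j=0$ unless one passes to the affine form), so neither route is free.
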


\begin{definition}
The \emph{excess unfairness} for $\betahat_r$ and $\betahat_u$ is defined as:
\begin{align*}
    & \max_{j = 1, \ldots, t}\left\{\left((\Pb[g_j b^T\betahat_r])^2  - \epsilon_j^2\right)_+\right\} \tag{risk-min} \\
    & \sum_{j=1}^t \alpha_j \left\{(\Pb[g_j b^T\betahat_u])^2 - (\Pb[g_j b^T\betaopt_u])^2\right\} \tag{unfair-min}
\end{align*}
where $(\cdot)_+ = \max\{\cdot, 0\}$ denotes the positive part function. 
\end{definition}

\begin{theorem}[Excess unfairness in the constrained setting] \label{LS_thm:excess_unfairness_constrained}
Under Assumptions \ref{assumption:psd}--\ref{assumption:bounded_basis} for the observable setting, and Assumptions \ref{assumption:psd}--\ref{assumption:bounded_basis} and \ref{assumption:consistency}--\ref{assumption:nuisance_rates} for the counterfactual setting:
\begin{align*}
     \max_{j = 1, \ldots, t}\left\{\left((\Pb[g_j b^T\betahat_r])^2  - \epsilon_j^2\right)_+\right\} &= O_\Pb(1/\sqrt{n}) \tag{risk-min} \\
     \sum_{j=1}^t \alpha_j \left\{(\Pb[g_j b^T\betahat_u])^2 - (\Pb[g_j b^T\betaopt_u])^2\right\} &= O_\Pb(1/\sqrt{n}) \tag{unfair-min}
\end{align*}
\end{theorem}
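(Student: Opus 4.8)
The plan is to reduce both claims to a single uniform estimate for the estimated fairness moments,
\[
\Vert \Pn[\ghat_j b] - \Pb[g_j b]\Vert = O_\Pb(1/\sqrt{n}), \qquad j = 1, \ldots, t,
\]
and then to propagate it to the population unfairness evaluated at $\betahat_r$ and $\betahat_u$. To obtain this estimate I would condition on the nuisance fold (legitimate under the sample-splitting scheme of Figure \ref{f:sample_splitting}) and decompose $\Pn[\ghat_j b] - \Pb[g_j b] = (\Pn - \Pb)[\ghat_j b] + (\Pb[\ghat_j b] - \Pb[g_j b])$. The first term is a centered average of bounded vectors --- bounded because each $g_j$ is a bounded fairness function and $\sup_w \Vert b(w)\Vert < \infty$ by Assumption \ref{assumption:bounded_basis} --- hence $O_\Pb(1/\sqrt{n})$ by the CLT. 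In the observable setting $\ghat_j = g_j$ and the second term vanishes. In the counterfactual setting $\ghat_j = g_j(W,\phihat)$, and the second term is a bias that, because $\phi$ is the doubly robust (uncentered) influence function for $\E[Y^0]$, reduces --- after linearizing the estimated normalizing expectations --- to a product of nuisance errors of order $\Vert\pihat - \pi\Vert\,\Vert\muhat_0 - \mu_0\Vert$, which is $o_\Pb(1/\sqrt{n})$ by Assumption \ref{assumption:nuisance_rates}. This is the same second-order expansion that underlies the risk analysis of Theorem \ref{LS_thm:excess_risk_constrained}, which I would reuse. I would also record $\Vert\betahat_r\Vert = O_\Pb(1)$ and $\Vert\betahat_u\Vert = O_\Pb(1)$; these follow from the defining feasibility/optimality of each estimator together with Assumption \ref{assumption:psd}, under which the smallest eigenvalue of $\Pn[bb^T]$ is bounded away from $0$ with probability approaching $1$.

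For the risk-min statement the argument is then immediate. Set $u_j = \Pb[g_j b]^T\betahat_r$ and $\hat u_j = \Pn[\ghat_j b]^T\betahat_r$. The empirical constraint gives $|\hat u_j| \le \epsilon_j$, while Cauchy--Schwarz with the key estimate and $\Vert\betahat_r\Vert = O_\Pb(1)$ gives $|u_j - \hat u_j| \le \Vert \Pn[\ghat_j b] - \Pb[g_j b]\Vert\,\Vert\betahat_r\Vert = O_\Pb(1/\sqrt{n})$. Hence $|u_j| \le \epsilon_j + O_\Pb(1/\sqrt{n})$, so $u_j^2 - \epsilon_j^2 \le 2\epsilon_j O_\Pb(1/\sqrt{n}) + O_\Pb(1/n) = O_\Pb(1/\sqrt{n})$; taking the positive part and the maximum over the finitely many $j$ preserves the rate.

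The unfair-min statement is the main obstacle, because $\betahat_u$ is feasible for the \emph{empirical} risk constraint whereas $\betaopt_u$ is feasible only for the \emph{population} constraint, and these feasible sets differ. Writing $U(\beta) = \sum_j \alpha_j(\Pb[g_j b]^T\beta)^2$, $R(\beta) = \Pb[(b^T\beta - \Yt)^2]$, and their empirical analogues $\hat U, \hat R$, I would first upgrade the key estimate to uniform convergence $\sup_{\Vert\beta\Vert \le C}|\hat U(\beta) - U(\beta)| = O_\Pb(1/\sqrt{n})$ and $\sup_{\Vert\beta\Vert \le C}|\hat R(\beta) - R(\beta)| = O_\Pb(1/\sqrt{n})$ (the risk case using Proposition \ref{proposition:identification} and the same doubly robust bias expansion), via the factorization $\hat U(\beta) - U(\beta) = \sum_j \alpha_j(\Pn[\ghat_j b] + \Pb[g_j b])^T\beta \cdot (\Pn[\ghat_j b] - \Pb[g_j b])^T\beta$. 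To bridge the feasibility gap I would invoke strict feasibility (a Slater point $\beta_0$ with $R(\beta_0) < \epsilon^2$, which holds whenever the minimal risk over $\calF_b$ is strictly below $\epsilon^2$; I would state this explicitly) together with the convexity of $R$ and $U$. Along the segment $\beta_s = (1-s)\betaopt_u + s\beta_0$, convexity gives $R(\beta_s) \le \epsilon^2 - s(\epsilon^2 - R(\beta_0))$, so for $s = s_n \asymp 1/\sqrt{n}$ with a large enough constant the uniform bound yields $\hat R(\beta_s) \le \epsilon^2$ with high probability, i.e.\ $\beta_s$ is empirically feasible.

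Optimality of $\betahat_u$ then gives $\hat U(\betahat_u) \le \hat U(\beta_s)$, and chaining through uniform convergence and the convexity bound $U(\beta_s) \le (1-s_n)U(\betaopt_u) + s_n U(\beta_0)$,
\begin{align*}
U(\betahat_u) &= \hat U(\betahat_u) + O_\Pb(1/\sqrt{n}) \\
&\le \hat U(\beta_s) + O_\Pb(1/\sqrt{n}) \\
&= U(\beta_s) + O_\Pb(1/\sqrt{n}) \\
&\le U(\betaopt_u) + s_n\bigl(U(\beta_0) - U(\betaopt_u)\bigr) + O_\Pb(1/\sqrt{n}).
\end{align*}
Since $\betaopt_u$ minimizes $U$ over the population-feasible set (to which $\beta_0$ belongs), $U(\beta_0) - U(\betaopt_u) \ge 0$ is a fixed finite constant, and with $s_n \asymp 1/\sqrt{n}$ the entire right-hand side is $U(\betaopt_u) + O_\Pb(1/\sqrt{n})$, giving $U(\betahat_u) - U(\betaopt_u) = O_\Pb(1/\sqrt{n})$ as required. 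The delicate points I expect to verify carefully are the legitimacy of the Slater condition and the precise choice of the constant in $s_n$, so that the empirical-feasibility event carries enough probability while $s_n(U(\beta_0) - U(\betaopt_u))$ stays within $O_\Pb(1/\sqrt{n})$ --- a balance permitted because, under the $O_\Pb$ convention, the constant may be taken to depend on the target confidence level.
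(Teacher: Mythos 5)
Your risk-min argument is essentially the paper's: both reduce the claim to $(\Pb[g_j b^T\betahat_r])^2 - (\Pn[\ghat_j b^T\betahat_r])^2 = O_\Pb(1/\sqrt{n})$ and then invoke empirical feasibility, and your counterfactual bias term is the same linearization of the estimated normalizing expectations carried out in the paper's Section C.2. Where you genuinely diverge is the unfair-min half. The paper adds and subtracts the empirical objective at $\betahat_u$ and disposes of the resulting optimal-value difference $\widehat{U}(\betahat_u) - U(\betaopt_u)$ by citing Shapiro's theorem (Lemma \ref{LS_lemma:shapiro}), which takes Slater's condition as a hypothesis; you instead re-derive that convergence from first principles by moving $\betaopt_u$ a distance $s_n \asymp 1/\sqrt{n}$ toward a strictly feasible point to manufacture an empirically feasible competitor, then chaining optimality, uniform convergence, and convexity. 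This is more elementary and self-contained, and it makes explicit both the role of Slater's condition and the empirical-versus-population feasibility mismatch that Shapiro's theorem handles under the hood. Two further points in your favor: your uniform convergence over a norm ball follows from the finite-dimensional bound $\Vert \Pn[\ghat_j b] - \Pb[g_j b]\Vert = O_\Pb(1/\sqrt{n})$ and Cauchy--Schwarz, with no appeal to Donsker classes (everything is quadratic in $\beta$ with finite-dimensional random coefficients), and you derive $\Vert\betahat_u\Vert = O_\Pb(1)$ from the constraint and Assumption \ref{assumption:psd} rather than positing an a priori compact parameter set as the paper does.

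One step is missing. Your chain establishes only the one-sided bound $U(\betahat_u) - U(\betaopt_u) \le O_\Pb(1/\sqrt{n})$, whereas the theorem asserts a two-sided $O_\Pb(1/\sqrt{n})$ and the unfair-min excess unfairness is defined without a positive part; the paper's Shapiro-based route delivers both directions at once. The reverse inequality does not follow from optimality of $\betahat_u$ alone, since $\betahat_u$ need not be population-feasible and could in principle undershoot $U(\betaopt_u)$. Your own machinery closes the gap: $\widehat{R}(\betahat_u) \le \epsilon^2$ plus uniform convergence gives $R(\betahat_u) \le \epsilon^2 + O_\Pb(1/\sqrt{n})$, so perturbing $\betahat_u$ by $s_n \asymp 1/\sqrt{n}$ toward the same Slater point yields a population-feasible competitor whose unfairness exceeds $U(\betahat_u)$ by at most $O_\Pb(1/\sqrt{n})$, whence $U(\betaopt_u) \le U(\betahat_u) + O_\Pb(1/\sqrt{n})$. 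Add that symmetric perturbation and the argument is complete.
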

These results show that if a user has specific fairness or risk constraints in mind, in the observable setting, they can generate a predictor in a rich linear space that is asymptotically guaranteed to meet these constraints, while minimizing the corresponding risk or unfairness. In the counterfactual setting, they can do the same thing with the same fast rate guarantees, as long as the nuisance parameters are estimated at fast enough rates.

Of course, any particular estimates $\betahat_r, \betahat_u$ may violate their target risk and fairness constraints by arbitrary amounts, since the constraints used to compute them are themselves estimated. Suppose that $\betahat_r$ was evaluated on the test set, and one of its estimated unfairness values was found to exceed the constraint $\epsilon_j$ by an unacceptable amount. To remedy this, the user could lower the value of $\epsilon_j$ and compute a new $\betahat_r$ under this more stringent constraint. They could repeat this process until they found a $\betahat_r$ with acceptable estimated fairness. Since $\betahat_r$ is the solution to a quadratic program, however, this is computationally costly, and there is no guarantee that additional searching will yield improvements. A predictor that is more fair with respect to one fairness constraint may be \emph{less} fair with respect to other constraints, or may incur unacceptable additional risk.

Ideally, the user might wish to treat the fairness constraints as tuning parameters, selecting a large set of constraint vectors $(\epsilon_1, \ldots, \epsilon_t) \in \Rpos^t$, computing $\betahat_r$ for each vector, and comparing the risk and fairness properties of all the resulting predictors. In the next section, we use the closed-form penalized estimators to accomplish something equivalent to this, with trivial additional computational cost.

\subsection{Penalized FADE estimators}
In the observable and counterfactual settings, the estimator $\betahat_\lambda$ takes the following equivalent forms, which mirror the two expressions given for $\betaopt_\lambda$: 
\begin{align*}
    \betahat_\lambda &= \argmin_{\beta\in \Rb^k} \Pn[(b^T\beta - Y)^2] + \sum_{j=1}^t \lambda_j \left(\Pn[g_j b^T\beta]\right)^2 \tag{Observable} \\
    &= \left(\Pn(bb^T) + \sum_{j=1}^t \lambda_j\Pn(g_j b)\Pn(g_j b)^T\right)^{-1}\Pn(bY) \\
    \betahat_\lambda &= \argmin_{\beta\in \Rb^k} \Pn[(b^T\beta - \phihat)^2] + \sum_{j=1}^t \lambda_j \left(\Pn[\ghat_j b^T\beta]\right)^2 \tag{Counterfactual} \\
    &= \left(\Pn(bb^T) + \sum_{j=1}^t \lambda_j\Pn(\ghat_j b)\Pn(\ghat_j b)^T\right)^{-1}\Pn(b\phihat)
\end{align*}
assuming that the relevant matrix inverse exists. A sufficient condition for it to exist is that $\Pn[bb^T]$ is positive definite, which, as discussed above, will happen with probability 1 or approaching 1 under Assumption \ref{assumption:psd}.

The procedure we propose is given in Figure \ref{f:penalized_procedure}. The user first chooses a large set of vectors $\Lambda_n \subset \Rpos^t$, which we assume may depend on sample size. They then compute the solution set $\widehat{\calB}_n = \{\betahat_\lambda: \lambda \in \Lambda_n\}$, estimate the risk and fairness properties of each $f_\beta: \beta \in \widehat{\calB}_n$, and select a predictor with a favorable performance profile.
\begin{figure}[ht]
\fbox{\begin{minipage}{\hsize}
    \begin{enumerate}
    \item Pick a large set of vectors $\Lambda_n \subset \Rpos^t$.
    \item Compute the solution set $\widehat{\calB}_n = \{\betahat_\lambda: \lambda \in \Lambda_n\}$.
    \item Compute the estimated risk and fairness properties of each $f_\beta: \beta \in \widehat{\calB}_n$.
    \item Select a predictor $f_\beta$ with favorable risk and fairness properties.
    \end{enumerate}
    \end{minipage}}
    \caption{Penalized FADE estimation procedure.}
    \label{f:penalized_procedure}
\end{figure}

Propositions \ref{proposition:constrained_to_lagrange} and \ref{proposition:lagrange_to_constrained} established a correspondence between the constrained and penalized estimands, so each $\betahat_\lambda$ may be regarded as an estimate either of the penalized-min estimand $\betaopt_\lambda$ or of some risk-min estimand $\betaopt_r$. The value of the penalized perspective is that Step 2 in this procedure can be carried out extremely efficiently. Since each matrix $\Pn(\ghat_j b)\Pn(\ghat_j b)^T$ has rank 1, the overall matrix inverse can be computed by computing $\Pn(bb^T)^{-1}$ and then applying a series of simple algebraic operations, per the Sherman-Morrison update formula. This is expressed in the following proposition.
\begin{proposition} \label{proposition:fast_computation}
Let
\begin{align*}
    & \lambdabar_j = (\lambda_1, \ldots, \lambda_j), \text{ so that } \lambdabar_t = \lambda \\
    & m_j = \Pn(\ghat_j b) \\
    & \boldQhat_0 = \Pn(bb^T)^{-1} \\
    & \boldQhat_1(\lambda_1) = \boldQhat_0 - \frac{\lambda_1\boldQhat_0 m_1 m_1^T\boldQhat_0}{1 + \lambda_1m_1^T\boldQhat_0m_1} \\ 
    & \boldQhat_j(\lambdabar_j) = \boldQhat_{j-1}(\lambdabar_{j-1}) - \frac{\lambda_j\boldQhat_{j-1}(\lambdabar_{j-1})m_jm_j^T\boldQhat_{j-1}(\lambdabar_{j-1})}{1 + \lambda_jm_j^T\boldQhat_{j-1}(\lambdabar_{j-1})m_j}, \-\ \text{ for } j = 2, \ldots, t
\end{align*}
Then
\begin{align*}
    \betahat_\lambda = \begin{cases} \boldQhat_t(\lambda_t)^{-1}\Pn(b\phihat) & \text{(Counterfactual)} \\
                                     \boldQhat_t(\lambda_t)^{-1}\Pn(bY)      & \text{(Observable)}
                        \end{cases}
\end{align*}
\end{proposition}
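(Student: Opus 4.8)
The plan is to recognize the matrix to be inverted as a fixed positive-definite matrix perturbed by a sum of $t$ rank-one terms, and then to invert it by stripping off those terms one at a time via the Sherman--Morrison identity. Write $M = \Pn(bb^T) + \sum_{j=1}^t \lambda_j m_j m_j^T$ with $m_j = \Pn(\ghat_j b)$, so that each summand $\lambda_j m_j m_j^T$ is a rank-one update and the penalized estimator is $\betahat_\lambda = M^{-1}\Pn(b\phihat)$ in the counterfactual setting and $M^{-1}\Pn(bY)$ in the observable setting. It therefore suffices to prove that the recursively defined $\boldQhat_t(\lambdabar_t)$ equals $M^{-1}$; the stated formula for $\betahat_\lambda$ then follows immediately by right-multiplying by $\Pn(b\phihat)$ or $\Pn(bY)$.

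First I would set up an induction on the number of rank-one terms included. Define the partial sums $M_0 = \Pn(bb^T)$ and $M_j = M_{j-1} + \lambda_j m_j m_j^T$ for $j = 1, \ldots, t$, so that $M_t = M$, and take as inductive hypothesis that $\boldQhat_j(\lambdabar_j) = M_j^{-1}$. The base case is immediate, since $\boldQhat_0 = \Pn(bb^T)^{-1} = M_0^{-1}$. For the inductive step I would apply Sherman--Morrison to $A = M_{j-1}$ with the rank-one update $\lambda_j m_j m_j^T$ (taking $u = \lambda_j m_j$, $v = m_j$), giving
\begin{align*}
    (M_{j-1} + \lambda_j m_j m_j^T)^{-1} = M_{j-1}^{-1} - \frac{\lambda_j M_{j-1}^{-1} m_j m_j^T M_{j-1}^{-1}}{1 + \lambda_j m_j^T M_{j-1}^{-1} m_j}.
\end{align*}
Substituting $M_{j-1}^{-1} = \boldQhat_{j-1}(\lambdabar_{j-1})$ by the inductive hypothesis reproduces exactly the recursion defining $\boldQhat_j(\lambdabar_j)$, so $\boldQhat_j(\lambdabar_j) = M_j^{-1}$, completing the induction and yielding $\boldQhat_t = M^{-1}$.

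The one substantive point to verify, and the main (if mild) obstacle, is that the update is valid at every step, i.e. that each denominator $1 + \lambda_j m_j^T \boldQhat_{j-1}(\lambdabar_{j-1}) m_j$ is nonzero, since otherwise $M_j$ would be singular and the recursion ill-defined. Here I would invoke positive definiteness: under Assumption \ref{assumption:psd}, $\Pn(bb^T) = M_0$ is positive definite (with probability approaching $1$, or $1$, as discussed above), and each partial sum $M_{j-1} = \Pn(bb^T) + \sum_{i<j}\lambda_i m_i m_i^T$ is then positive definite as well, being the sum of a positive-definite matrix and nonnegative rank-one terms (recall $\lambda_i \geq 0$). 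Consequently $\boldQhat_{j-1} = M_{j-1}^{-1}$ is positive definite, so $m_j^T \boldQhat_{j-1} m_j \geq 0$, and since $\lambda_j \geq 0$ every denominator satisfies $1 + \lambda_j m_j^T \boldQhat_{j-1} m_j \geq 1 > 0$. This guarantees each update is well-defined and, in passing, confirms that $M = M_t$ is invertible, consistent with the standing assumption that the inverse exists. With the denominators controlled, the induction goes through verbatim, and the observable and counterfactual cases differ only in whether the final right-hand factor is $\Pn(bY)$ or $\Pn(b\phihat)$.
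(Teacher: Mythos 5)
Your proof is correct and follows exactly the route the paper intends: the paper offers no separate written proof of this proposition, justifying it in the main text with a one-line appeal to iterated Sherman--Morrison rank-one updates, which is precisely your induction on the partial sums $M_j$; your additional check that each denominator $1 + \lambda_j m_j^T \boldQhat_{j-1}m_j \geq 1$ (via positive definiteness of $M_{j-1}$ and $\lambda_j \geq 0$) is a worthwhile detail the paper omits. One small point: your argument correctly yields $\betahat_\lambda = \boldQhat_t(\lambdabar_t)\,\Pn(b\phihat)$ rather than $\boldQhat_t(\lambda_t)^{-1}\Pn(b\phihat)$ as literally written in the proposition --- the extra inverse there is evidently a typo, since $\boldQhat_t$ is already constructed as the inverse of the penalized Gram matrix.
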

Proposition \ref{proposition:fast_computation} says that to compute the set $\widehat{\calB}_n$ requires only a single matrix inversion, to compute $\boldQhat_0$. Each vector $m_j$ also only needs to be computed once. The remaining operations are algebraic. Since $\boldQhat_0$ is a $k \times k$ matrix and each $m_j$ is a vector of length $k$, if $b$ is a relatively small basis, then $\boldQhat_0$ will be fast to compute, and all the remaining algebraic operations will be fast. In our simulations and real data analyses, we show that we can get good results with a very small number of basis functions (e.g. 4 to 6), which yield extremely fast computations.

How should $\Lambda_n$ be chosen in Step 1? Since $\Lambda_n \subset \Rpos^t$, one simple possibility is to take a one-dimensional grid of points between $0$ and some arbitrary large number and then construct the $t-$dimensional Cartesian product. Since $\betahat_\lambda$ is smooth in $\lambda$, and since the risk and fairness measures are smooth in $\betahat$, we can expect that such a grid will enable us to move smoothly around the fairness-accuracy space, and that we won't be missing desirable predictors that lie in between the grid points\footnote{The movement won't be entirely smooth if predictions are truncated to lie in $[\ell_y, u_y]$.}.

Another possibility is to ``seed'' $\Lambda_n$ with values that correspond to a particular $\betahat_r$. That is, fix some constraints $\epsilon_j$ and solve the dual of the risk-min program that defines $\betahat_r$. The dual solution $\lambdaopt$ indexes a penalized-min problem whose solution is $\betahat_r$, so $\Lambda_n$ can then be constructed as a grid around this $\lambdaopt$. See the proof of Proposition \ref{proposition:constrained_to_lagrange} for a more detailed explanation.

This ``seeding'' approach provides a way to ensure that the set $\{\betahat_\lambda: \lambda \in \Lambda_n\}$ includes estimators that in some sense target reasonable constraints, particularly for users with specific constraints in mind. This approach requires solving just a single constrained optimization problem, to establish a point of reference in fairness-accuracy space.

The procedure we have described allows users to efficiently construct and evaluate a very large set of models that fall in different points in fairness-accuracy space. In sections \ref{sec:simulations}, \ref{sec:compas}, and \ref{sec:adult}, we show that this procedure enables us to find high-performing models in both observable and counterfactual settings, with simulated and real data. With minimal searching over possible bases, we are able to find models that substantially outperform existing models and methods with respect to both fairness and accuracy.

\begin{remark}[Penalized version of unfair-min]
Since $\betaopt_\lambda$ is constructed as a penalized equivalent of $\betaopt_r$, the seeding approach to constructing $\Lambda_n$ that we have described allows users to target particular fairness constraints but not particular risk constraints. It is straightforward to develop an analogous procedure around a penalized version of $\betaopt_u$ that allows users to seed $\Lambda_n$ with estimators that target particular risk constraints. In practice, it is not likely to matter much, since the construction of $\betahat_\lambda$ should allow users to flexibly explore the fairness-accuracy space and find an estimator that accommodates their desired constraints, if one exists in the span of the chosen basis.
\end{remark}

\begin{remark}[Arbitrary FADE weights]
An even simpler and plausibly just as effective alternative to computing the collection $\{\betahat_\lambda: \lambda \in \Lambda_n\}$ is to simply define an arbitrary set $\calB \subset  \Rb^k$, perhaps constrained to lie in the simplex or in an $L_1$ box around the origin. That is, the user could simply evaluate arbitrary sets of basis weights to see if any of them yields a reasonable predictor. This set could be similarly constructed as a grid around a particular $\betahat_r$ or $\betahat_u$, if users have specific fairness or accuracy constraints they wish to target.
\end{remark}

\begin{remark}[Resemblance to ridge regression]
$\betahat_\lambda$ resembles a ridge regression estimator. In ridge regression and other regularized estimators, however, the penalty tuning parameter $\lambda$ is expected to go to 0 as $n \rightarrow \infty$. In our setting, $\lambda$ serves to enforce fairness rather than to modulate the variance-bias tradeoff, so there is no reason for it to shrink with $n$. Without unfairness penalties, the predictor won't automatically get more fair as the data gets larger. 
\end{remark}

We now develop theoretical guarantees for the penalized FADE estimators. Let $h(n)$ denote the rate at which the product of nuisance parameter errors $\Vert \pihat - \pi \Vert \Vert \muhat_0 - \mu_0 \Vert$ grows or converges, and let $\hbar(n)$ denote the rate for $\Vert \pihat - \pi \Vert \Vert \nuhat_0 - \nu_0 \Vert$. That is,
\begin{align*}
    & \Vert \pihat - \pi \Vert \Vert \muhat_0 - \mu_0 \Vert = O_\Pb(h(n)) \\
    & \Vert \pihat - \pi \Vert \Vert \nuhat_0 - \nu_0 \Vert = O_\Pb(\hbar(n))
\end{align*}
Under ideal conditions, Assumption \ref{assumption:nuisance_rates} will hold, so that the product of nuisance parameter errors decay faster than $1/\sqrt{n}$, but the subsequent results do not require this.

\begin{assumption}[Compact superset $\Lambda$] \label{assumption:compact_lambda}
For all $n$, $\Lambda_n \subseteq \Lambda \subset  \Rb^t$ for some compact set $\Lambda$.
\end{assumption}

\begin{definition}For any $\lambda \in \Rpos^t$, the \emph{excess risk} for $\betahat_\lambda$ is
\begin{align*}
    & \Pb[(b^T\betahat_\lambda - \Yt)^2] - \Pb[(b^T\betaopt_\lambda - \Yt)^2]
\end{align*}
\end{definition} 

\begin{theorem}[Uniform rate for excess risk in the penalized setting] \label{LS_thm:series_rate_risk}
Under Assumptions \ref{assumption:psd}--\ref{assumption:bounded_basis} for the observable setting; and Assumptions \ref{assumption:psd}--\ref{assumption:bounded_basis}, \ref{assumption:consistency}--\ref{assumption:nuisance_rates}, and \ref{assumption:compact_lambda} for the counterfactual setting:
\begin{align*}
    \sup_{\lambda\in\Lambda}\left\{\Pb\left[\left(b^T\betahat_\lambda - \Yt\right)^2\right] - \Pb\left[\left(b^T\betaopt_\lambda - \Yt\right)^2\right]\right\} &= O_\Pb(\sqrt{1/n}) + O_\Pb(h(n))
\end{align*}
\end{theorem}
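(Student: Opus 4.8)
The plan is to reduce the excess risk to the estimation error in the weight vector, control that error uniformly over the compact set $\Lambda$ via a matrix-perturbation decomposition, and isolate the doubly robust structure that produces the $h(n)$ term in the counterfactual setting.

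First I would reduce the excess risk to the weight error. Writing $R(\beta) = \Pb[(b^T\beta - \Yt)^2] = \beta^T\E[bb^T]\beta - 2\beta^T\E[\Yt b] + \E[\Yt^2]$, this is an exact quadratic with Hessian $2\E[bb^T]$, so expanding around $\betaopt_\lambda$ with $\Delta_\lambda = \betahat_\lambda - \betaopt_\lambda$ gives the identity $R(\betahat_\lambda) - R(\betaopt_\lambda) = \nabla R(\betaopt_\lambda)^T\Delta_\lambda + \Delta_\lambda^T\E[bb^T]\Delta_\lambda$. Under Assumption \ref{assumption:psd} the matrix $\E[bb^T]$ has eigenvalues bounded above and below by $c' > 0$, and since the matrix defining $\betaopt_\lambda$ dominates $\E[bb^T] \succeq c' I$ we get $\Vert\betaopt_\lambda\Vert \leq \Vert\E[\Yt b]\Vert/c'$ uniformly in $\lambda$; with $\Yt$ and $b$ bounded (Assumption \ref{assumption:bounded_basis}) the gradient $\nabla R(\betaopt_\lambda) = 2(\E[bb^T]\betaopt_\lambda - \E[\Yt b])$ is therefore uniformly bounded over $\Lambda$. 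Hence $\sup_\lambda|R(\betahat_\lambda) - R(\betaopt_\lambda)| \leq C\sup_\lambda\Vert\Delta_\lambda\Vert + C\sup_\lambda\Vert\Delta_\lambda\Vert^2$, and it suffices to bound $\sup_\lambda\Vert\betahat_\lambda - \betaopt_\lambda\Vert$.

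Next I would bound the weight error uniformly. Let $M_\lambda$ and $\widehat{M}_\lambda$ denote the population and empirical penalized matrices, $v = \E[\Yt b]$, and $\hat v$ the empirical target ($\Pn(b\phihat)$ or $\Pn(bY)$). The standard identity $\widehat{M}_\lambda^{-1}\hat v - M_\lambda^{-1}v = \widehat{M}_\lambda^{-1}(\hat v - v) - \widehat{M}_\lambda^{-1}(\widehat{M}_\lambda - M_\lambda)M_\lambda^{-1}v$ splits the error into an estimation part and a perturbation part. Because each penalty term is PSD, $\widehat{M}_\lambda \succeq \Pn(bb^T)$ and $M_\lambda \succeq \E[bb^T]$; Assumption \ref{assumption:psd} plus the concentration $\Pn(bb^T) \to \E[bb^T]$ (fixed $k$, bounded basis) give $\Pn(bb^T) \succeq cI$ with probability approaching $1$, so $\Vert\widehat{M}_\lambda^{-1}\Vert$ and $\Vert M_\lambda^{-1}\Vert$ are uniformly bounded over $\Lambda$. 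With $\lambda_j$ bounded on the compact set $\Lambda$ (Assumption \ref{assumption:compact_lambda}) and $v$ bounded, the error collapses to $\sup_\lambda\Vert\Delta_\lambda\Vert \leq C\bigl(\Vert\hat v - v\Vert + \Vert\Pn(bb^T) - \E[bb^T]\Vert + \sum_j\Vert\Pn(\ghat_j b)\Pn(\ghat_j b)^T - \E[g_j b]\E[g_j b]^T\Vert\bigr)$. Uniformity over $\lambda$ is cheap here, since $\betahat_\lambda$ is an explicit continuous function of $\lambda$ and every building block on the right is free of $\lambda$.

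The heart of the argument is the rate for these building blocks, and this is also the main obstacle. In the observable setting all three are centered sample averages of bounded quantities over a fixed basis, hence $O_\Pb(\sqrt{1/n})$, giving the claim with no $h(n)$ term. In the counterfactual setting I would decompose each into an empirical-process part and a bias part. Sample splitting (Figure \ref{f:sample_splitting}) makes $\phihat$ and $\ghat_j$ fixed functions conditional on the nuisance fold, so $\Pn(b\phihat) - \Pb(b\phihat)$ and $\Pn(\ghat_j b) - \Pb(\ghat_j b)$ are centered averages contributing $O_\Pb(\sqrt{1/n})$. The bias $\Pb(b\phihat) - \E[Y^0 b]$ is where double robustness enters: under Assumptions \ref{assumption:consistency}--\ref{assumption:ignorability} one computes $\E[\phihat\mid W] - \mu_0 = \tfrac{\pihat - \pi}{1 - \pihat}(\mu_0 - \muhat_0)$, whence the bias equals $\E\bigl[b\tfrac{\pihat - \pi}{1 - \pihat}(\mu_0 - \muhat_0)\bigr]$, bounded by Cauchy--Schwarz, Assumption \ref{assumption:bounded_propensity_estimator}, and bounded basis as $C\Vert\pihat - \pi\Vert\Vert\muhat_0 - \mu_0\Vert = O_\Pb(h(n))$. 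The genuinely delicate step is the analogous calculation for $\Pb(\ghat_j b) - \E[g_j b]$: unlike $\phihat$, each $\ghat_j$ is a \emph{ratio} whose numerator and denominator both depend on $\phihat$ (cf.\ Proposition \ref{proposition:identification}), so verifying that its bias is truly second order---a product of nuisance errors rather than first order---requires carefully expanding the ratio and collecting the cross terms against the estimated normalizing constants. Combining these rates yields $\sup_\lambda\Vert\Delta_\lambda\Vert = O_\Pb(\sqrt{1/n}) + O_\Pb(h(n))$; the quadratic term of the first paragraph is of strictly smaller order, giving the stated rate. Finally, I would note that $\nu_0$ enters only through the $\beta$-independent $\E[\Yt^2]$ (equivalently $\phibar$) term, which cancels in the excess risk, so $\hbar(n)$ never appears---a reassuring match with the statement.
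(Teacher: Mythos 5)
Your proposal is correct and follows essentially the same route as the paper's: reduce the excess risk to $\Vert\betahat_\lambda-\betaopt_\lambda\Vert$, apply the resolvent identity to the penalized matrices, and split each building block into a centered empirical-process term of order $O_\Pb(1/\sqrt{n})$ and a doubly robust bias term of order $\Vert\pihat-\pi\Vert\,\Vert\muhat_0-\mu_0\Vert = O_\Pb(h(n))$. Your uniformity argument (all stochastic building blocks are $\lambda$-free and the map to $\betahat_\lambda-\betaopt_\lambda$ is uniformly bounded over the compact $\Lambda$) is if anything a cleaner substitute for the paper's appeal to a Donsker condition, and the ratio structure of $\ghat_j$ that you flag as the delicate step is exactly the point the paper handles via its expansion in the proof of the asymptotic-normality theorem.
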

In other words, the excess risk goes to 0 uniformly at $\sqrt{1/n}$ or the nuisance rate $h(n)$, whichever is slower. We have a similar result for the excess unfairness, which is defined as follows.

\begin{definition}
For any $\lambda \in \Rpos^t$, the \emph{excess unfairness} for $\betahat_\lambda$ is
\begin{align*}
    & \left\{\max_{j \in 1, \ldots, t}\left(\Pb\left[g_jb^T\betahat_\lambda\right] - \Pb\left[g_jb^T\betaopt_\lambda\right]\right)\right\}
\end{align*}
We have defined excess unfairness as the max over $j$, but it makes little difference if we define it instead as the sum over $j$. Note that here we haven't used the squared unfairness.
\end{definition}

\begin{theorem}[Uniform rate for excess unfairness in the penalized setting] \label{LS_thm:series_rate_unfairness} 
Under Assumptions \ref{assumption:psd}--\ref{assumption:bounded_basis} for the observable setting; and Assumptions \ref{assumption:psd}--\ref{assumption:bounded_basis}, \ref{assumption:consistency}--\ref{assumption:nuisance_rates}, and \ref{assumption:compact_lambda} for the counterfactual setting:
\begin{align*}
    \sup_{\lambda\in\Lambda}\left\{\max_{j \in 1, \ldots, t}\left(\Pb\left[g_jb^T\betahat_\lambda\right] - \Pb\left[g_jb^T\betaopt_\lambda\right]\right)\right\} &= O_\Pb(\sqrt{1/n}) + O_\Pb(h(n))
\end{align*}
\end{theorem}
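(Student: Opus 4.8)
The plan is to exploit the fact that, unlike the excess risk, the excess unfairness is an exactly \emph{linear} functional of the weight error $\betahat_\lambda - \betaopt_\lambda$. Since the outer $\Pb$ treats $\betahat_\lambda$ as fixed given the data, each summand is
\[
\Pb[g_j b^T\betahat_\lambda] - \Pb[g_j b^T\betaopt_\lambda] = \E[g_j b]^T(\betahat_\lambda - \betaopt_\lambda),
\]
and by Cauchy--Schwarz $\big|\E[g_j b]^T(\betahat_\lambda - \betaopt_\lambda)\big| \leq \Vert\E[g_j b]\Vert\,\Vert\betahat_\lambda - \betaopt_\lambda\Vert$, so that
\[
\sup_{\lambda\in\Lambda}\max_{j}\big(\E[g_j b]^T(\betahat_\lambda - \betaopt_\lambda)\big) \leq \Big(\max_j\Vert\E[g_j b]\Vert\Big)\,\sup_{\lambda\in\Lambda}\Vert\betahat_\lambda - \betaopt_\lambda\Vert.
\]
The first factor is $O(1)$: each $g_j$ is a bounded fairness function and $\sup_w\Vert b(w)\Vert<\infty$ by Assumption~\ref{assumption:bounded_basis}, so $\Vert\E[g_j b]\Vert$ is finite uniformly in $n$ for the fixed, finite number $t$ of measures. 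Everything therefore reduces to the uniform weight bound $\sup_{\lambda\in\Lambda}\Vert\betahat_\lambda - \betaopt_\lambda\Vert = O_\Pb(\sqrt{1/n}) + O_\Pb(h(n))$, which is exactly the engine already needed for Theorem~\ref{LS_thm:series_rate_risk}; I would cite it from that proof, or re-derive it as follows.

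Writing $M_\lambda = \E[bb^T] + \sum_j\lambda_j\E[g_jb]\E[g_jb]^T$ and $\widehat M_\lambda = \Pn(bb^T) + \sum_j\lambda_j\Pn(\ghat_jb)\Pn(\ghat_jb)^T$, with $v = \E[\Yt b]$ and $\widehat v$ the corresponding plug-in ($\Pn(bY)$ in the observable, $\Pn(b\phihat)$ in the counterfactual setting), the resolvent identity gives
\[
\betahat_\lambda - \betaopt_\lambda = \widehat M_\lambda^{-1}(\widehat v - v) - \widehat M_\lambda^{-1}(\widehat M_\lambda - M_\lambda)\betaopt_\lambda.
\]
Uniform control of $\widehat M_\lambda^{-1}$ is essentially free: adding the positive semi-definite matrices $\lambda_j\Pn(\ghat_jb)\Pn(\ghat_jb)^T$ only raises eigenvalues, so the smallest eigenvalue of $\widehat M_\lambda$ is at least that of $\Pn(bb^T)$, which is bounded away from $0$ with probability approaching $1$ by Assumption~\ref{assumption:psd} and concentration of $\Pn(bb^T)$; hence the operator norm of $\widehat M_\lambda^{-1}$ is $O_\Pb(1)$ uniformly over $\lambda$. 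Likewise $\Vert\betaopt_\lambda\Vert$ is uniformly bounded on the compact set $\Lambda$ (Assumption~\ref{assumption:compact_lambda}). The two remaining stochastic factors are $\Vert\widehat v - v\Vert$ and $\sup_\lambda\Vert\widehat M_\lambda - M_\lambda\Vert$; for the latter, boundedness of $\lambda$ on $\Lambda$ together with $m_jm_j^T - \bar m_j\bar m_j^T = (m_j-\bar m_j)m_j^T + \bar m_j(m_j - \bar m_j)^T$ (with $m_j = \Pn(\ghat_jb)$, $\bar m_j = \E[g_jb]$) reduces everything to $\Vert\Pn(bb^T)-\E[bb^T]\Vert$ and $\max_j\Vert m_j - \bar m_j\Vert$.

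In the observable setting $\ghat_j = g_j$, so $\widehat v - v$, $\Pn(bb^T) - \E[bb^T]$, and $m_j - \bar m_j$ are all centered averages of bounded functions, each $O_\Pb(\sqrt{1/n})$ by the LLN/CLT, giving the $O_\Pb(\sqrt{1/n})$ term with no nuisance contribution. In the counterfactual setting $\widehat v = \Pn(b\phihat)$ and $m_j = \Pn(\ghat_jb)$ depend on $\pihat,\muhat_0$ through $\phihat$, and I would use the standard three-term decomposition into (i) a centered empirical process $(\Pn-\Pb)(b\phi)$ that is $O_\Pb(\sqrt{1/n})$; (ii) a sample-split cross term $(\Pn-\Pb)(b(\phihat-\phi))$ that is $o_\Pb(\sqrt{1/n})$ because the nuisances are fit on an independent fold and $\Vert\phihat-\phi\Vert = o_\Pb(1)$ (Assumption~\ref{assumption:consistent_nuisance_estimators}); and (iii) a bias term $\Pb[b(\phihat-\phi)]$ which, by the doubly robust structure of $\phi$, factorizes into a product of nuisance errors $O_\Pb(\Vert\pihat-\pi\Vert\,\Vert\muhat_0-\mu_0\Vert) = O_\Pb(h(n))$. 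Crucially, since unfairness involves only $g_j$ and $b$, which depend on $\mu_0$ but never on $\nu_0$, only $h(n)$ appears (the rate $\hbar(n)$ governing $\nu_0$ plays no role), matching the statement. The same split applied to $m_j - \bar m_j$ delivers the identical $O_\Pb(\sqrt{1/n}) + O_\Pb(h(n))$ rate.

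I expect the main obstacle to be twofold. First, establishing genuine \emph{uniformity} over the continuum $\Lambda$: the bounds are arranged so that their stochastic parts do not themselves depend on $\lambda$ (eigenvalue monotonicity and the compactness of $\Lambda$ do the work), but one must verify that $\sup_\lambda$ passes inside each $O_\Pb$ without a union bound, which holds because $\betahat_\lambda$ and $\betaopt_\lambda$ are smooth rational functions of $\lambda$ and the controlling quantities are $\lambda$-free up to the bounded factor $\sup_{\lambda\in\Lambda}\max_j\lambda_j$. Second, and more delicate, is the counterfactual bias analysis for the \emph{estimated fairness functions}: because each $g_j(W,\phihat)$ carries plug-in expectations in its denominator, showing that its contribution to $m_j - \bar m_j$ remains a clean product of nuisance errors rather than a first-order bias is the crux, and it is precisely here that the doubly robust construction of $\phi$ is indispensable.
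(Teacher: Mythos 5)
Your proposal is correct and follows essentially the same route as the paper: the paper likewise reduces the excess unfairness, via its linearity in $\betahat_\lambda - \betaopt_\lambda$ and a Cauchy--Schwarz/Jensen bound using the boundedness of $g_j$ and $b$, to the weight-error bound $\Vert\betahat_\lambda - \betaopt_\lambda\Vert = O_\Pb(\sqrt{1/n}) + O_\Pb(h(n))$ (its Lemma~\ref{LS_lemma:penalized_estimator_convergence}), which it proves with exactly your resolvent decomposition, the uniform control of $\boldQhat_\lambda^{-1}$, and the empirical-process/cross-term/doubly-robust-bias split, before invoking Lipschitzness in $\lambda$ over the compact $\Lambda$ for uniformity. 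Your observations that only $h(n)$ (not $\hbar(n)$) enters and that the plug-in denominators in $\ghat_j$ are the delicate point match the paper's treatment.
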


\begin{remark}[Allowing $k \rightarrow \infty$] \label{remark:k_to_infinity}
We can obtain similar theoretical results in a regime in which the basis dimension $k$ is allowed to grow to $\infty$, if we require that $\sup_{w\in\calW} \Vert b(w) \Vert = O(\sqrt{k})$ and that $k\log(k)/n \rightarrow 0$. The first requirement is a stronger version of Assumption \ref{assumption:bounded_basis}, while the second insists that $k$ not grow too fast in $n$. Under these additional requirements, we attain a rate of $O_\Pb(\sqrt{k/n}) + O_\Pb(\sqrt{k}\cdot h(n))$ in Theorems \ref{LS_thm:series_rate_risk} and \ref{LS_thm:series_rate_unfairness}. These results extend the results of \cite{belloni_new_2015} to a setting with nuisance parameters and penalty terms. As illustrated in that paper, these requirements are weak enough to allow the basis to asymptotically span rich function spaces such as the space of square integrable functions.
\end{remark}

\subsection{Risk and unfairness of a fixed predictor}
Once a user has constructed a predictor or a set of candidate predictors, they will naturally wish to estimate the risk and fairness properties of those predictors, for example before choosing one to deploy in a decision-making context. The risk and unfairness of a fixed predictor $f_\beta$ are estimated as
\begin{align*}
    \riskhat(f_\beta) &= \begin{cases}
                        \Pn[(f_\beta - Y)^2] & \text{(Observable)} \\
                        \Pn[f_\beta^2 - 2f_\beta\phihat + \phibarhat] & \text{(Counterfactual)}
                        \end{cases} \\
    \UFhat_j(f_\beta) &= \begin{cases}
                        \Pn[g_j(W, Y)f_\beta] & \text{(Observable)} \\
                        \Pn[g_j(W, \phihat)f_\beta] & \text{(Counterfactual)}
                        \end{cases}           
\end{align*}
for $j = 1, \ldots t$.

\begin{theorem}[Asymptotic normality of risk and unfairness estimators] \label{LS_thm:asymptotic_normality}
Consider fairness functions $g_j \in \{g^\text{rate}, g^\text{FPR}, g^\text{FNR}\}$. Under Assumptions \ref{assumption:psd}--\ref{assumption:bounded_basis} for the observable  (Obs.) setting; and Assumptions \ref{assumption:psd}--\ref{assumption:bounded_basis}, \ref{assumption:consistency}--\ref{assumption:nuisance_rates}, and \ref{assumption:compact_lambda} for the counterfactual (Count.) setting:
\begin{align*}
    & \sqrt{n}\left(\riskhat(f_\beta) - \risk(f_\beta)\right) \xrightarrow{d}
    \begin{cases}
    N\left(0, \var((f_\beta - Y)^2)\right) & \text{(Obs.)} \\
    N\left(0, \var(f_\beta^2 - 2f_\beta\phi + \phibar)\right) & \text{(Count.)}
    \end{cases} \\
    & \sqrt{n}\left(\UFhat_j(f_\beta) - \UF_j(f_\beta)\right) \xrightarrow{d}
    \begin{cases}
    N\left(0, \var(g_j(W, Y)f_\beta)\right) & \text{(Obs.)} \\
    N\left(0, \var\left(\Pb(\gamma_0)^{-1}\eta_0 - \Pb(\gamma_1)^{-1}\eta_1\right)\right) & \text{(Count.)}
    \end{cases}
\end{align*}
where, for $a \in \{0, 1\}$,
\begin{align*}
    \gamma_a &= \begin{cases}
    (1 - \phi)\one\{A = a\} & \text{(for $g^\text{FPR}$)} \\
    \phi\one\{A = a\}       & \text{(for $g^\text{FNR}$)}
    \end{cases} \\
    \eta_a &= \gamma_a\left(f_\beta - \frac{\Pb[\gamma_a f_\beta]}{\Pb[\gamma_a]}\right)
\end{align*}
\end{theorem}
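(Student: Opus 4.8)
The plan is to treat the four estimators in two groups. In the observable setting, both $\riskhat(f_\beta) = \Pn[(f_\beta - Y)^2]$ and $\UFhat_j(f_\beta) = \Pn[g_j(W,Y)f_\beta]$ are simple sample averages of fixed, bounded random variables: $f_\beta = b^T\beta$ is bounded by Assumption \ref{assumption:bounded_basis} (with $\beta$ fixed), $Y \in [\ell_y, u_y]$, and each $g_j \in \{g^\text{rate}, g^\text{FPR}, g^\text{FNR}\}$ is bounded since its normalizing denominators are strictly positive population probabilities. Hence the integrands have finite variance and the two observable limits follow immediately from the ordinary CLT, with variances $\var((f_\beta - Y)^2)$ and $\var(g_j(W,Y)f_\beta)$. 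The work is concentrated in the counterfactual setting, where the estimators replace $\phi,\phibar$ by the fitted influence functions $\phihat,\phibarhat$ and, for unfairness, replace the normalizing constants by sample averages. For each counterfactual estimator I would first pass to an \emph{oracle} version with the true $\phi,\phibar$, prove asymptotic normality of the oracle, and then show the nuisance error is $o_\Pb(1/\sqrt{n})$ so that it vanishes after scaling by $\sqrt{n}$.

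For the counterfactual risk, the oracle estimator $\Pn[f_\beta^2 - 2f_\beta\phi + \phibar]$ is again an average of a bounded i.i.d.\ term whose mean is $\risk(f_\beta)$ by Proposition \ref{proposition:identification}, so the CLT delivers $N(0,\var(f_\beta^2 - 2f_\beta\phi + \phibar))$. The gap to the feasible estimator is $\Pn[-2f_\beta(\phihat - \phi) + (\phibarhat - \phibar)]$, which I would split into an empirical-process part $(\Pn-\Pb)[-2f_\beta(\phihat-\phi) + (\phibarhat-\phibar)]$ and a bias part $\Pb[-2f_\beta(\phihat-\phi) + (\phibarhat-\phibar)]$. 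By the sample splitting of Figure \ref{f:sample_splitting}, the first part is a centered average of a function that is fixed given the nuisance fold, with conditional variance of order $n^{-1}(\|\phihat-\phi\|^2 + \|\phibarhat-\phibar\|^2) = o_\Pb(n^{-1})$ under Assumption \ref{assumption:consistent_nuisance_estimators}, hence $o_\Pb(1/\sqrt{n})$. For the bias part I would use the doubly robust identity $\E[\phihat - \phi \mid W] = (\mu_0 - \muhat_0)(\pihat - \pi)/(1-\pihat)$ (and its analogue for $\phibarhat$ with $\nu_0,\nuhat_0$); since $1-\pihat$ is bounded away from $0$ by Assumption \ref{assumption:bounded_propensity_estimator}, Cauchy--Schwarz gives $O_\Pb(h(n)) + O_\Pb(\hbar(n)) = o_\Pb(1/\sqrt{n})$ under Assumption \ref{assumption:nuisance_rates}.

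For the counterfactual unfairness with $g^\text{FPR}$ (the $g^\text{FNR}$ case is identical with $\phi$ replacing $1-\phi$, and rate-diff is already observable), I would write the estimator in ratio form $\UFhat_j(f_\beta) = \Pn[\widehat\gamma_0 f_\beta]/\Pn[\widehat\gamma_0] - \Pn[\widehat\gamma_1 f_\beta]/\Pn[\widehat\gamma_1]$, where $\widehat\gamma_a = (1-\phihat)\one\{A=a\}$ and $\gamma_a = (1-\phi)\one\{A=a\}$. First I would replace each $\widehat\gamma_a$ by $\gamma_a$ in every numerator and denominator: the same empirical-process-plus-bias decomposition as above, now applied to $\Pb[(\phihat-\phi)\one\{A=a\}]$ and $\Pb[(\phihat-\phi)\one\{A=a\}f_\beta]$ (both $O_\Pb(h(n))$ since $\one\{A=a\}$ and $f_\beta$ are bounded functions of $W$), shows the feasible numerators and denominators agree with their oracle counterparts up to $o_\Pb(1/\sqrt{n})$. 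Then I would apply the multivariate CLT to $(\Pn[\gamma_a f_\beta], \Pn[\gamma_a])_{a=0,1}$ and the delta method to the map $(r_0,q_0,r_1,q_1)\mapsto r_0/q_0 - r_1/q_1$. The influence function of the $a$th ratio is $\Pb(\gamma_a)^{-1}\gamma_a(f_\beta - \Pb[\gamma_a f_\beta]/\Pb[\gamma_a]) = \Pb(\gamma_a)^{-1}\eta_a$, exactly the form in the statement, so the limiting variance is $\var(\Pb(\gamma_0)^{-1}\eta_0 - \Pb(\gamma_1)^{-1}\eta_1)$.

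The main obstacle is this last case, where two approximations must be controlled at once: the substitution of $\phihat$ for $\phi$ in both the numerators and the estimated denominators, and the delta-method linearization of the ratio. The crux is that the doubly robust form of $\phi$ makes the conditional bias $\E[\phihat-\phi\mid W]$ a \emph{product} of nuisance errors, so that under Assumption \ref{assumption:nuisance_rates} all nuisance-induced terms are $o_\Pb(1/\sqrt{n})$ and drop out after multiplication by $\sqrt{n}$; the surviving randomness is purely the oracle empirical process, to which the delta method applies. Boundedness of $f_\beta$, $\phi$, $\phibar$, and $\gamma_a$ (from Assumptions \ref{assumption:bounded_basis} and \ref{assumption:bounded_propensity_estimator}) guarantees finite variances throughout, and positivity of each $\Pb[\gamma_a]$ ensures the ratios and their derivatives are well defined. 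Finally, since risk and fairness are evaluated on a held-out fold, the argument applies verbatim with $\beta$ replaced by a train-set estimate $\betahat$, conditionally on the training data.
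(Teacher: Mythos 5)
Your proposal is correct and follows essentially the same route as the paper: the observable cases are handled by the plain CLT, the counterfactual risk is decomposed into an oracle empirical-process term plus a $(\Pn-\Pb)$ nuisance term (controlled via sample splitting, i.e.\ the paper's Lemma \ref{LS_lemma:kennedy}) plus a bias term controlled by the doubly robust product-of-errors identity (Lemma \ref{LS_lemma:double_robustness}), and the counterfactual unfairness is handled by the same substitution of $\gamma_a$ for $\widehat\gamma_a$ in the ratio form followed by the multivariate CLT and the delta method, yielding the influence function $\Pb(\gamma_a)^{-1}\eta_a$. The only cosmetic differences are that you phrase the decomposition as ``oracle plus gap'' rather than writing the three terms at once, and you correctly note the conditional bias has $1-\pihat$ in the denominator (bounded via Assumption \ref{assumption:bounded_propensity_estimator}), where the paper's Lemma \ref{LS_lemma:double_robustness} writes $1-\pi$; neither affects the argument.
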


Under Theorem \ref{LS_thm:asymptotic_normality}, asymptotically valid confidence intervals can be constructed, and asymptotically valid hypothesis tests conducted, for $\risk(f_\beta)$ and $\UF_j(f_\beta)$.

In the next three sections, we illustrate FADE on simulated and real data, in both observable and counterfactual settings. Given the correspondence between the constrained and penalized FADE forms established in Propositions \ref{proposition:constrained_to_lagrange} and \ref{proposition:lagrange_to_constrained}, we do not conduct separate simulations for these two settings. We emphasize the penalized form because of its substantial computational advantages over the constrained form.

\section{Simulations} \label{sec:simulations}
We illustrate the penalized FADE procedure in the counterfactual setting, i.e when $\Yt = Y^0$. As in a real data setting, each estimator $\betahat_\lambda$ is constructed using only observable data, but unlike in a real data setting, we use the known values of $Y^0$ to evaluate the resulting predictors.

All computations in this and subsequent sections were carried out on a 2013 MacBook Pro with a 2.4 GHz dual-core processor and 8GB of RAM.

\subsection{Data-generating process}
The data generating process is as follows, for data $Z = (A, X, D, Y^0, Y^1, Y)$.
\begin{align*}
    \Pb(A = 1) &= 0.3 \\
    X \mid A &\sim N\left(A*(1, -0.8, 4, 2)^T, I_4\right) \\
    \Pb(D = 1 \mid A, X) &= \min\{0.975, \-\ \expit((A, X)^T(0.2, -1, 1, -1, 1))\} \\
    \Pb(Y^0 = 1 \mid A, X, D) &= \expit((A, X)^T(-5, 2, -3, 4, -5)) \\
    \Pb(Y^1 = 1 \mid A, X, D) &= \expit((A, X)^T(1, -2, 3, -4, 5)) \\
    Y &= (1 - D)Y^0 + DY^1
\end{align*}
where $I_4$ denotes the $4\times 4$ identity matrix. $A = 1$ represents the minority group. There are no previously trained predictors; i.e. $S = \emptyset$, so the collected covariates consist of $W = (A, X)$. This data generating process satisfies the identifying assumptions, Assumptions \ref{assumption:consistency}--\ref{assumption:ignorability}: the last line expresses the consistency assumption; the propensity score $\pi(A, X) = \Pb(D = 1 \mid A, X)$ is upper bounded at 0.975 to satisfy positivity; and $Y^a \ind D \mid W$ for $a \in \{0, 1\}$, satisfying ignorability.

The two groups $A = 0$ and $A = 1$ differ in the distribution of covariates (Figure \ref{f:simulated data_covariate_distribution}) and decisions and outcomes (Table \ref{t:sim1_outcome_distribution}). The minority group experiences a positive decision $(D = 1)$ $18\%$ of the time, while the majority group experiences it $50\%$ of the time. Outcomes $Y^0$ and $Y$ are higher for the minority group, with a larger disparity for potential outcomes than for observable outcomes. All measures are computed on a dataset of size 50,000.

\begin{figure}[ht]
    \centering
    \includegraphics[width=\textwidth]{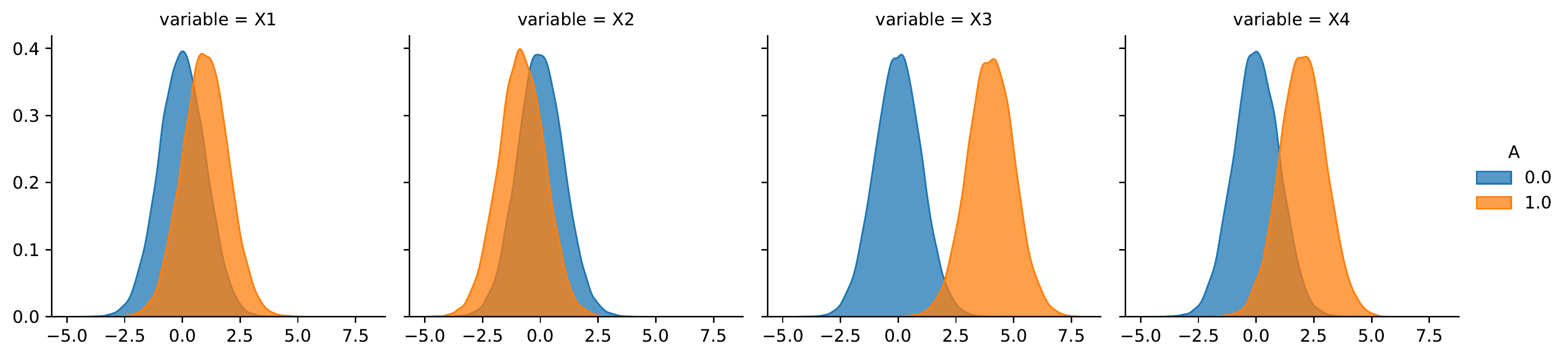}
    \caption{Conditional covariate distributions for the two groups $A = 0$ and $A = 1$ in the simulated data. Curves are kernel density estimates.}
    \label{f:simulated data_covariate_distribution}
\end{figure}

\begin{table}[ht]
    \centering
    \begin{tabular}{lrrr}
    \toprule
    $A$ & $\E[D \mid A]$ & $\E[Y^0 \mid A]$ & $\E[Y \mid A]$ \\
    \midrule
    0 & 0.50 & 0.50 & 0.67 \\
    1 & 0.18 & 0.76 & 0.71 \\
    \bottomrule
    \end{tabular}
    \vspace{0.5em}
    \caption{Distribution of decisions and outcomes for groups $A = 0$ and $A = 1$ in the simulated data.}
    \label{t:sim1_outcome_distribution}
\end{table}

As a reference point for our method, in Table \ref{t:sim1_bayes_performance} we compute the performance of the counterfactual Bayes-optimal predictor $f(A, X) = \E[Y^0 \mid A, X]$, which is defined in the data-generating process. We include both MSE, with is the measure directly targeted by our method, as well as area under the curve (AUC). The Bayes-optimal predictor is highly accurate, with an MSE of 0.05 and an AUC of 0.98. The MSE of 0.05 is a lower bound on the risk achievable by any predictor. Unsurprisingly, given the difference in the distribution of outcomes across the two groups, the Bayes-optimal predictor has a large rate-diff ($\lvert\E[f \mid A = 0] - \E[f \mid A = 1]\rvert$). The differences in generalized false positive and false negative rates, however, are relatively small. 

\begin{table}[ht]
    \centering
    \begin{tabular}{lccccc}
    \toprule
     Predictor & MSE & AUC & rate-diff &  FPR-diff &  FNR-diff \\
    \midrule
          Bayes-optimal &  0.05 &       0.98 &       0.26 &       0.07 &       0.05 \\
    \bottomrule
    \end{tabular}
    \caption{Risk and fairness measures with respect to $Y^0$ for the Bayes-optimal predictor $\E[Y^0 \mid A, X]$ in the simulated data. The predictor is highly accurate, with low MSE and high AUC. It has a relatively large rate disparity but small disparities in generalized false positive and false negative rates.}
    \label{t:sim1_bayes_performance}
\end{table}

\subsection{Base predictors and nuisance models}
We now investigate the performance of the penalized FADE procedure. We randomly sample three iid datasets of size n = 1000, representing $\datalearn, \datatrain^\text{nuis}$, and $\datatrain^\text{target}$. We train four base predictors on $\datatrain$, with $A, X$ as covariates and $Y$ as the outcome. We train only on data in which $D = 0$: under the ignorability assumption, $\E[Y \mid A, X, D = 0] = \E[Y^0 \mid A, X]$, so this results in predictors which are designed to estimate $Y^0$. The predictors consist of a random forest, a gradient boosted (GB) classifier, a Gaussian Naive Bayes model, and a ridge regression, all chosen for convenience and ease of computation. (In practice, a logistic regression would be a natural choice for a base predictor. Since the actual regression function $\E[Y^0 \mid A, X]$ is logistic, however, we do not use this model in order to avoid making the problem too easy, and to simulate a real data setting in which it is unlikely that the true regression function is known up to a finite dimensional parameter.) In addition to these predictors, we include a mean predictor, which always predicts the conditional sample mean $\Pn(Y \mid D = 0)$. This plays essentially the same role as an intercept in ordinary linear regression.

We use random forest classifiers to estimate the propensity and outcome models $\pihat$ and $\muhat_0$ on $\datatrain^\text{nuis}$. All models were trained with their default tuning parameters using the scikit-learn library in Python. Predictors $\betahat_\lambda$ are computed using $\datatrain^\text{target}$.

After a set of penalty vectors $\Lambda_n \subset \Rpos^t$ is chosen and the corresponding model coefficients $\widehat{\calB}_n = \{\betahat_\lambda: \lambda \in \Lambda_n\}$ are computed, we estimate the risk of fairness properties of every $f_\beta: \beta \in \widehat{\calB}_n$. In order to understand the true range of risk and fairness values that our method produces, we use a large test set $\datatest$ of size 10,000, and in place of $\phihat$, the nuisance quantity that would be required in a real data setting, we use the known values of $Y^0$ to compute the risk and fairness estimates. (For comparison purposes, estimates were also computed using $\mu_0$ instead of $Y^0$; the results were virtually identical.) Since the true $Y^0$ is used, there is no need to split $\datatest$ into $\datatest^\text{nuis}$ and $\datatest^\text{target}$.

Table \ref{t:sim1_base_performance} shows the performance of the base predictors as well as the ordinary unpenalized least squares (OLS) solution, i.e. the predictor $f_{\beta_\lambda}$ with $\lambda = 0$. The OLS predictor is the (estimated) MSE-minimal aggregation of the five base predictors, computed without regard for fairness. The OLS weights are $[-0.27,  0.09,  0.40, -0.11,  0.94]$; each base predictor appears to make a nontrivial contribution. The MSE of the base predictors ranges from 0.08 to 0.27. The four non-constant predictors improve substantially on the mean predictor with respect to both MSE and AUC. The mean predictor necessarily has a value of 0 for all three disparities, while the disparities of the other base predictors vary between 0.10 and 0.59. As expected, the OLS predictor has lower MSE than any of the base predictors. The performance of the OLS predictor is similar to the performance of the Bayes-optimal predictor in Table \ref{t:sim1_base_performance}: both have a small MSE, a relatively large rate disparity, and relatively small error rate disparities.

\begin{table}[ht]
    \centering
    \begin{tabular}{lrrrrr}
    \toprule
             Predictor &  MSE &  AUC &  rate-diff &  FPR-diff &  FNR-diff \\
    \midrule
          Mean &       0.27 &       0.50 &       0.00 &       0.00 &       0.00 \\
 Random Forest &       0.09 &       0.95 &       0.28 &       0.21 &       0.11 \\
 GB Classifier &       0.08 &       0.96 &       0.31 &       0.22 &       0.10 \\
   Naive Bayes &       0.17 &       0.84 &       0.52 &       0.59 &       0.40 \\
         Ridge &       0.09 &       0.98 &       0.22 &       0.09 &       0.10 \\
         \hline
 OLS &       0.07 &       0.98 &       0.26 &       0.10 &       0.08 \\
    \bottomrule
    \end{tabular}
    \caption{Performance of the five base predictors and the ordinary least squares (OLS) predictor in the simulated data. The OLS weights are $[-0.27,  0.09,  0.40, -0.11,  0.94]$. The OLS predictor substantially improves on the MSE of the base predictors. The performance profile of the OLS predictor is close to the profile of the Bayes-optimal predictor in Table \ref{t:sim1_bayes_performance}.}
    \label{t:sim1_base_performance}
\end{table}

\subsection{Results: one unfairness penalty}
We now compute a set of unfairness-penalized predictors, applying a single unfairness penalty at a time. Let
\begin{align*}
    \Lambda_{n,1} = \{0, 0.001, 0.01, 1, 10, 20, 50, 100, 500, 1000, 2000\}.
\end{align*}
For each $\lambda \in \Lambda_{n,1}$, we compute $\betahat_\lambda$ for each fairness function $g \in \{g^\text{rate}, g^\text{FPR}, g^\text{FNR}\}$. The value $\lambda = 0$ corresponds in each case to the OLS solution, so this yields a total of $(\lvert\Lambda_{n,1}\rvert - 1)*3 + 1 = 31$ predictors.

The risk and unfairness values for each predictor are plotted in Figure \ref{f:sim1_one_constraint}. The disparity corresponding to the targeted constraint is represented by a solid line, while the other two disparities and the MSE are represented by dashed lines. We emphasize that the values in this figure are computed on $\datatest$, after the predictors $\betahat_\lambda$ are computed on $\datatrain$.

As expected, as $\lambda$ increases, the targeted disparity of the resulting predictor generally decreases. The decrease is monotonic, except at one point: $\lambda = 1$ for FPR-diff, which may be a result of sampling noise. The rate difference decreases from 0.26 to 0.04. The FPR disparity decreases from 0.10 to 0, then remains at 0.01. The FNR disparity decreases from 0.08 to 0.04. When the rate disparity is penalized, the decrease in the target disparity is accompanied by a slight increase in MSE, from 0.07 to 0.09, as well as small increases in FPR-diff and FNR-diff. When FPR-diff is penalized, all three disparities fall together, while the increase in MSE is miniscule, from 0.067 to 0.071. The same is true when FNR-diff is targeted: the MSE increases from 0.067 to 0.069.

These results illustrate (1) that the penalty term successfully controls the target disparity, (2) that an increase in fairness need not come at the cost of a substantial decrease in accuracy, and (3) that a decrease in one disparity need not produce an increase in other disparities. In the second two panels, the FADE predictors are uniformly more fair than the OLS predictor, with essentially no change in accuracy. Additionally, in these two panels, even though only one disparity was penalized at a time, all three disparities decrease as $\lambda$ increases. 

\begin{figure}
    \centering
    \includegraphics[width=\textwidth]{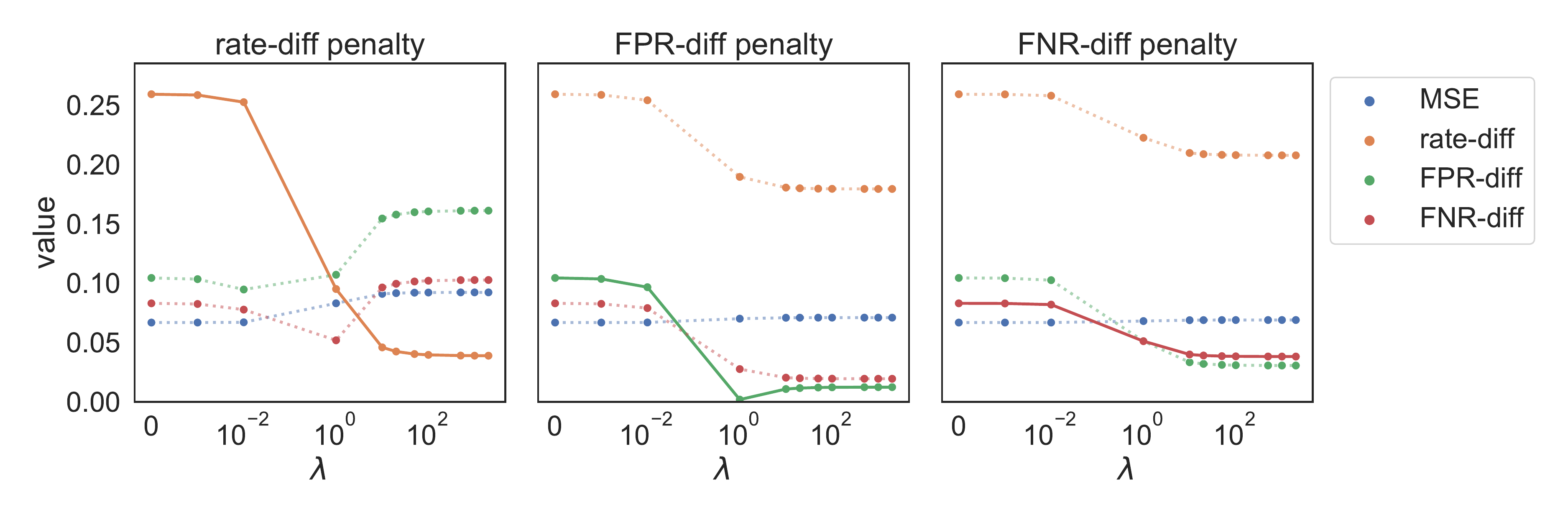}
    \caption{Risk and fairness for predictors subject to one of three penalties, in the simulated data. The x-axis represents the unfairness penalty coefficient $\lambda$, with the leftmost point ($\lambda = 0$) in each panel corresponding to the OLS solution. The y-axis represents the MSE and the disparity values of the resulting predictor $\betahat_\lambda$, computed on an independent test set of size 10,000 using the known values of $Y^0$. Solid lines indicate the metric that is penalized in training.}
    \label{f:sim1_one_constraint}
\end{figure}

\subsection{Results: multiple unfairness penalties}
We now apply all three unfairness penalties simultaneously. Define $\Lambda = \Lambda_{n,1} \times \Lambda_{n,1} \times \Lambda_{n,1} \subset \Rpos^3$. The collection $\widehat{\calB}_n = \{\betahat_\lambda: \lambda \in \Lambda\}$ now indexes $\lvert\Lambda_{n,1}\rvert^3 = 1331$ predictors. We use the same base predictors and nuisance predictors as in the previous section. The process of training the predictors, computing $\widehat{\calB}_n$, and estimating the risk and fairness for each predictor, took less than 10 seconds.

Figure \ref{f:sim1_two_constraints_MSE} plots each of the three disparities against MSE, for each of the 1331 predictors, as well as the base predictors and the OLS predictor. As expected, the OLS predictor has the smallest MSE. Fewer than 1331 dots are visible in each panel, due to the fact that many of the predictors substantially overlap in fairness-accuracy space. Nevertheless, the predictors span a wide range of performance profiles. For all three disparities, predictors exist that take the disparity to 0, with relatively small increase in MSE relative to the OLS predictor. For rate-diff and FNR-diff, these predictors notably do not appear in Figure \ref{f:sim1_one_constraint}, where the lowest value for these two disparities are 0.04. We only discover these predictors by applying multiple penalties simultaneously. All the FADE predictors are substantially more accurate than the mean predictor.

Figure \ref{f:sim1_two_constraints_fairness} plots the same 1331 predictors with respect to each pair of disparities, with color indicating MSE. This figure illustrates the interplay of three metrics at once, and is of interest to users who wish to control two disparities simultaneously. For example, users who wish to target (counterfactual) equalized odds would be interested in the bottom panel that plots FNR-diff and FPR-diff.

These views once again reveal a wide range of predictor behavior. Unsurprisingly, many of the highest MSE predictors are close to the origin, but the relationship between MSE and distance to the origin is far from monotonic. In all three panels, there is a line of predictors stretching from the OLS predictor that represent improvements in both disparities with minimal increase in MSE. In the bottom panel, for example, there are predictors with FPR-diff close to 0, FNR-diff under 0.05, and MSE under 0.10. These predictors approximately satisfy equalized odds, and they represent an increase in MSE of less than 0.03 relative to the OLS predictor.

In order to examine this more precisely, Table \ref{t:sim1_lowest_distance} shows the performance of the predictors with the minimum L2 distance from the origin in the fairness-accuracy subspace defined by MSE as well as zero to three disparities. For example, the ``MSE + rate'' row represents the predictor with the smallest $L_2$ norm in the (MSE, rate-diff) vector, while the ``MSE + rate + FPR + FNR'' row represents the predictor with the smallest $L_2$ norm in the (MSE, rate-diff, FPR-diff, FNR-diff) vector. FPR-diff and FNR-diff can be minimized, singly or jointly, with no increase in MSE relative to the OLS predictor. Rate-diff can be substantially reduced with a relatively small increase in MSE. Perhaps surprisingly, all three disparities can be jointly minimized, to 0.06 (rate-diff), 0.03 (FPR-diff), and 0.02 (FNR-diff), with only a 0.06 increase in MSE and a 0.02 decrease in AUC relative to the unpenalized OLS predictor.

\begin{figure}
    \centering
    \begin{subfigure}{0.49\textwidth}
        \includegraphics[width=\linewidth]{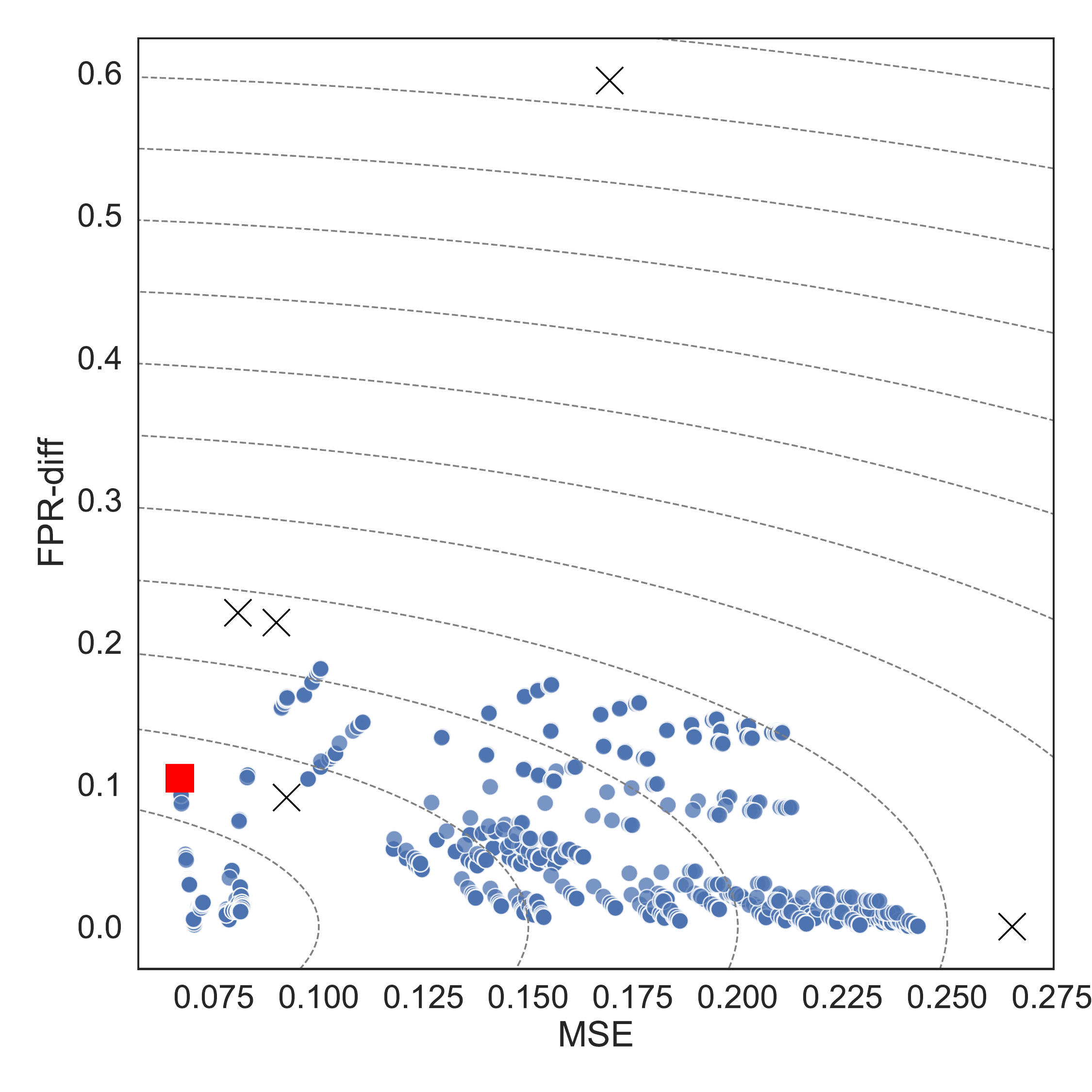}    
    \end{subfigure}
    \begin{subfigure}{0.49\textwidth}
        \includegraphics[width=\linewidth]{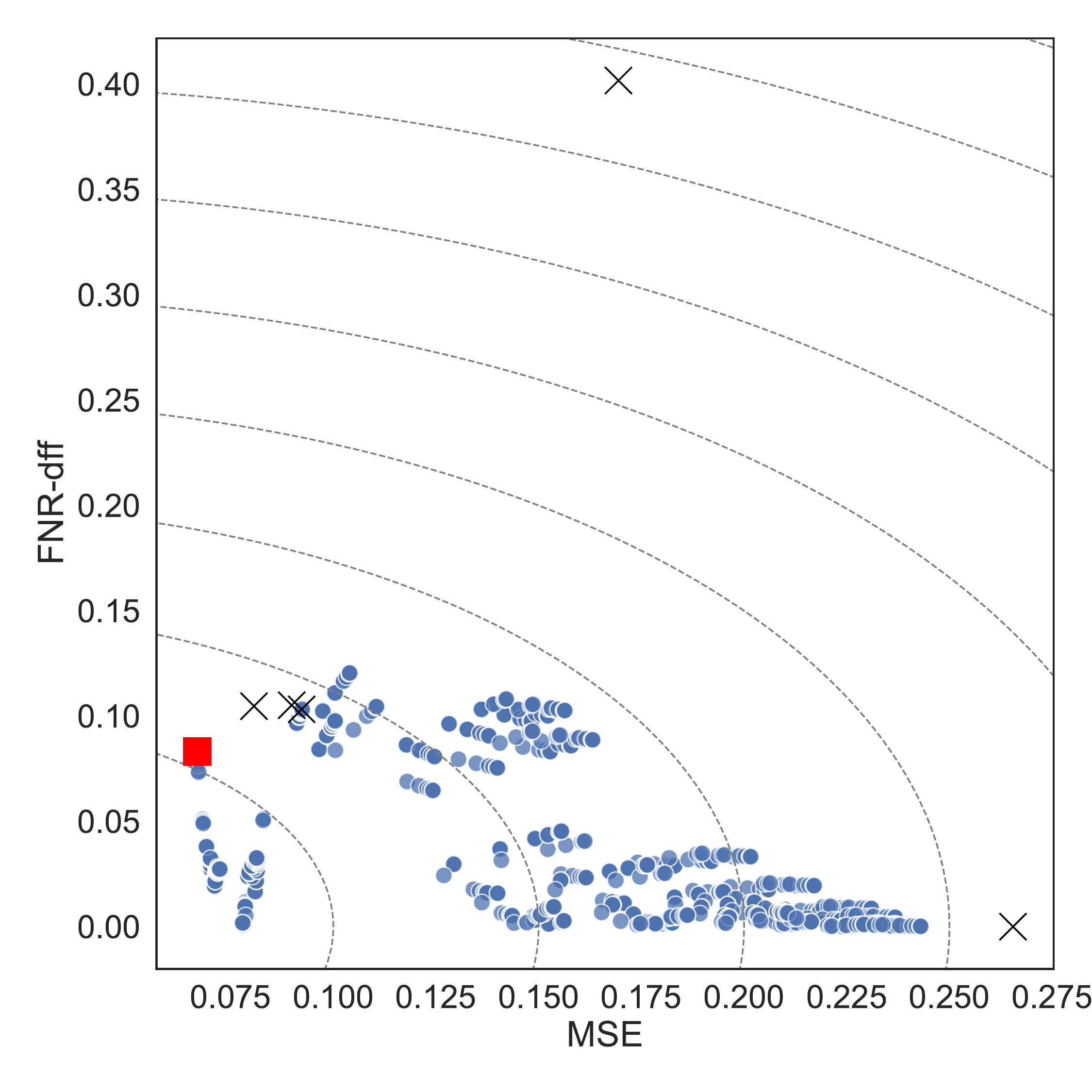}    
    \end{subfigure}
    \vspace{-0.5em}
    
    \begin{subfigure}{0.49\textwidth}
        \includegraphics[width=\linewidth]{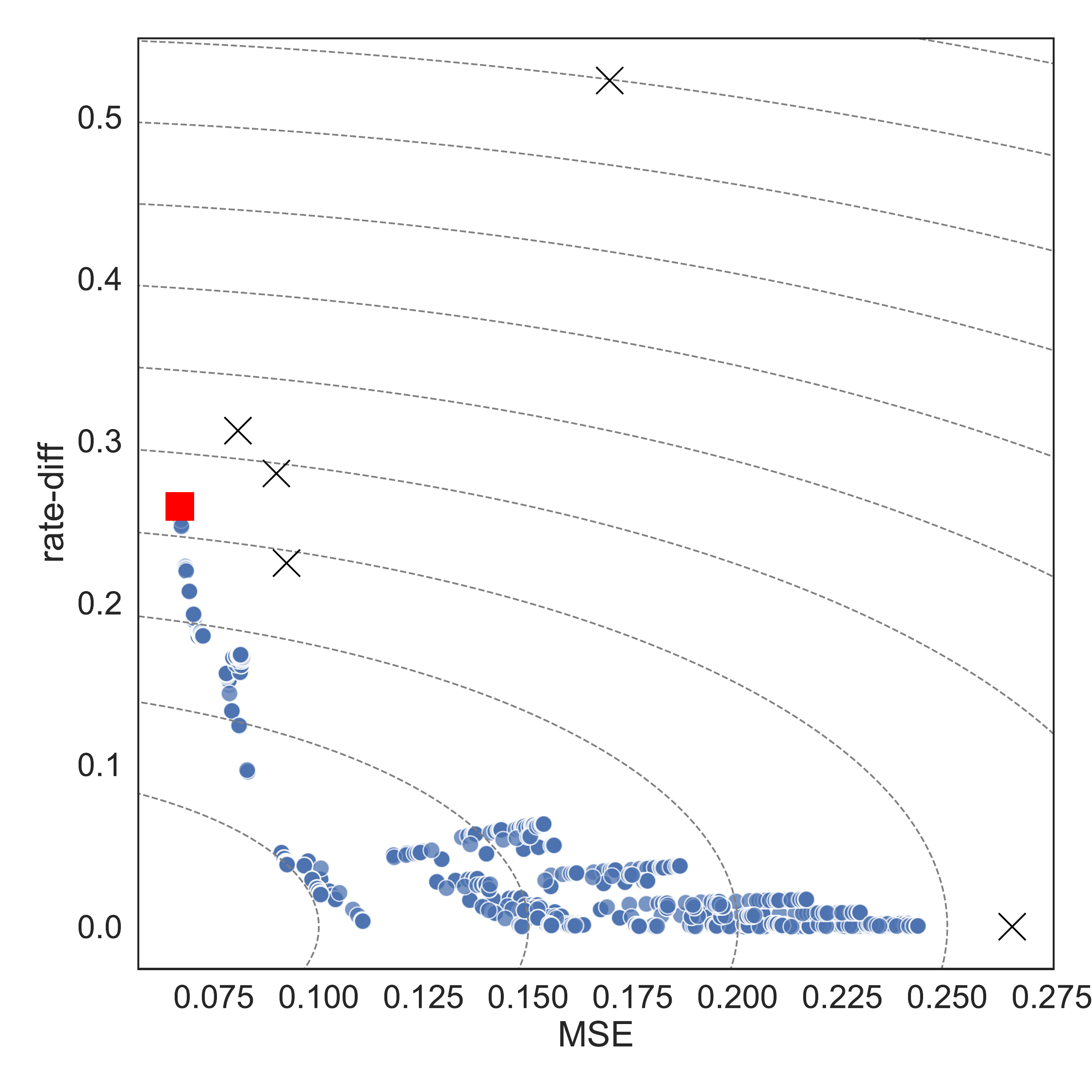}    
    \end{subfigure}
    \caption{Disparity and MSE values for each of 1331 predictors in the simulated data. Black ``X''s represent the base predictors, with the mean predictor at the bottom right of each panel. The red square is the OLS predictor, and the blue dots are the FADE predictors. Radius lines indicate distance from the origin. Despite substantial overlap, the predictors span a wide range of fairness and accuracy values. For each disparity, many predictors exist which take that disparity to 0, at a small cost in MSE relative to the OLS predictor.}
    \label{f:sim1_two_constraints_MSE}
\end{figure}

\begin{figure}
    \centering
    \begin{subfigure}{0.49\textwidth}
        \includegraphics[width=\linewidth]{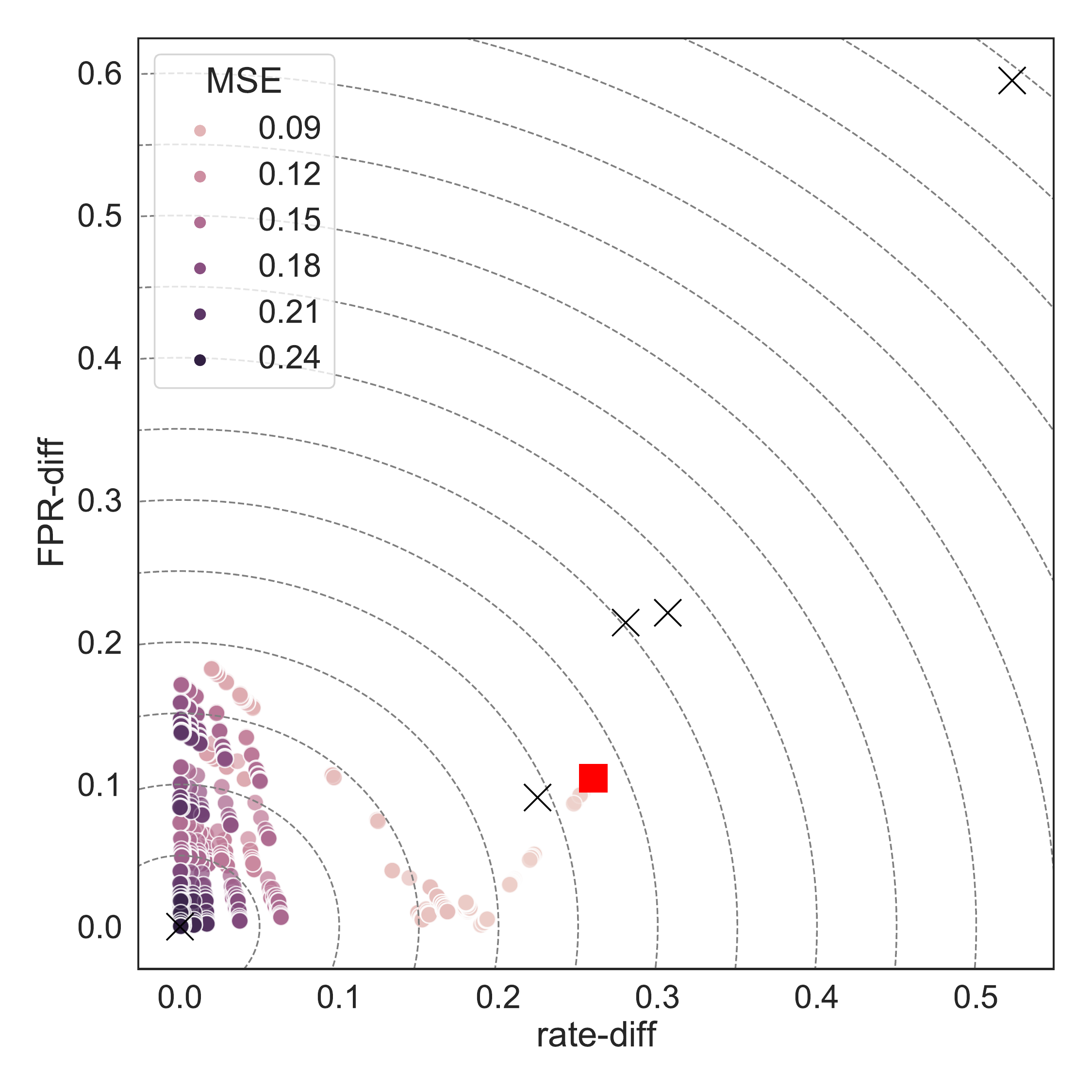}    
    \end{subfigure}
    \begin{subfigure}{0.49\textwidth}
        \includegraphics[width=\linewidth]{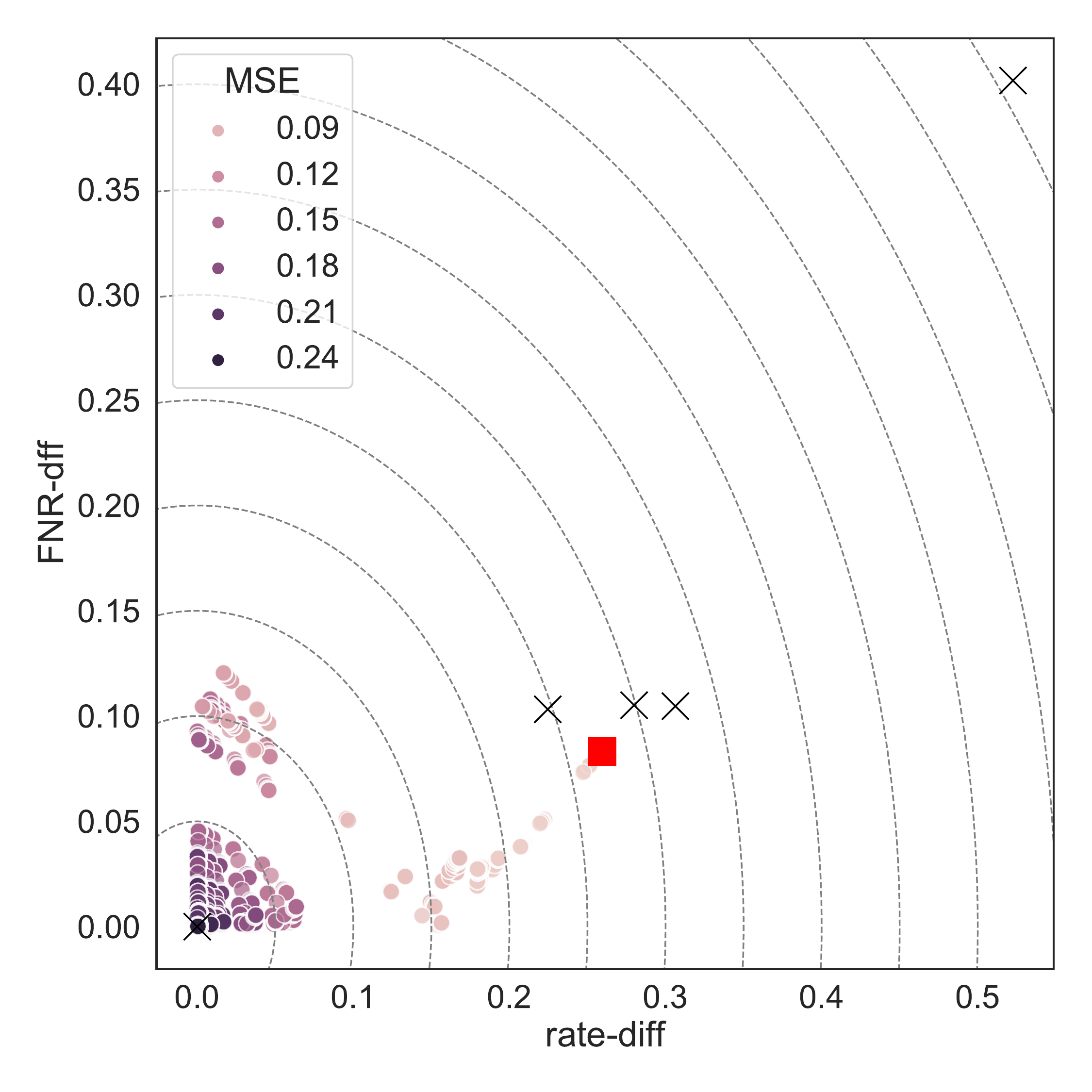}    
    \end{subfigure}
    \vspace{-0.5em}
    
    \begin{subfigure}{0.49\textwidth}
        \includegraphics[width=\linewidth]{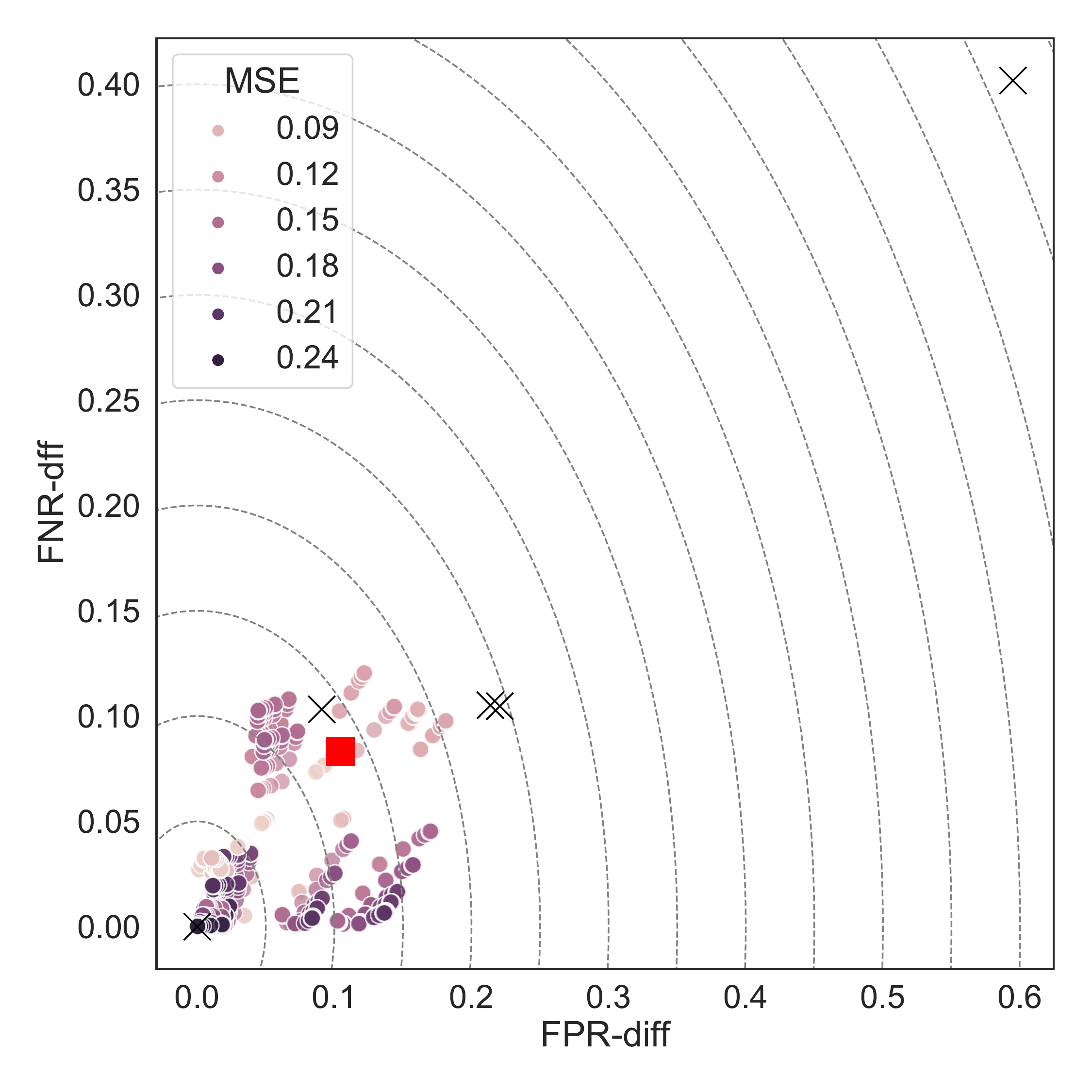}    
    \end{subfigure}
    \caption{Pairs of disparity values and MSE values for each of 1331 predictors in the simulated data. Black ``X''s represent the base predictors, with the mean predictor at the origin in each panel. The red square is the OLS predictor, and the dots are the FADE predictors. Radius lines indicate distance from the origin. Each pair of disparities can be jointly decreased with minimal increase in MSE relative to the OLS predictor.}
    \label{f:sim1_two_constraints_fairness}
\end{figure}

\begin{table}[ht]
    \centering
    \begin{tabular}{lrrrrr}
    \toprule
    Predictor  & MSE &  AUC &  rate-diff &  FPR-diff &  FNR-diff \\
    \midrule
    MSE (OLS) &       0.07 &       0.98 &       0.26 &       0.10 &       0.08 \\
    MSE + rate                       &       0.09 &       0.95 &       0.04 &       0.16 &       0.10 \\
    MSE + FPR                        &       0.07 &       0.98 &       0.19 &       0.00 &       0.03 \\
    MSE + FNR                        &       0.07 &       0.98 &       0.18 &       0.01 &       0.02 \\
    MSE + rate + FPR            &       0.12 &       0.90 &       0.04 &       0.05 &       0.09 \\
    MSE + rate + FNR            &       0.10 &       0.95 &       0.04 &       0.16 &       0.08 \\
    MSE + FPR + FNR             &       0.07 &       0.98 &       0.18 &       0.01 &       0.02 \\
    MSE + rate + FPR + FNR &       0.13 &       0.96 &       0.06 &       0.03 &       0.02 \\
    \bottomrule
    \end{tabular}
    \caption{Performance of the FADE predictors that minimize the Euclidean norm of MSE and zero or more disparities, in the simulated data. The top row represents the OLS predictor, included again for reference. All three disparities can be minimized, singly or jointly, with no impact or a small impact on MSE.}
    \label{t:sim1_lowest_distance}
\end{table}

\section{Results: recidivism risk prediction} \label{sec:compas}
We next illustrate FADE on the COMPAS dataset gathered by ProPublica \citep{Angwin2016, Angwin2016a}. The dataset comprises public arrest records, criminal records, and COMPAS scores from a single county in Florida, spanning 2013--2016. COMPAS is a collection of tools developed by the company Equivant (formerly Northpointe) designed to assess the risk of recidivism. We utilize the COMPAS scores for general, as opposed to violent, recidivism. The scores consist of risk deciles, coded 1-10, which we normalize to the range $[0.1, 1]$. COMPAS takes as input up to 137 features \citep{northpointe_practitioners_2015, rudin_age_2020}, which are unavailable in this dataest. We utilize just three features as covariates: an indicator for defendant age greater than 45, an indicator for defendant age less than 25, and the number of prior arrests, ranging from 0 to 29. Previous work has found that predictors trained using just these covariates perform similarly to COMPAS \citep{angelino_learning_2018}.

The sensitive feature is race, restricted to defendants who are coded African-American ($n = 3175$) or Caucasian ($n = 2013$). The decision variable $D$ represent pretrial release, with $D = 0$ if defendants are released and $D = 1$ if they are detained. The outcome of interest $Y^0$ is rearrest within two years, should a defendant be released pretrial. Since it difficult to assess the plausibility of the positivity and ignorability assumptions without consulting with domain experts, we conducted analyses in both the counterfactual and observable setting. The results and conclusions were largely the same, so we only include the counterfactual results.

We split the data into five datasets, each with approximately 1040 rows: $\datalearn$, $\datatrain^\text{nuis}$, $\datatrain^\text{target}$, $\datatest^\text{nuis}$, and $\datatest^\text{target}$. As base predictors, we used the four model types from the previous section as well as a logistic regression. We used random forest classifiers for the nuisance predictors in both the training and test data. Table \ref{t:compas_base_performance} gives the estimated performance of the five base predictors, COMPAS, and the OLS predictor, which spans COMPAS and the base predictors. Previous work found differences in a binarized version of COMPAS for both observable \citep{Angwin2016} and counterfactual \citep{mishler_modeling_2019} false positive vs false negative rates for African-American vs. Caucasian defendants. Those differences appear here in the generalized error rates. COMPAS also has a large rate disparity. Perhaps surprisingly, the base predictors all yield smaller disparities than COMPAS, even though they generally also have smaller MSE.

We compute FADE predictors using the same sets of penalty vectors $\Lambda$ as in the previous section. Figure \ref{f:compas_deciles_two_constraints} shows disparities and MSE values for all 1331 predictors. Most predictors fall within a narrow range of MSE values that also includes COMPAS, so the primary value of aggregation here is in reducing disparities. Nearly all the FADE predictors improve on COMPAS in terms of both risk and fairness. The top row of Figure \ref{f:compas_deciles_two_constraints} shows that all three disparities can be individually reduced to 0 with minimal cost in MSE relative to the OLS predictor, and the bottom row shows that these improvements also extend over pairs of disparities.

\begin{table}[ht]
    \centering
    \begin{tabular}{lrrrr}
    \toprule
    {} &  MSE &  rate-diff &  FPR-diff &  FNR-diff \\    \midrule
          Mean &       0.26 &       0.00 &       0.00 &       0.00 \\
 Random Forest &       0.28 &       0.05 &       0.03 &       0.02 \\
      Logistic &       0.22 &       0.06 &       0.06 &       0.00 \\
 GB Classifier &       0.23 &       0.06 &       0.01 &       0.05 \\
         Ridge &       0.22 &       0.05 &       0.05 &       0.00 \\
        COMPAS &       0.24 &       0.15 &       0.15 &       0.08 \\
        \midrule
 OLS &       0.22 &       0.09 &       0.09 &       0.05 \\
    \bottomrule
    \end{tabular}
    \caption{Estimated performance of the five base predictors, COMPAS, and the OLS predictor in the COMPAS dataset. The OLS weights are $[0.39,  0.12,  0.79,  0.10, -1.08, 0.93]$. The OLS predictor does not perform substantially better than the base predictors. COMPAS has different false positive and false negative rates for African-American vs Caucasian defendants, as well as a rate disparity. The base predictors all have smaller disparities than COMPAS, and--perhaps surprisingly--generally smaller MSE.}
    \label{t:compas_base_performance}
\end{table}

\begin{figure}
    \centering
    \begin{subfigure}{0.32\textwidth}
        \includegraphics[width=\linewidth]{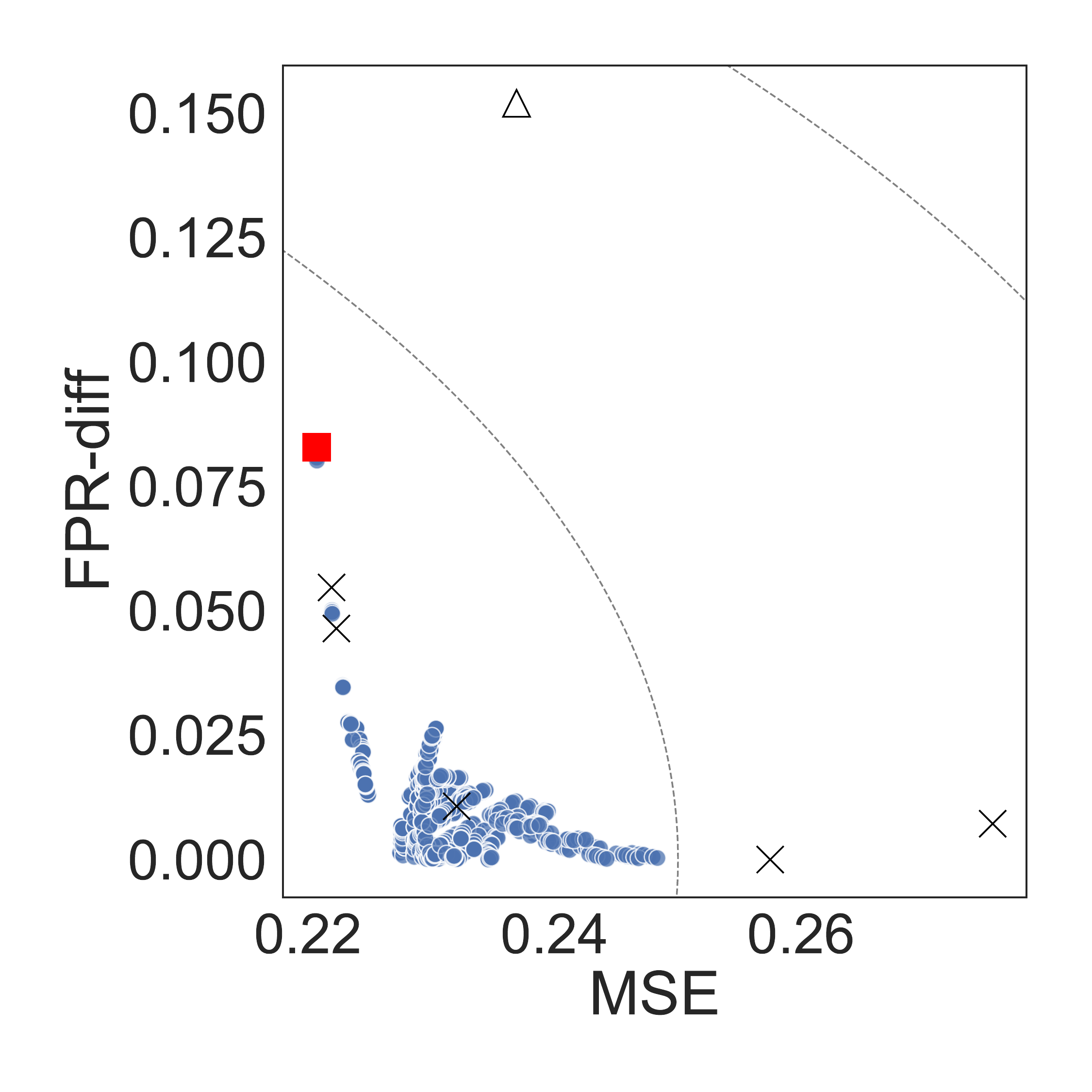}    
    \end{subfigure}
    \begin{subfigure}{0.32\textwidth}
        \includegraphics[width=\linewidth]{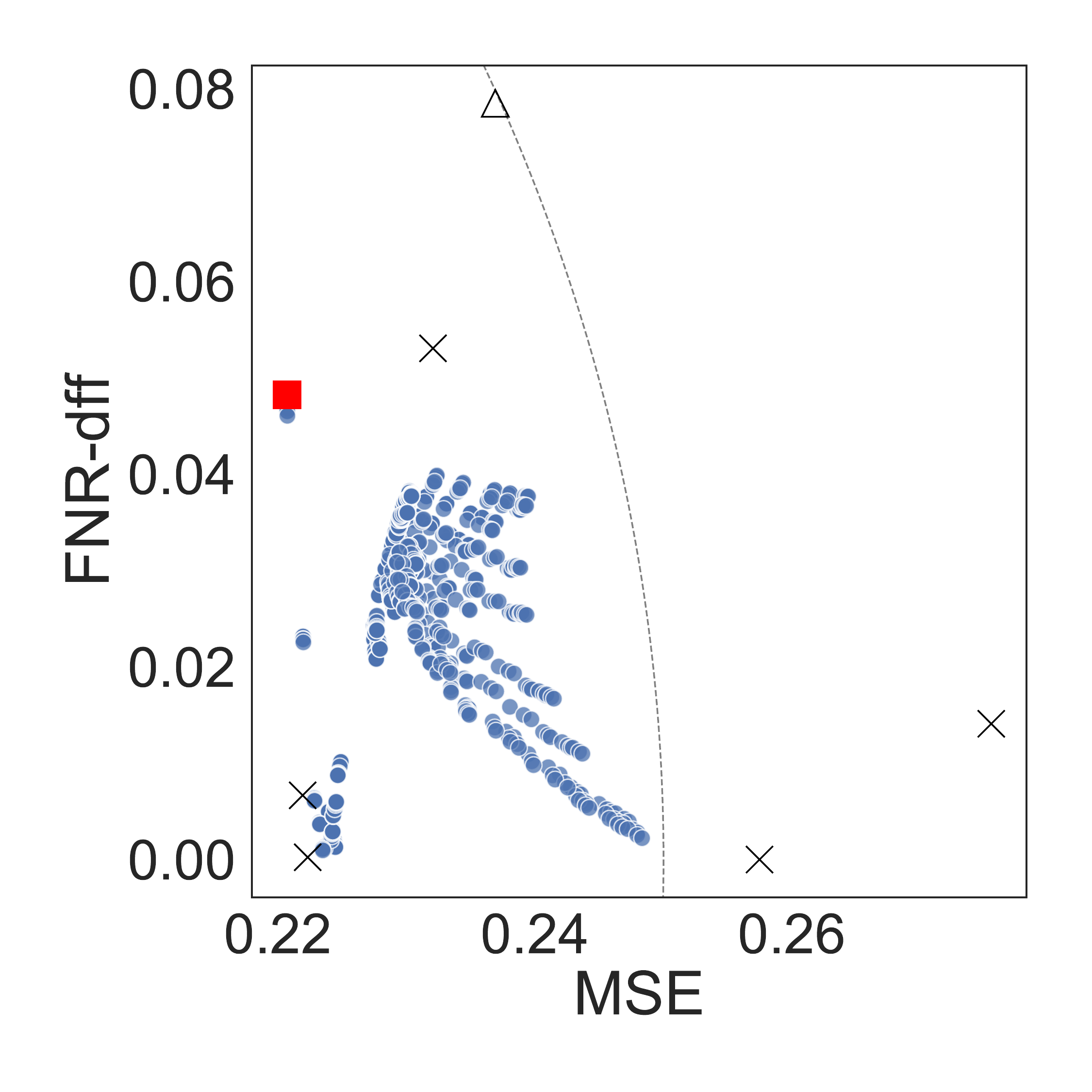}    
    \end{subfigure}
    \begin{subfigure}{0.32\textwidth}
        \includegraphics[width=\linewidth]{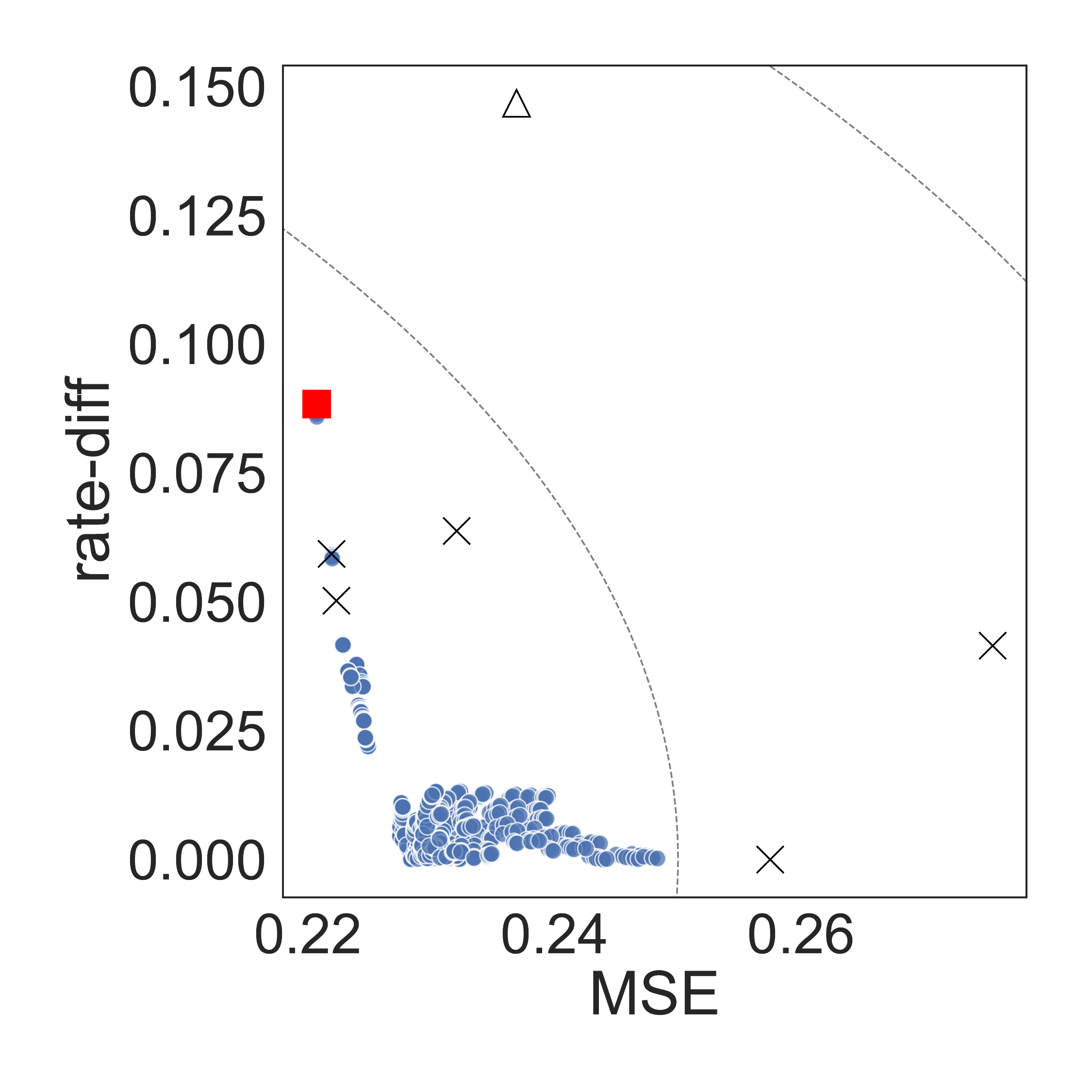}    
    \end{subfigure}
    \begin{subfigure}{0.32\textwidth}
        \includegraphics[width=\linewidth]{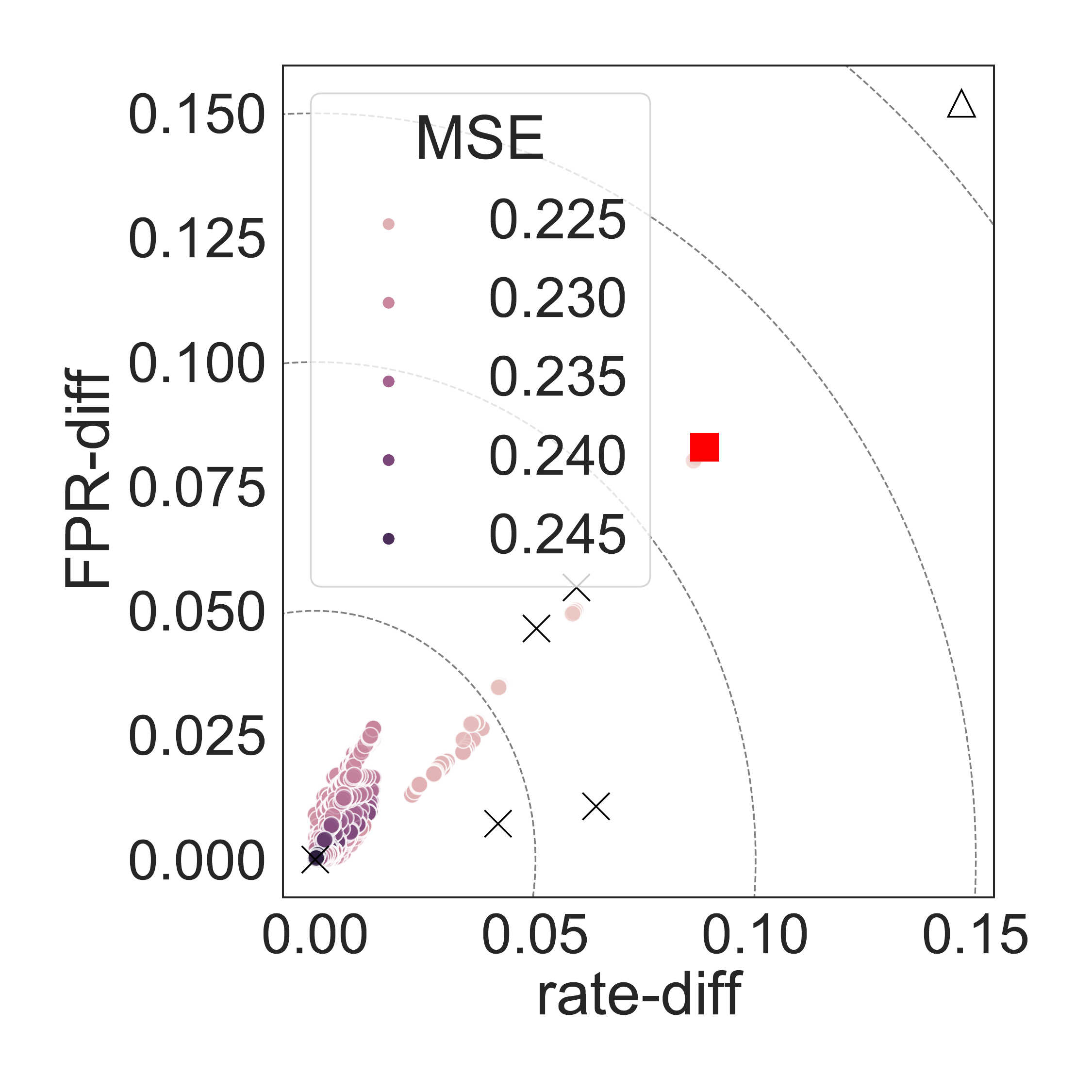}    
    \end{subfigure}
    \begin{subfigure}{0.32\textwidth}
        \includegraphics[width=\linewidth]{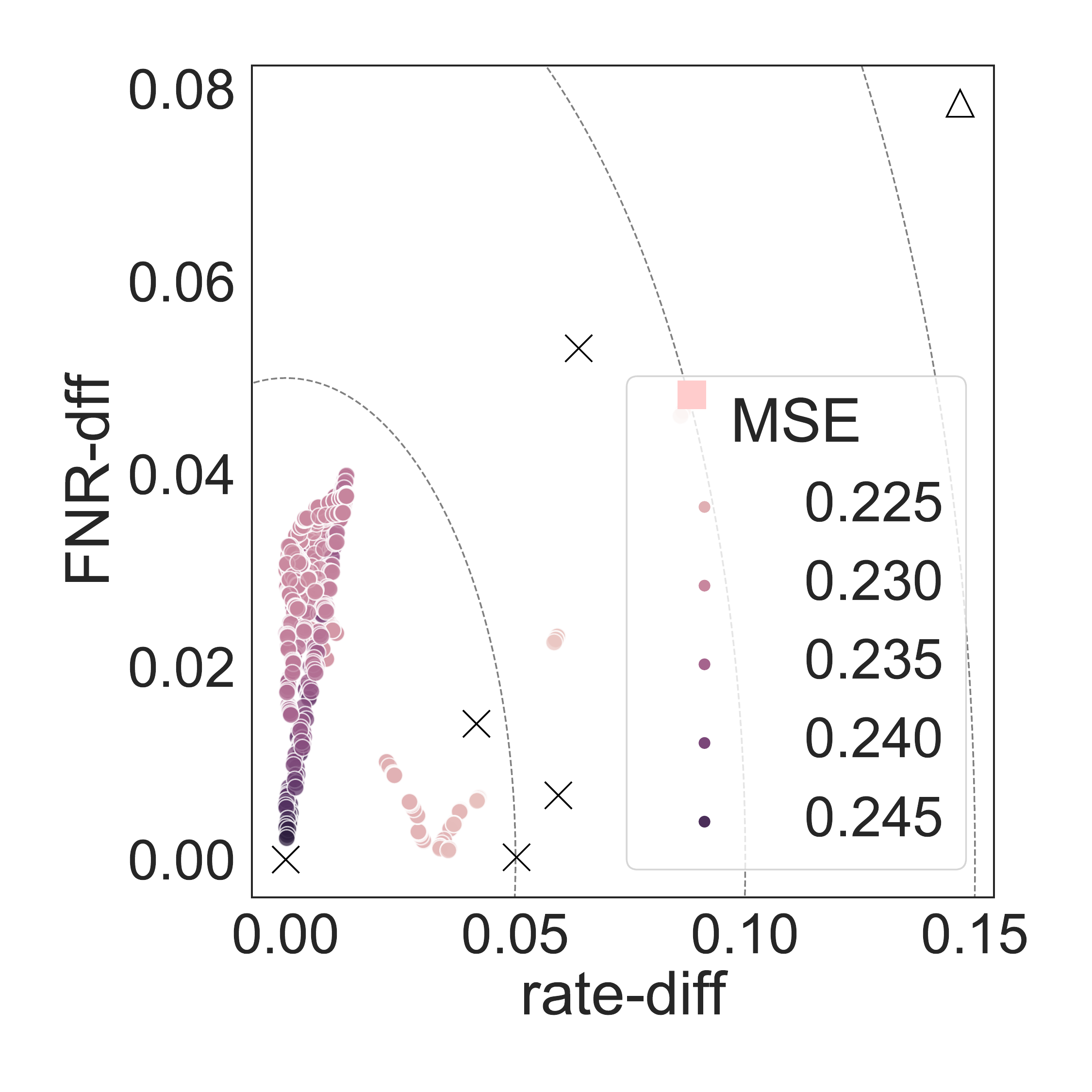}    
    \end{subfigure}
    \begin{subfigure}{0.32\textwidth}
        \includegraphics[width=\linewidth]{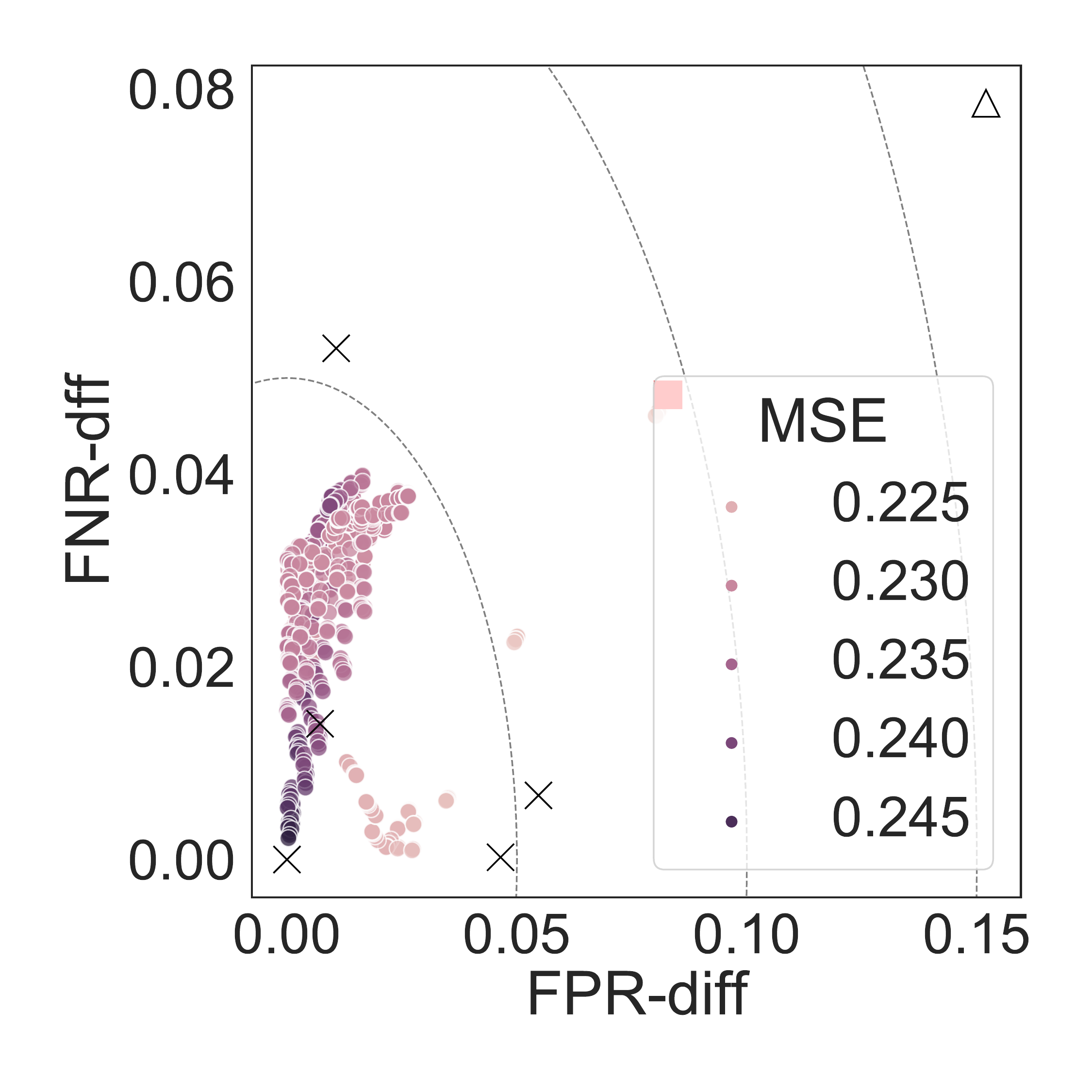}    
    \end{subfigure}
    \caption{Disparity against MSE (top row) or pairs of disparities colored by MSE (bottom row), for each of 1331 predictors in the COMPAS data. The black triangle represents COMPAS, the black ``X''s represent the other base predictors, the red square is the OLS predictor, and the blue dots are the FADE predictors. Radius lines indicate distance from the origin. Most of the predictors improve on COMPAS in terms of both MSE and the relevant disparity. For each disparity, many predictors exist which take that disparity to 0, at a small cost in MSE relative to the OLS predictor.}
    \label{f:compas_deciles_two_constraints}
\end{figure}

\section{Results: income prediction} \label{sec:adult}
Finally, we apply our method in the observable setting, using the Adult dataset \citep{Dua:2019}. This dataset comprises demographic variables derived from the 1994 U.S. Census. We consider sex as a sensitive feature, coded 0 or 1, and we utilize as covariates a set of indicator variables representing age by decade, and a set of indicator variables representing the number of years of education. The classification task is to predict whether an individual's income is over \$50K/year, for example for the purpose of deciding whether to issue a loan.

We randomly split the data into four datasets: $\datalearn$ and $\datatrain$, consisting of 14,653 and 14,652 rows; and $\datatest$ and $\datavalidate$, consisting of 9,768 and 9,769 rows. $\datavalidate$ is used to compare the performance of the selected best predictors to predictors that are developed using existing fairness methods.

This dataset does not contain any previously trained predictors. We use the same five base predictor types as in the COMPAS analysis. Additionally, we use $\datatrain$ to train three ``fair'' predictors with other fairness methods: \emph{adversarial debiasing} \citep{zhang_mitigating_2018}, \emph{reductions} \citep{agarwal_reductions_18a}, and a \emph{meta-algorithm} \citep{celis_classification_2020}. All predictors were trained using the Python library aif360, a set of tools that provide access to a range of fairness methods via a consistent interface \citep{bellamy_ai_2018}. The three chosen methods yield binary classifiers, and they are all designed to minimize \emph{rate-diff} in an observable setting with a binary outcome. Since, for a binary classifier $\widehat{Y}$, MSE is equal to classification error $\Pb(\widehat{Y} \neq Y)$, we may regard these three predictors as the result of methods that seek to minimize MSE among specific classes of binary predictors.

We construct FADE predictors using the five base predictors (``base5'') or the five base predictors and the three fair predictors (``base8''). Table \ref{t:adult_base_performance} gives the performance of the base predictors, the fair predictors, and the two OLS predictors. Compared to the base predictors, two of the three fairness methods result in a substantially lower rate-diff, which is the disparity they aim to minimize. The base predictors and the OLS predictors have lower MSE, higher AUC, and higher disparities compared to these two fair predictors.

\begin{table}[ht]
    \centering
    \begin{tabular}{lrrrrr}
    \toprule
             Predictor &  MSE &  AUC &  rate-diff &  FPR-diff &  FNR-diff \\
    \midrule
                Mean & 0.18 & 0.50 &       0.00 &      0.00 &      0.00 \\
       Random Forest & 0.14 & 0.81 &       0.19 &      0.14 &      0.24 \\
            Logistic & 0.14 & 0.81 &       0.20 &      0.14 &      0.25 \\
       GB Classifier & 0.14 & 0.82 &       0.19 &      0.13 &      0.24 \\
               Ridge & 0.14 & 0.81 &       0.17 &      0.13 &      0.16 \\
               \midrule
         Adversarial & 0.21 & 0.67 &       0.07 &      0.00 &      0.03 \\
          Reductions & 0.22 & 0.62 &       0.01 &      0.03 &      0.07 \\
                Meta & 0.30 & 0.68 &       0.18 &      0.26 &      0.26 \\
                \midrule
   OLS - base5 & 0.14 & 0.82 &       0.19 &      0.13 &      0.24 \\
 OLS - base8 & 0.14 & 0.81 &       0.20 &      0.14 &      0.25 \\
    \bottomrule
    \end{tabular}
    \caption{Estimated performance in the Adult dataset of five base predictors, three predictors trained using existing fairness methods, and the two OLS predictors, which aggregate only the five base predictors or all eight predictors. The OLS weights are $[0.03, 0.29, 0.65, 0.03, 0.01]$, for base5, and $[-0.01,  0.26,  0.56,  0.18, 0.04, -0.02, -0.03, 0]$, for base8. Only two of the three fairness methods successfully control their targeted disparity, rate-diff. The Meta predictor has a rate-diff which is comparable to the base predictors which are trained without regard to fairness. The OLS predictors perform comparably to the base predictors.}
    \label{t:adult_base_performance}
\end{table}

We compute FADE predictors using the same sets of penalty vectors $\Lambda$ as in the previous two sections. Figure \ref{f:compas_deciles_two_constraints} shows disparities and MSE values for all 1331 predictors. Fairness-accuracy tradeoffs are most evident for rate-diff and FPR-diff in the top row of this figure, where only the five base predictors are used. In the bottom row, where the fair predictors are included as basis functions, the tradeoffs essentially disappear: all three disparities can be reduced to 0 with virtually no cost in MSE.

\begin{figure}
    \centering
    \begin{subfigure}{0.31\textwidth}
        \includegraphics[width=\linewidth]{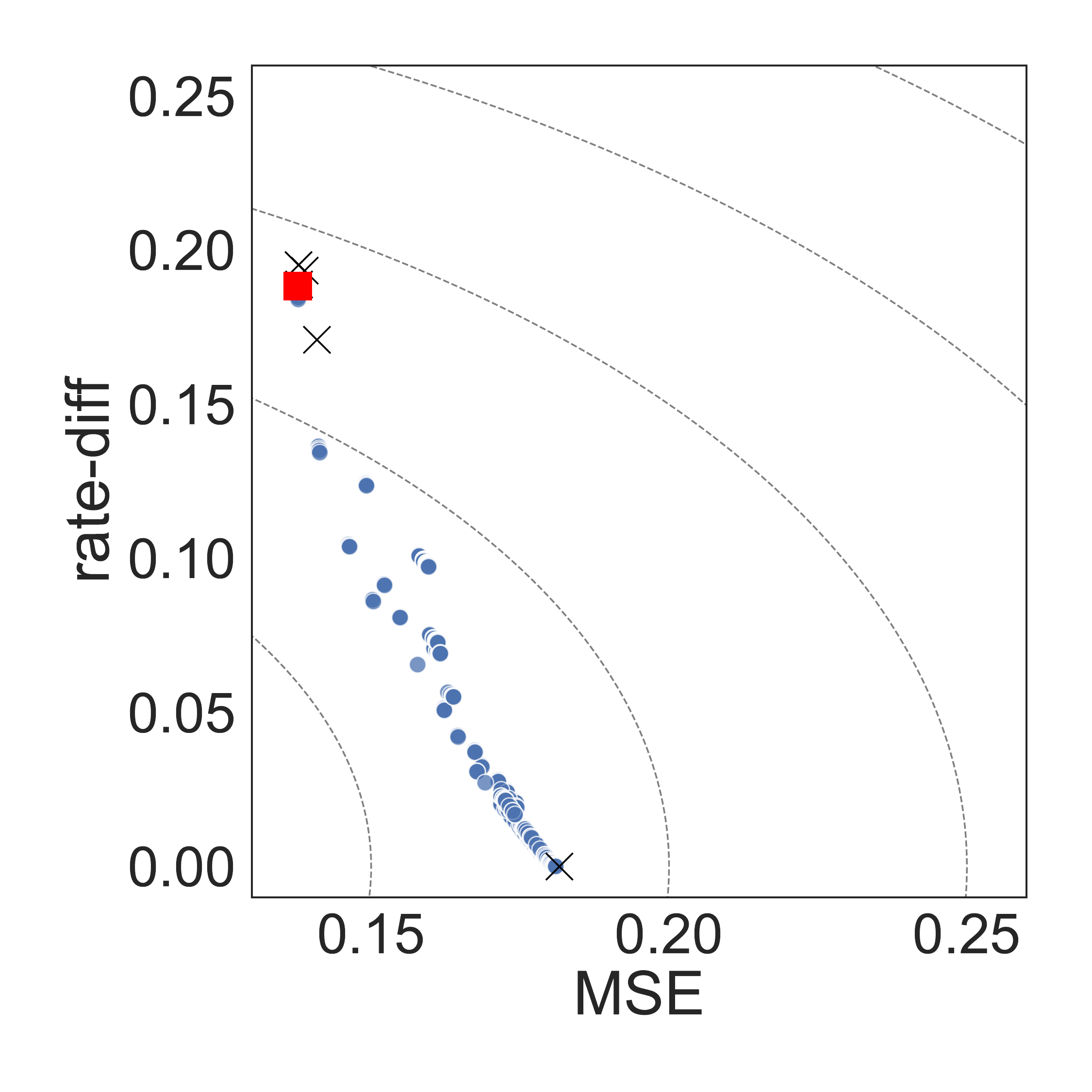}    
    \end{subfigure}
    \begin{subfigure}{0.31\textwidth}
        \includegraphics[width=\linewidth]{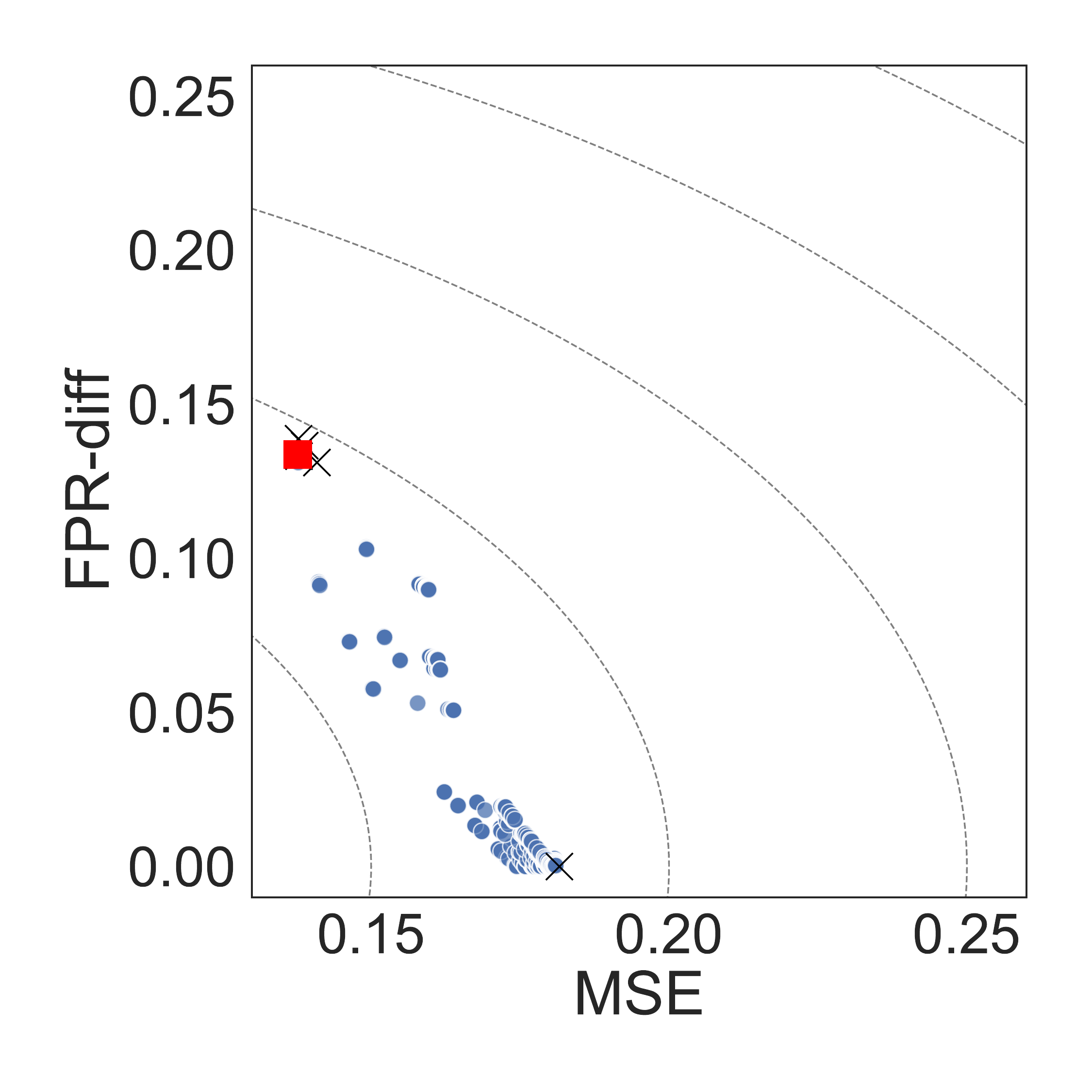}    
    \end{subfigure}
    \begin{subfigure}{0.31\textwidth}
        \includegraphics[width=\linewidth]{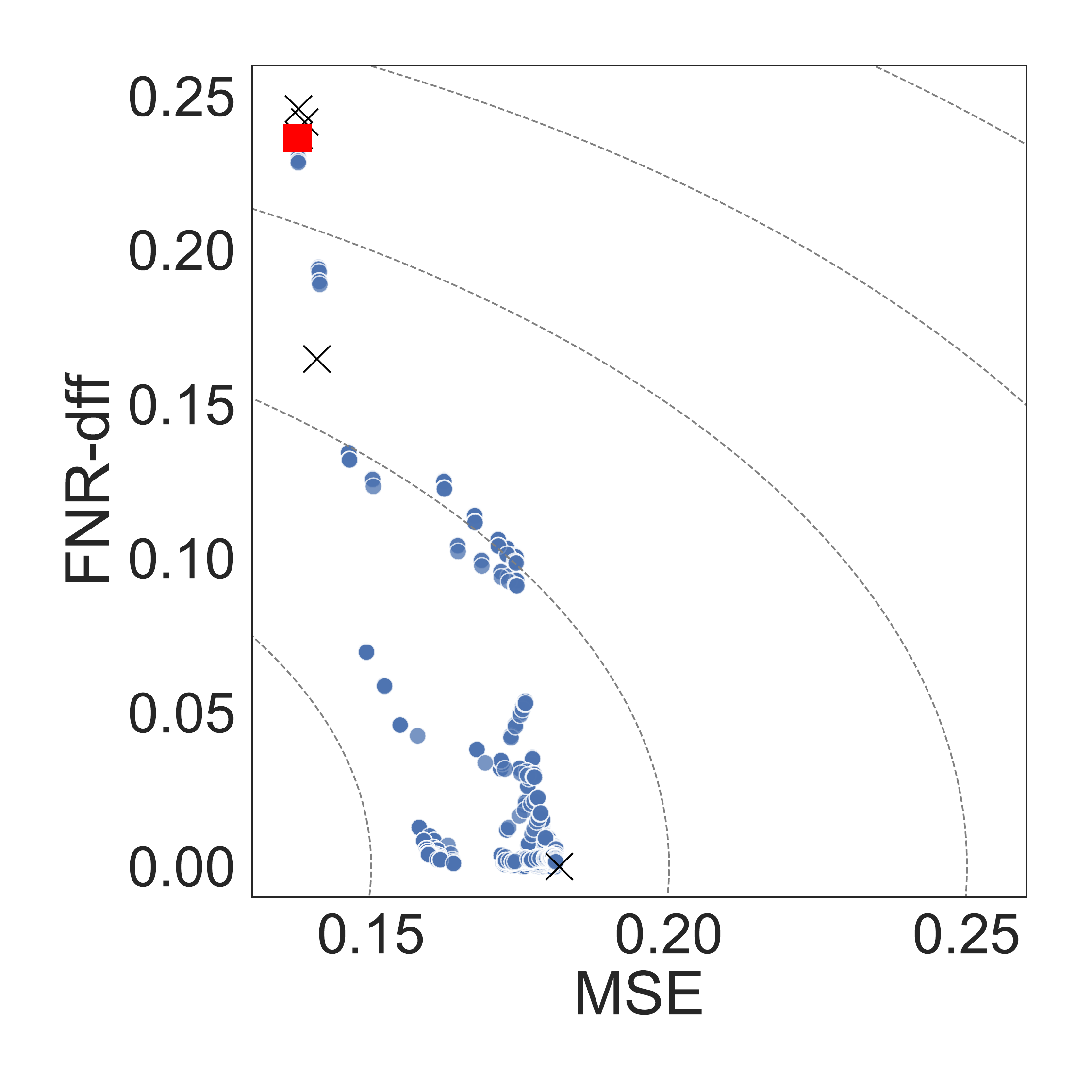}    
    \end{subfigure}
    \begin{subfigure}{0.31\textwidth}
        \includegraphics[width=\linewidth]{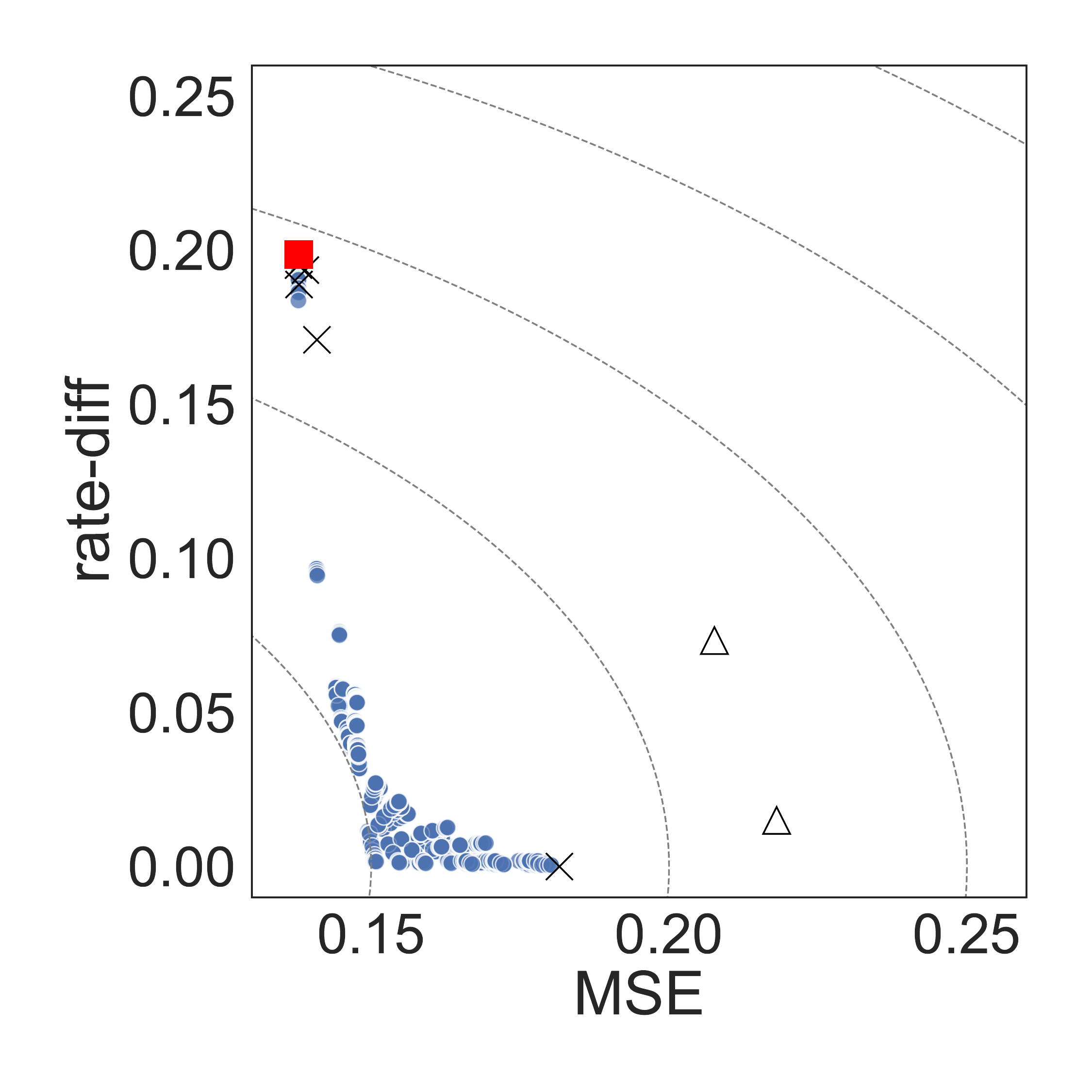}    
    \end{subfigure}
    \begin{subfigure}{0.31\textwidth}
        \includegraphics[width=\linewidth]{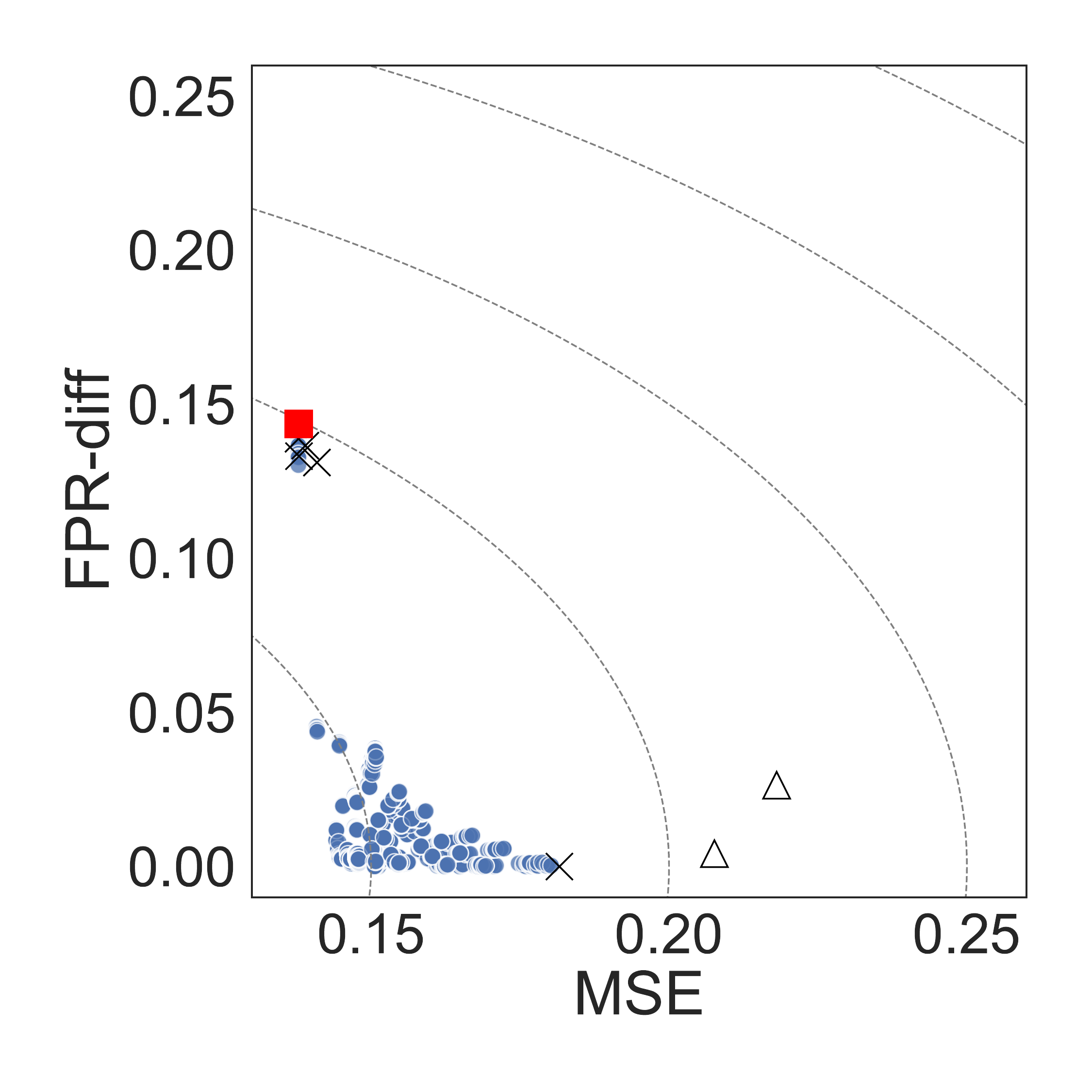}    
    \end{subfigure}
    \begin{subfigure}{0.31\textwidth}
        \includegraphics[width=\linewidth]{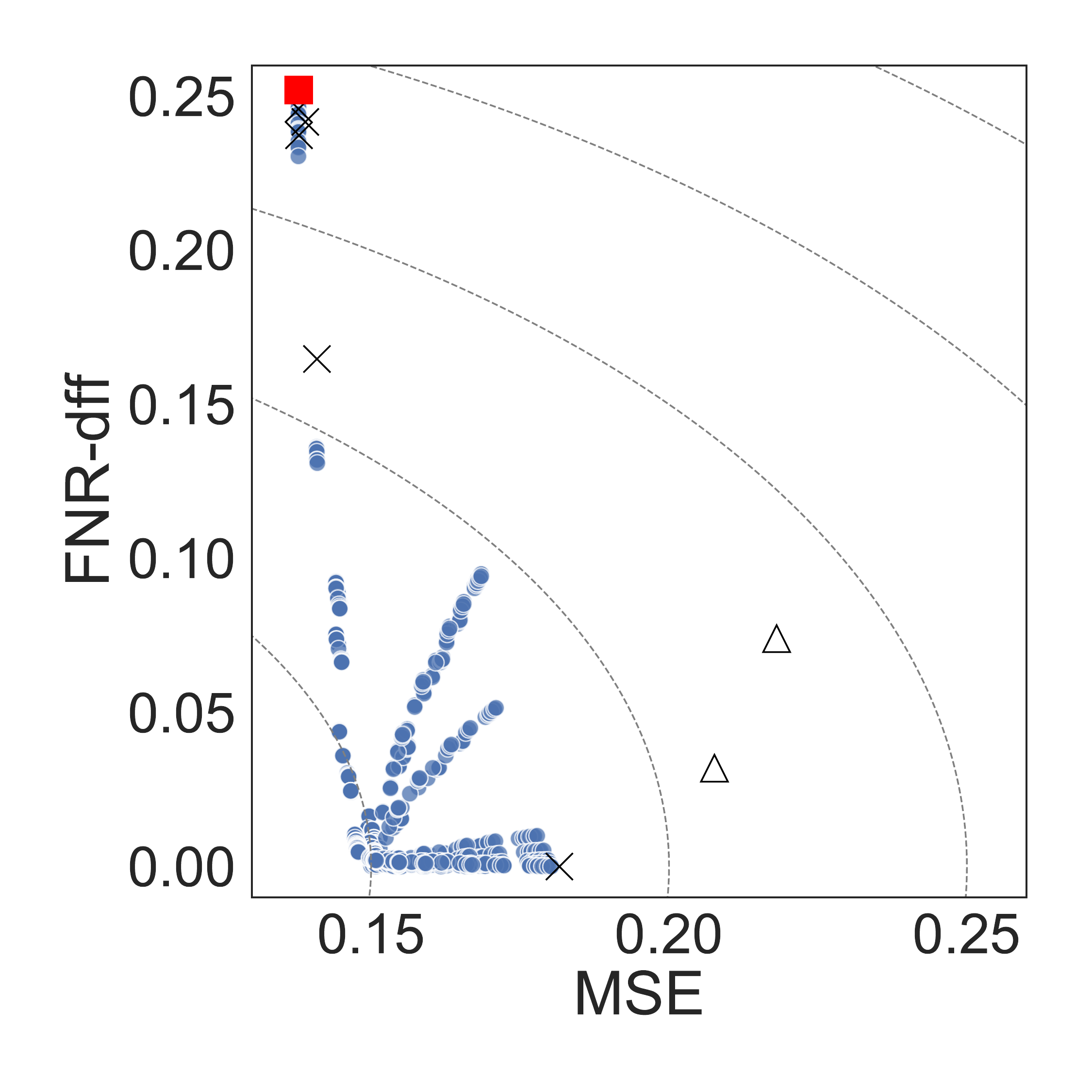}    
    \end{subfigure}
    \caption{Disparity against MSE for FADE predictors based on the five base predictors (top row) or the five base predictors plus the three fair predictors, for each of 1331 FADE predictors in the Adult data. Black ``X''s represent the base predictors, the red square is the OLS predictor, and the blue dots are the FADE predictors. Radius lines indicate distance from the origin. For the sake of legibility, the Meta predictor, which has an MSE of 0.30, is excluded. The top row exhibits small but clear fairness-accuracy tradeoffs for rate-diff and FPR-diff. The bottom row shows that with the inclusion of the fair predictors, each disparity can be taken to 0 with almost no cost in MSE relative to the OLS model.}
    \label{f:adult_binary_two_constraints}
\end{figure}

\subsection{Model validation} \label{subsec:model_validation}
Using the performance estimates from the test data, we select the FADE predictors that minimize the distance from the origin in each of seven fairness-accuracy subspaces, for both the base5 and base8 predictors. We then compute risk and fairness estimates for each of these predictors, as well as the three fair predictors, on the validation data (Table \ref{t:adult_validation}). The estimates on the test and validation data differed by no more than approximately 0.005.

Both the base5 and base8 FADE predictors are substantially more fair than the OLS predictors, while incurring very small increases in MSE. The high AUC values confirm that these are accurate predictors. All the FADE predictors have small disparities compared to the OLS predictors. Explicitly minimizing multiple disparities simultaneously is not necessarily more costly in terms of performance than minimizing a single disparity.

The FADE predictors have substantially lower MSE and higher AUC than the fair predictors, and they are in many cases more fair. The fair predictors achieve values of 0.08, 0.01, and 0.17 for rate-diff, the disparity they aim to minimize. The base5 FADE predictors that include rate-diff in their criteria achieve rate-diffs of 0.04, 0.04, 0.06, and 0.02. The corresponding base8 FADE predictors achieve rate-diffs of 0.01, 0.03, 0.01, and 0.03.

The base5 results show that FADE yields predictors that perform comparably to or better than existing fairness methods. The base8 results highlight the flexibility of our approach: multiple predictors can be aggregated, regardless of whether or not they are trained with fairness properties in mind, with different weights to target different disparities. In this case, including the fair predictors in the aggregation improves both accuracy and fairness.

Each of the fair prediction methods contains tuning parameters that can be adjusted to return different predictors, as well as settings that allow them to target different fairness constraints, such as equalized odds. However, each method can only target a single fairness constraint at once. Additionally, these methods can take substantial time to run. A single run of the Meta method took roughly 5 seconds, the Reductions method ran in approximately 15 seconds, and the Adversarial method, which relies on neural nets, took roughly a minute. By contrast, we were able to train the base predictors and compute and evaluate 1331 FADE predictors in 25 seconds.

The three fair predictors are binary by construction, whereas our method returns continuous predictors. Of course, these continuous predictors can be treated as binary, either by thresholding the output (yielding a deterministic classifier) or by treating the output as a probability and sampling from a corresponding Bernoulli distribution (yielding a randomized classifier). It is fast to compute estimates of the accuracy and fairness values from either of these two binarized classifiers and choose one which minimizes the criteria of interest. For example, the base5 FADE predictors include a predictor which, when thresholded at 0.5 to yield a deterministic binary classifier, achieves a classification error of 0.24 and disparities of 0 (to two digits). This has very slightly higher classification error than the Adversarial and Reductions predictors, but it exactly achieves equalized odds and demographic parity. 

\begin{table}[ht]
    \centering
    \begin{tabular}{llrrrrr}
    \toprule
    & Predictor &  MSE &  AUC &  rate-diff &  FPR-diff &  FNR-diff \\
    \midrule
    \multirow{7}{*}{base5}     & MSE (OLS-base5) & 0.14 & 0.82 &       0.19 &      0.13 &      0.24 \\
    & MSE + rate                       & 0.16 & 0.73 &       0.04 &      0.02 &      0.10 \\
    & MSE + FPR                        & 0.15 & 0.80 &       0.09 &      0.06 &      0.13 \\
    & MSE + FNR                        & 0.16 & 0.75 &       0.10 &      0.09 &      0.01 \\
    & MSE + rate + FPR            & 0.16 & 0.73 &       0.04 &      0.02 &      0.10 \\
    & MSE + rate + FNR            & 0.16 & 0.75 &       0.06 &      0.05 &      0.01 \\
    & MSE + FPR + FNR             & 0.16 & 0.75 &       0.06 &      0.05 &      0.01 \\
    & MSE + rate + FPR + FNR & 0.17 & 0.73 &       0.02 &      0.02 &      0.00 \\
    \midrule
    \multirow{7}{*}{base8} &  MSE (OLS-base8) & 0.14 & 0.81 &       0.20 &      0.14 &      0.25 \\
    & MSE + rate                       & 0.15 & 0.79 &       0.01 &      0.03 &      0.02 \\
    & MSE + FPR                        & 0.14 & 0.79 &       0.06 &      0.01 &      0.10 \\
    & MSE + FNR                        & 0.15 & 0.79 &       0.05 &      0.01 &      0.01 \\
    & MSE + rate + FPR            & 0.15 & 0.79 &       0.03 &      0.00 &      0.01 \\
    & MSE + rate + FNR            & 0.15 & 0.79 &       0.01 &      0.03 &      0.01 \\
    & MSE + FPR + FNR             & 0.15 & 0.79 &       0.04 &      0.00 &      0.01 \\
    & MSE + rate + FPR + FNR & 0.15 & 0.79 &       0.03 &      0.01 &      0.01 \\
    \midrule
    \multirow{3}{*}{fair} &  Adversarial & 0.21 & 0.67 &       0.08 &      0.01 &      0.04 \\
    & Reductions & 0.22 & 0.62 &       0.01 &      0.03 &      0.06 \\
    & Meta & 0.30 & 0.68 &       0.17 &      0.26 &      0.24 \\
    \bottomrule
    \end{tabular}
    \caption{Performance of the predictors that minimize the Euclidean norm of MSE and zero to three disparities, in the Adult data. The OLS predictors and the three fair predictors are included again for reference. Predictors are selected on the test data and evaluated on the validation data. The base5 FADE predictors aggregate the five base predictors, and the base8 FADE predictors aggregate all eight predictors. All three disparities can be minimized, singly or jointly, with only a small impact on MSE, and a small impact on AUC. Aggregated predictors are more accurate than the fair predictors, and have comparable or smaller values of rate-diff, the disparity that the fair predictors aim to minimize.}
    \label{t:adult_validation}
\end{table}

\section{Conclusion} \label{sec:conclusion}
We developed a framework, FAir Double Ensemble learning (FADE), for constructing fair predictors and for exploring fairness-accuracy and fairness-fairness tradeoffs. This framework is extremely flexible, allowing users to combine arbitrary sets of predictors, including previously trained predictors and newly trained ones, regardless of whether they are designed to satisfy fairness constraints or not. FADE thereby collapses the distinction between in-processing and post-processing approaches to building fair predictors. FADE can accommodate a wide range of disparities and allows users to minimize multiple disparities simultaneously. FADE also accommodates both observable and counterfactual outcomes, joining a very small set of existing methods for targeting counterfactual accuracy and counterfactual versions of fairness criteria like equalized odds.

Within the FADE framework, we developed three methods. The first two ``constrained'' methods allow users to minimize mean squared error subject to explicit fairness constraints, or minimize unfairness subject to an explicit constraint on the mean squared error. The third ``penalized'' method allows users to efficiently construct large sets of predictors and evaluate their risk and fairness properties. The penalized method enables users to efficiently explore fairness-accuracy and fairness-fairness tradeoffs in their problem setting, and it enables them to find a predictor with a favorable risk and fairness profile. Our theoretical results show that FADE predictors converge to optimal behavior at fast rates, and our empirical results show that in many cases, disparities can be substantially reduced with no tangible loss of accuracy--or even an increase in accuracy--relative to the unpenalized least squares solution or an existing benchmark predictor.

Although our penalized approach is designed to minimize mean squared error and to penalize certain classes of disparities, the resulting predictors can naturally be evaluated with respect to any accuracy or fairness metric. For example, users might wish to consider only binary classifiers, so they may wish to evaluate classification error on thresholded versions of the FADE predictors. The penalized approach provides a principled way to explore various fairness-accuracy spaces, even if the fairness and/or accuracy metrics of interest aren't explicitly represented in the penalized expression.

Finally, the efficiency of our penalized method relies on the particular closed form of the parameterized predictors, which arises as a result of the mean squared error and the squared fairness terms. However, any quadratic function that involves a positive definite matrix has a closed form solution. This form could be preserved under different accuracy metrics and fairness terms by, for example, adding a regularization term $\beta^T M \beta$ for some positive-definite matrix $M$. This suggests that our approach could be adapted to explicitly target other accuracy and/or fairness metrics.

\backmatter

\bmhead{Acknowledgments}
We are grateful to Alexandra Chouldechova, Aaditya Ramdas, Cosma Shalizi, Ilya Shpitser, and Larry Wasserman for comments on earlier versions of this work. This work was completed while Alan Mishler was a PhD student at Carnegie Mellon University.

\begin{appendices}

\section{Proof preliminaries} \label{appendix:preliminaries}
For convenience, we collect all the assumptions that appear in the paper, with their short descriptors:

\begin{align*}
    & \text{1. \-\ For all $n$, $\E[bb^T]$ is positive definite.} \tag{PSD outer product} \\
    & \text{2. \-\ Uniformly in $n$, } \sup_{w\in\calW} \Vert b(w) \Vert < \infty. \tag{Bounded basis norm} \\
    & \text{3. \-\ The set $\{\E[g_jb]\E[g_jb]^T\betaopt_r: j \in \mathcal{I}\}$ is linearly independent.} \tag{LICQ - population} \\
    & \text{4. \-\ } Y = DY^1 + (1-D)Y^0. \tag{Consistency} \\
    & \text{5. \-\ } \exists \delta \in (0, 1) \text{ s.t. } \Pb(\pi(W) \leq 1 - \delta) = 1. \tag{Positivity} \\
    & \text{6. \-\ } Y^0 \ind D \mid W. \tag{Ignorability} \\
    & \text{7. \-\ } \exists \gamma \in (0, 1) \text{ s.t. } \Pb(\pihat(A, X, \Rin) \leq 1 - \gamma) = 1. \tag{Bounded propensity estimator} \\
    & \text{8. \-\ } \Vert \pihat - \pi \Vert = o_\Pb(1) \text{ and } \Vert \muhat_0 - \mu_0 \Vert = o_\Pb(1) \text{ and } \Vert \nuhat_0 - \nu_0 \Vert = o_\Pb(1). \tag{Consistent nuisance estimators} \\
    & \text{9. \-\ } \Vert \pihat - \pi \Vert \Vert \muhat_0 - \mu_0 \Vert = o_\Pb(1/\sqrt{n}). \tag{Nuisance parameter rates} \\
    & \quad \-\ \Vert \pihat - \pi \Vert \Vert \nuhat_0 - \nu_0 \Vert = o_\Pb(1/\sqrt{n}). \\
    & \text{10. \-\ } \Lambda_n \subseteq \Lambda \subset \Rb^t \text{ for some compact } \Lambda. \tag{Compact $\Lambda$}
\end{align*}

Recall that for any function $f: \calZ \mapsto \Rb$, we defined $\Pn(f) = n^{-1}\sum_{j=1}^n f = \int f d\Pn(Z)$ and $\Pb(f) = \int f d\Pb(Z)$ as the sample and population expectations of $f$, so that for example $\Pb(\phihat) = \E[\phihat \mid \datatrain]$ or $\E[\phihat \mid \datatest]$ is the expected value of $\phihat(Z)$ once the relevant nuisance function estimate $\phihat$ has been constructed.

We state several lemmas that are used in the proofs for the constrained and penalized settings. The first is a restatement of Lemma 2 in \cite{kennedy_sharp_2020}.
\begin{lemma}[Kennedy, 2020] \label{LS_lemma:kennedy}  \hspace{0.25em}
Let $\fhat: \calZ \mapsto \Rb$ be a function estimated on a nuisance dataset $\data^\text{nuis}$ independent of $\Pn$, and let $f: \calZ \mapsto \Rb$ be another function. Assume $\var(\fhat - f \mid \data^\text{nuis}) < \infty$. Then
\begin{align*}
    (\Pn - \Pb)(\fhat - f) &= O_\Pb\left(\frac{\Vert \fhat - f \Vert}{\sqrt{n}}\right)
\end{align*}
\end{lemma}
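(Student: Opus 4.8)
The plan is to exploit the sample-splitting structure of the statement: since $\fhat$ is constructed on $\data^\text{nuis}$, which is independent of the sample underlying $\Pn$, I would condition on $\data^\text{nuis}$ and treat $\fhat - f$ as a fixed (non-random) function. Conditionally on the nuisance data, $(\Pn - \Pb)(\fhat - f)$ is then nothing more than a centered average of $n$ i.i.d.\ terms evaluated at a fixed function, for which the usual $1/\sqrt{n}$ scaling applies. The role of $\var(\fhat - f \mid \data^\text{nuis}) < \infty$ is precisely to make this conditional argument well-posed.

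The core of the argument is two conditional moment computations. First, reading $\Pb(\fhat)$ as the nuisance-conditional expectation $\E[\fhat(Z) \mid \data^\text{nuis}]$ per the conventions of the excerpt, we have $\E[\Pn(\fhat - f) \mid \data^\text{nuis}] = \Pb(\fhat - f)$, so the conditional mean of $(\Pn - \Pb)(\fhat - f)$ vanishes. Second, because $\Pn(\fhat - f)$ is an average of $n$ i.i.d.\ copies of $(\fhat - f)(Z)$ with $\data^\text{nuis}$ held fixed, and $\Pb(\fhat - f)$ is then a constant, I would bound the conditional second moment by
\[
\E\left[\left((\Pn - \Pb)(\fhat - f)\right)^2 \,\middle|\, \data^\text{nuis}\right] = \frac{1}{n}\var(\fhat - f \mid \data^\text{nuis}) \leq \frac{1}{n}\Vert \fhat - f \Vert^2,
\]
where the inequality replaces the conditional variance by the conditional second moment $\E[(\fhat - f)^2 \mid \data^\text{nuis}] = \Vert \fhat - f \Vert^2$, the squared $L_2(\Pb)$ norm.

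The final step converts this bound into the stated rate via conditional Chebyshev's inequality, which gives, for every $t > 0$,
\[
\Pb\left(\frac{\lvert(\Pn - \Pb)(\fhat - f)\rvert}{\Vert \fhat - f \Vert/\sqrt{n}} > t \,\middle|\, \data^\text{nuis}\right) \leq \frac{1}{t^2}.
\]
The key observation is that this bound is uniform in $\data^\text{nuis}$, so integrating over the nuisance data preserves it, yielding the same unconditional tail bound; this is exactly the assertion that the ratio is $O_\Pb(1)$, i.e.\ $(\Pn - \Pb)(\fhat - f) = O_\Pb(\Vert \fhat - f \Vert/\sqrt{n})$. I do not expect a genuine analytic obstacle here, since the result is essentially a clean application of sample splitting; the one point requiring care is the bookkeeping around the conditioning, namely consistently interpreting $\Pb(\fhat)$ as the nuisance-conditional expectation so that the mean-zero and variance identities hold, together with the harmless degenerate case $\Vert \fhat - f \Vert = 0$, in which the left-hand side is zero almost surely and the claim is trivial.
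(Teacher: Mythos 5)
Your argument is correct and is exactly the standard proof of this result: the paper itself gives no proof, deferring entirely to Lemma 2 of the cited Kennedy (2020) reference, whose proof is the same conditioning-on-$\data^\text{nuis}$ argument (conditional mean zero, conditional variance bounded by $\Vert \fhat - f\Vert^2/n$, conditional Chebyshev, then integration over the nuisance data using the uniformity of the tail bound). Your handling of the random normalizer $\Vert \fhat - f\Vert$ via the ratio formulation and of the degenerate case $\Vert \fhat - f\Vert = 0$ is also the right bookkeeping.
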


\begin{lemma}[Double robustness] \label{LS_lemma:double_robustness}
Let $f: \calW \mapsto \Rb^p$ for any $p$ be a function with $\Vert f(W) \Vert \leq M < \infty$ for some $M$. Under Assumption \ref{assumption:positivity} (positivity), 
\begin{align*}
    \Vert f(W)(\phihat - \phi) \Vert &= O_\Pb\left(\Vert \mu_0 - \muhat_0 \Vert \Vert \pihat - \pi \Vert\right) \\
    \Vert f(W)(\phibarhat - \phibar) \Vert &= O_\Pb\left(\Vert \nu_0 - \nuhat_0 \Vert \Vert \pihat - \pi \Vert\right)
\end{align*}
It follows immediately that
\begin{align*}
    \Pb\left(f(W)(\phihat - \phi)\right) &= O_\Pb\left( \Vert \mu_0 - \muhat_0 \Vert \Vert \pihat - \pi \Vert\right) \\
    \Pb\left(f(W)(\phibarhat - \phibar)\right) &= O_\Pb\left( \Vert \nu_0 - \nuhat_0 \Vert \Vert \pihat - \pi \Vert\right)
\end{align*}
\end{lemma}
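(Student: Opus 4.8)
The plan is to reduce all four bounds to a single \emph{second-order bias} identity for the pseudo-outcome $\phihat = \frac{1-D}{1-\pihat}(Y - \muhat_0) + \muhat_0$ and its analogue $\phibarhat$. Throughout I would condition on the nuisance sample $\data^\text{nuis}$, so that $\pihat, \muhat_0, \nuhat_0$ act as fixed functions and the only remaining randomness is in $Z = (A, X, S, D, Y)$. The central object is the conditional bias $\E[\phihat - \phi \mid W, \data^\text{nuis}]$, which I claim factorizes into a product of the propensity error and the outcome-regression error; once this is in hand, boundedness of $f$ and of $1/(1-\pihat)$ together with a single application of Cauchy--Schwarz deliver the stated product rates.

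The crux is the identity $\E[\phihat - \phi \mid W, \data^\text{nuis}] = (\mu_0 - \muhat_0)\frac{\pihat - \pi}{1-\pihat}$. To obtain it I would use the general fact that, for any $W$-measurable $c$, consistency (Assumption \ref{assumption:consistency}) gives $(1-D)Y = (1-D)Y^0$ and ignorability (Assumption \ref{assumption:ignorability}) gives $Y^0 \ind D \mid W$, so that $\E[(1-D)(Y - c) \mid W] = (1-\pi)(\mu_0 - c)$, where $\mu_0 = \E[Y^0 \mid W]$ by the same two assumptions. Taking $c = \muhat_0$ gives $\E[\phihat \mid W] = \frac{1-\pi}{1-\pihat}(\mu_0 - \muhat_0) + \muhat_0$, and $c = \mu_0$ gives $\E[\phi \mid W] = \mu_0$; subtracting and using $\frac{1-\pi}{1-\pihat} - 1 = \frac{\pihat - \pi}{1-\pihat}$ produces the factorization. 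Positivity (Assumption \ref{assumption:positivity}) ensures $1 - \pi > 0$ so that $\phi$ is well defined, while the bound $1/(1-\pihat) \le 1/\gamma$ (Assumption \ref{assumption:bounded_propensity_estimator}) controls the multiplier. The identical argument with $Y^2$ in place of $Y$ and $\nu_0$ in place of $\mu_0$ — now invoking $(1-D)Y^2 = (1-D)(Y^0)^2$ and $(Y^0)^2 \ind D \mid W$ — yields $\E[\phibarhat - \phibar \mid W, \data^\text{nuis}] = (\nu_0 - \nuhat_0)\frac{\pihat - \pi}{1-\pihat}$.

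With the factorization established the bounds follow directly. By the tower property $\Pb(f(W)(\phihat - \phi)) = \E\!\left[ f(W)(\mu_0 - \muhat_0)\frac{\pihat - \pi}{1-\pihat} \right]$; bounding $\Vert f \Vert \le M$ and $1/(1-\pihat) \le 1/\gamma$ pointwise and applying Cauchy--Schwarz to the product of the two error functions yields $O_\Pb(\Vert \mu_0 - \muhat_0 \Vert\,\Vert \pihat - \pi \Vert)$, and the $\phibar$ bound is identical after replacing $\mu_0$ by $\nu_0$. The product rate originates entirely in the conditional-bias factorization, so the norm and expectation displays in the statement are two expressions of this same bound, linked by $\Vert \Pb(h) \Vert \le \Pb(\Vert h \Vert)$. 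The only genuine obstacle is the identity of the second paragraph: consistency collapses the observed outcome onto the potential outcome and ignorability strips out the dependence on $D$, so that all the treatment randomness is absorbed into $(1-\pi)$ factors and the two first-order error terms cancel, leaving exactly the product that makes the estimator doubly robust. Everything downstream is boundedness plus one Cauchy--Schwarz step.
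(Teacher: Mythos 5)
Your proof takes the same route as the paper's: condition on $W$ via iterated expectation to collapse the bias of $\phihat$ into the product $(\mu_0 - \muhat_0)(\pihat - \pi)/(1-\pihat)$, then use boundedness of $f$ and of the inverse-propensity factor and finish with Cauchy--Schwarz. For the two \emph{expectation} displays this is correct, and in one respect you are more careful than the paper: the denominator in the factorization really is $1-\pihat$ (controlled by Assumption \ref{assumption:bounded_propensity_estimator}), whereas the paper's proof writes $1-\pi$ and invokes positivity, which appears to be a slip. One small economy you miss: consistency and ignorability are not needed for the identity $\E[(1-D)(Y-c)\mid W] = (1-\pi)(\mu_0 - c)$, because $\mu_0$ is \emph{defined} as the observational regression $\E[Y\mid W, D=0]$, so the identity is just the tower property; this is why the lemma lists only positivity among its hypotheses.

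The one place your argument does not go through is the first display, the bound on the $L_2(\Pb)$ norm $\Vert f(W)(\phihat - \phi)\Vert$. You assert that the norm and expectation displays are two expressions of the same bound, ``linked by $\Vert \Pb(h)\Vert \le \Pb(\Vert h\Vert)$,'' but that inequality runs in the wrong direction: it derives the expectation bound from the norm bound, not conversely. Your conditional-bias factorization controls only $\E[\phihat - \phi \mid W]$, whereas the random variable $\phihat - \phi$ itself decomposes as $(1-D)\frac{(Y-\mu_0)(\pihat-\pi)}{(1-\pihat)(1-\pi)} + (\mu_0 - \muhat_0)\frac{\pihat - D}{1-\pihat}$, and the second term has $L_2$ norm of first order $\Vert\mu_0 - \muhat_0\Vert$ rather than the product, so the norm display cannot be obtained by averaging over $D$ and $Y$. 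To be fair, the paper's own proof has exactly the same omission --- it only ever manipulates $\Pb\bigl(f(W)(\phihat - \phi)\bigr)$ --- so on the bounds that the displayed proof actually establishes, your argument matches the paper's.
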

\begin{proof}
\begin{align*}
    \Pb\left(f(W)(\phihat - \phi)\right) &=\Pb\left(f(W)\left(\frac{1-D}{1-\pihat}(Y-\muhat_0) + \muhat_0 - \frac{1-D}{1-\pi}(Y-\mu_0) - \mu_0\right)\right) \\
     &= \Pb\left(f(W)\left(\frac{1-D}{1-\pihat}(\mu_0-\muhat_0) + \muhat_0 - \frac{1-D}{1-\pi}(\mu_0-\mu_0) - \mu_0\right)\right) \\
     &= \Pb\left(f(W)\left(\frac{1-\pi}{1-\pihat}(\mu_0-\muhat_0) + \muhat_0 - \mu_0\right)\right) \\
     &= \Pb\left(f(W)\left(\frac{(\mu_0-\muhat_0)(\pihat-\pi)}{1-\pi}\right)\right) \\
     &\leq \frac{1}{\delta}\Pb(f(W)(\mu_0-\muhat_0)(\pihat-\pi)) \\
     &\leq \frac{1}{\delta}\Vert f(W)\Vert \Vert \mu_0 - \muhat_0 \Vert \Vert \pihat - \pi \Vert \\
     &= O_\Pb\left( \Vert \mu_0 - \muhat_0 \Vert \Vert \pihat - \pi \Vert\right)
\end{align*}
where the second and third lines use iterated expectation, conditioning on $W$; the fifth line uses Assumption \ref{assumption:positivity} (positivity); and the sixth line uses the Cauchy-Schwarz inequality.
\end{proof}

\section{Proofs of propositions} \label{appendix:propositions}
\subsection{Proof of Proposition \ref{proposition:fairness_functions}}
\begin{proof}
Let $\alpha_0, \alpha_1 \in  \Rb$, let $h_0, h_1$ be mappings from $\{0, 1\} \times \Yt$ to $\{0, 1\}$, and let $g(W, \Yt) = \alpha_0\frac{h_0(A, \Yt)}{\E[h_0(A, \Yt)]} - \alpha_1\frac{h_1(A, \Yt)}{\E[h_1(A, \Yt)]}$. Then
\begin{align*}
    \E[f(W)h_0(A, \Yt)] &= \E[\E[f(W)h_0(A, \Yt) \mid h_0(A, \Yt)]] \\
                        &= \E[f(W) \mid h_0(A, \Yt) = 1]\Pb(h_0(A, \Yt) = 1) \\
                        &= \E[f(W) \mid h_0(A, \Yt) = 1]\E[h_0(A, \Yt)] \\
    \implies \E[f(W) \mid h_0(A, \Yt) = 1] &= \frac{\E[f(W)h_0(A, \Yt)]}{\E[h_0(A, \Yt)]}
\end{align*}
where $\E[h_0(A, \Yt)] > 0$ by assumption. By similar reasoning for $h_1$, it follows that
\begin{align*}
    \lvert\alpha_0\E[f(W) \mid h_0(A, \Yt) = 1] - \alpha_1\E[f(W) \mid h_1(A, \Yt) = 1] \rvert &= \lvert\E[g(W, \Yt)f(W)]\rvert
\end{align*}
as desired.
\end{proof}

\subsection{Proof of Proposition \ref{proposition:constrained_to_lagrange}}
\begin{proof}
Define the Lagrangian $L(\beta, v)$ of the risk-min program that defines $\betaopt_r$:
\begin{align*}
    L(\beta, v) &= \E[(b^T\beta - \Yt)^2] + \sum_{j=1}^t v_j \left\{\left(\E[g_j b^T\beta]\right)^2 - \epsilon_j^2\right\}
\end{align*}
Under Assumption \ref{assumption:psd}, the risk-min program is a strictly convex quadratic program, and it is feasible by construction. Therefore, the dual program has at least one solution $\lambda \in \Rpos^t$, and strong duality holds, so the primal solution $\betaopt_r$ and $\lambda$ jointly satisfy the KKT conditions. In particular, $\nabla_\beta L(\betaopt_r, \lambda) = 0$ (stationarity). By computing $\nabla_\beta L(\beta, u)$, we see that this equality holds iff
\begin{align*}
    \betaopt_r &= \left(\E[bb^T] + \sum_{j=1}^t \lambda_j \E[g_j b]\E[g_j b]^T\right)^{-1}\E[\Yt b] \\
               &= \betaopt_\lambda
\end{align*}
If Assumption \ref{assumption:licq} holds, then $\lambda$ is unique by Theorem 2 in \cite{wachsmuth_licq_2013}.
\end{proof}

\subsection{Proof of Proposition \ref{proposition:lagrange_to_constrained}}
\begin{proof}
Assumption \ref{assumption:psd} again ensures that $\betaopt_\lambda$ exists and is unique. The KKT conditions for the risk-min program are satisfied by setting $\epsilon_j^2 = (\Pn[\ghat_j b^T\betahat_\lambda])^2$, which then implies that
\begin{alignat*}{2}
    & \betaopt_\lambda = && \argmin_{\beta\in \Rb^k}\E[(b^T\beta - \Yt)^2] \\
    & && \text{subject to } \-\ (\E[g_j b^T\beta])^2 \leq (\Pn[\ghat_j b^T\betahat_\lambda])^2, \quad j = 1, \ldots t \\
              & = && \betaopt_r
\end{alignat*}
\end{proof}

\subsection{Proof of Proposition \ref{proposition:identification}}
\begin{proof}
Beginning with the risk, note that
\begin{align*}
    \var(Y^0) &= \E[(Y^0)^2] - (\E[Y^0])^2 \\
              &= \E\{\E[(Y^0)^2 \mid W]\} - (\E\{\E[Y^0 \mid W]\})^2 \\
              &= \E\{\E[Y^2 \mid W, D = 0]\} - (\E\{\E[Y \mid W, D = 0]\})^2 \\
              &= \E[\nu_0] - (\E[\mu_0])^2
\end{align*}
where the second line uses iterated expectation and the third line follows from the consistency and ignorability assumptions. We then have
\begin{align*}
    \E[(f - Y^0)^2] &= \E\{\E[f^2 - 2Y^0 + (Y^0)^2 \mid W]\} \\
                    &= \E[f^2 - 2\mu_0 + \nu_0] \\
                    &= \E[(f - \mu_0)^2] + (\E[\nu_0] - \E[\mu_0^2]) ]] \\
                    &= \E[(f - \mu_0)^2] + \var(Y^0)
\end{align*}
The last and third-to-last lines give the equalities in the proposition. Turning to the FPR-diff, by Definition \ref{definition:FPR-diff} and Proposition \ref{proposition:fairness_functions}, we have
\begin{align*}
    \E[g^\text{cFPR}f(W)] &= \E\left[\left\{ \frac{(1 - Y^0)(1 - A)}{\E[(1 - Y^0)(1 - A)]} - \frac{(1 - Y^0)A}{\E[(1 - Y^0)A]}\right\}f(W)\right] \\
    &= \E\left[\E\left\{ \frac{(1 - Y^0)(1 - A)}{\E[(1 - Y^0)(1 - A)]} - \frac{(1 - Y^0)A}{\E[(1 - Y^0)A]} \-\ \Big\vert \-\ W\right\}f(W)\right] \\
    &= \E\left[\left\{ \frac{(1 - \mu_0)(1 - A)}{\E[(1 - \mu_0)(1 - A)]} - \frac{(1 - \mu_0)A}{\E[(1 - \mu_0)A]}\right\}f(W)\right]
\end{align*}
where the last line again follows from the consistency and ignorability assumptions. The result for $g^\text{cFNR}$ follows by identical reasoning.
\end{proof}

\section{Proof of Theorem \ref{LS_thm:asymptotic_normality}} \label{sec:proof_asymptotic_normality}
We prove this theorem first, since the result is used in the proofs of the other theorems. In the observable setting, the theorem follows immediately from the central limit theorem, so the subsequent derivations are for the counterfactual setting.

\subsection{Asymptotic normality of the risk estimator} \label{subsec:asymptotic_normality_risk}
For any fixed predictor $f_\beta$, we have
\begin{align*}
    \riskhat(f_\beta) - \risk(f_\beta) =&\-\ \Pn[f_\beta^2 - (2b^T\beta)\phihat + \phibarhat] - \Pb[f_\beta^2 - (2b^T\beta)\phi + \phibar] \\
    =&\-\ (\Pn - \Pb)\left\{f_\beta^2 - 2f_\beta\phi + \phibar\right\} + \\
     &\-\ (\Pn - \Pb)\left\{2f_\beta(\phi - \phihat) + (\phibarhat - \phibar)\right\} + \\
     &\-\ \Pb\left\{2f_\beta(\phi - \phihat) + (\phibarhat - \phibar)\right\}
\end{align*}
The second term of the last equality is $O_\Pb(\Vert \muhat_0 - \mu_0 \Vert \Vert \pihat - \pi \Vert/\sqrt{n}) = o_\Pb(1/\sqrt{n})$ by Lemma \ref{LS_lemma:kennedy}, Lemma \ref{LS_lemma:double_robustness}, and Assumption \ref{assumption:nuisance_rates}. The third term is $o_\Pb(1/\sqrt{n})$ by Lemma \ref{LS_lemma:double_robustness} and Assumption \ref{assumption:nuisance_rates}. We therefore have
\begin{align}
    \riskhat(f_\beta) - \risk(f_\beta) &= (\Pn - \Pb)\left\{f_\beta^2 - 2f_\beta\phi + \phibar\right\} + o_\Pb(1/\sqrt{n}) \label{LS_eq:true_risk_clt}
\end{align}
and the result follows by the central limit theorem.

\subsection{Asymptotic normality of the unfairness estimators} \label{subsec:asymptotic_normality_unfairness}
Since $g^\text{rate}$ does not depend on the outcome, we have $\ghat^\text{ind} = g^\text{rate}$, and the result follows immediately from the central limit theorem. We now prove the result for $g^\text{FPR}$ in the counterfactual setting. We have
\begin{align}
    \Pn(\ghat_j f_\beta) - \Pb(g_j f_\beta) &= \left\{\frac{\Pn[\gammahat_0 f_\beta]}{\Pn[\gammahat_0]} - \frac{\Pn[\gammahat_1 f_\beta]}{\Pn[\gammahat_1]}\right\} - \left\{\frac{\Pb[\gamma_0 f_\beta]}{\Pb[\gamma_0]} - \frac{\Pb[\gamma_1 f_\beta]}{\Pb[\gamma_1]}\right\} \label{LS_eq:unfairness_expansion1}
\end{align}
Considering just the $\gammahat_0$ and $\gamma_0$ terms, we have
\begin{align}
    & \frac{\Pn[\gammahat_0 f_\beta]}{\Pn[\gammahat_0]} - \frac{\Pb[\gamma_0 f_\beta]}{\Pb[\gamma_0]} \nonumber = \frac{\Pn[\gammahat_0 f_\beta]\Pb[\gamma_0] - \Pb[\gamma_0]\Pn[\gammahat_0]}{\Pn[\gammahat_0]\Pb[\gamma_0]} \nonumber \\
    &= \frac{\Pb[\gamma_0]\Big(\Pn[\gammahat_0  f_\beta] - \Pb[\gamma_0  f_\beta]\Big) - \Pb[\gamma_0  f_\beta]\Big(\Pn[\gammahat_0] - \Pb[\gamma_0]\Big)}{\Pn[\gammahat_0]\Pb[\gamma_0]} \nonumber \\
    &= \Pn[\gammahat_0]^{-1}\Big\{\underbrace{\left(\Pn[\gammahat_0  f_\beta] - \Pb[\gamma_0  f_\beta]\right)}_{(1)} - \frac{\Pb[\gamma_0 f_\beta]}{\Pb[\gamma_0]}\underbrace{(\Pn[\gammahat_0] - \Pb[\gamma_0])}_{(2)}\Big\} \label{LS_eq:unfairness_expansion2}
\end{align}
Terms (1) and (2) in \eqref{LS_eq:unfairness_expansion2} can be expanded as follows:
\begin{align*}
    (1) &= (\Pn - \Pb)\gamma_0 f_\beta + (\Pn - \Pb)((\gammahat_0 - \gamma_0)f_\beta) + \Pb((\gammahat_0 - \gamma_0) f_\beta) \\
    (2) &= (\Pn - \Pb)\gamma_0 + (\Pn - \Pb)(\gammahat_0 - \gamma_0) + \Pb(\gammahat_0 - \gamma_0)
\end{align*}
In both these expressions, the second term is $O_\Pb(\Vert \phihat - \phi \Vert/\sqrt{n}) = o_\Pb(1/\sqrt{n})$ by Lemma \ref{LS_lemma:kennedy} and Assumption \ref{assumption:nuisance_rates}, and the third term is $o_\Pb(1/\sqrt{n})$ by Lemma \ref{LS_lemma:double_robustness} and Assumption \ref{assumption:nuisance_rates}. Under Assumption \ref{assumption:bounded_propensity_estimator}, $\Pn[\gammahat_0]^{-1}$ is bounded, while $\Pb[\gamma_0 f_\beta]/\Pb[\gamma_0]$ is bounded under Assumption \ref{assumption:positivity}. Therefore, we can rewrite \eqref{LS_eq:unfairness_expansion1} as
\begin{align*}
    & \Pn[\gammahat_0]^{-1}(\Pn - \Pb)\left\{\gamma_0\left(f_\beta - \frac{\Pb[\gamma_0 f_\beta]}{\Pb[\gamma_0]}\right)\right\} + o_\Pb(1/\sqrt{n}) \\
    = & \-\ \Pn[\gammahat_0]^{-1}(\Pn - \Pb)\eta_0 + o_\Pb(1/\sqrt{n}) \label{LS_eq:unfairness_expansion_final}
\end{align*}
We can therefore rewrite $\Pn(\ghat_j f_\beta) - \Pb(g_j f_\beta)$ as
\begin{align*}
    \Pn[\gammahat_0]^{-1}(\Pn - \Pb)\eta_0 + \Pn[\gammahat_1]^{-1}(\Pn - \Pb)\eta_1 +  o_\Pb(1/\sqrt{n})
\end{align*}
Note that the analysis of term (2) in \eqref{LS_eq:unfairness_expansion2} yields that $\Pn[\gammahat_0] - \Pb[\gamma_0] = o_\Pb(1)$. Applying the central limit theorem to the vector $(\eta_0, \eta_1)$, followed the continuous mapping theorem, Slutsky's theorem, and the delta method, we have
\begin{align*}
    \sqrt{n}\left(\Pn(\ghat_j f_\beta) - \Pb(g_j f_\beta)\right) \xrightarrow{d} N\left(0, \var\left(\Pb(\gamma_0)^{-1}\eta_0 - \Pb(\gamma_1)^{-1}\eta_1\right)\right)
\end{align*}
as desired. The result for $g^\text{FNR}$ follows by identical reasoning.

\section{Proofs for constrained FADE} \label{appendix:proofs_constrained}
We state two additional lemmas that are only used in the constrained setting. The first lemma gives sufficient conditions under which the optimal value of an estimated convex problem converges at a particular rate to the optimal value of the target convex program. It is a adaptation of Theorem 3.5 in \cite{shapiro_asymptotic_1991} that follows immediately from Theorems 2.1 and 3.4 in that same paper.

\begin{lemma} \label{LS_lemma:shapiro} (Shapiro, 1991) \hspace{0.25em}
Let $\Theta$ be a compact subset of $\Rb^k$. Let $C(\Theta)$ denote the set of continuous real-valued functions on $\Theta$, with $\mathcal{L} = C(\Theta) \times \ldots \times C(\Theta)$ the $r$-dimensional Cartesian product. Let $\psi(\theta) = \left(\psi_{0}, \ldots, \psi_{r}\right) \in \mathcal{L}$ be a vector of convex functions. Consider the quantity $\alpha^*$ defined as the solution to the following convex optimization program: 
\begin{align*}
    \alpha^* = \min_{\theta\in\Theta} \quad & \psi_0(\theta) \\
              \text{subject to }
                 & \psi_j(\theta) \leq 0, \-\ j = 1, \ldots, r
\end{align*}
Assume that Slater's condition holds, so that there is some $\theta \in \Theta$ for which the inequalities are satisfied and non-affine inequalities are strictly satisfied, i.e. $\psi_j(\theta) < 0$ if $\psi_j$ is non-affine. Now consider a sequence of approximating programs, for $n = 1, 2, \ldots$:
\begin{align*}
    \widehat{\alpha}_n = \min_{\theta\in\Theta} \quad & \psihat_{0n}(\theta) \\
              \text{subject to }
                 & \psihat_{jn} (\theta) \leq 0, \-\ j = 1, \ldots, r
\end{align*}
with $\psihat_n(\theta) := \left(\psihat_{0n}, \ldots, \psihat_{rn}\right) \in \mathcal{L}$. Assume that $f(n)(\psihat_n - \psi)$ converges in distribution to a random element $W \in \mathcal{L}$ for some real-valued function $f(n)$. Then:
\begin{align*}
    f(n)(\widehat{\alpha}_n - \alpha_0) \rightsquigarrow L
\end{align*}
for a particular random variable $L$. It follows that $\widehat{\alpha}_n - \alpha_0 = O_\Pb(1/f(n))$.
\end{lemma}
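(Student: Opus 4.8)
The plan is to recognize both $\alpha^*$ and $\widehat{\alpha}_n$ as the value of a single \emph{optimal value functional} evaluated at two different points of $\mathcal{L}$ --- the true objective/constraint vector $\psi$ and its estimated perturbation $\psihat_n$ --- and then to transport the weak convergence of $\psihat_n$ through this functional using Shapiro's perturbation theory for convex programs together with the functional delta method.

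First I would define the optimal value map $V: \mathcal{L} \to \Rb$ so that for any $\xi = (\xi_0, \ldots, \xi_r) \in \mathcal{L}$,
\begin{align*}
    V(\xi) = \min_{\theta\in\Theta}\left\{\xi_0(\theta): \xi_j(\theta)\leq 0, \; j = 1,\ldots, r\right\},
\end{align*}
so that $\alpha^* = V(\psi)$ and $\widehat{\alpha}_n = V(\psihat_n)$. The standing hypotheses --- compactness of $\Theta$, convexity of each component of $\psi$, and Slater's condition --- are precisely those under which Theorem~2.1 of \cite{shapiro_asymptotic_1991} shows that $V$ is Hadamard directionally differentiable at $\psi$, with directional derivative given by the saddle-point expression
\begin{align*}
    V'_\psi(d) = \min_{\theta\in S^*}\;\max_{\lambda\in\Lambda^*}\left\{d_0(\theta) + \sum_{j=1}^r \lambda_j\, d_j(\theta)\right\},
\end{align*}
where $S^*\subseteq\Theta$ is the nonempty compact set of optimal primal solutions and $\Lambda^*\subseteq\Rpos^r$ is the corresponding set of optimal Lagrange multipliers. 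Slater's condition is exactly what forces $\Lambda^*$ to be nonempty and bounded and validates this formula; absent a constraint qualification, $V$ need not even be directionally differentiable.

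Next I would apply the functional delta method for Hadamard directionally differentiable maps, which is Theorem~3.4 of \cite{shapiro_asymptotic_1991}. By hypothesis $f(n)(\psihat_n - \psi)\rightsquigarrow W$ in $\mathcal{L}$, so this delta method pushes the convergence through $V$ to yield
\begin{align*}
    f(n)\left(\widehat{\alpha}_n - \alpha^*\right) = f(n)\left(V(\psihat_n) - V(\psi)\right)\;\rightsquigarrow\; V'_\psi(W) =: L,
\end{align*}
establishing the distributional claim. The rate statement then follows for free: a weakly convergent sequence of real random variables is tight, hence bounded in probability, so $f(n)(\widehat{\alpha}_n - \alpha^*) = O_\Pb(1)$ and therefore $\widehat{\alpha}_n - \alpha^* = O_\Pb(1/f(n))$.

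The main obstacle is conceptual rather than computational: the constrained optimal value $V$ is in general only Hadamard \emph{directionally} differentiable at $\psi$, not fully differentiable, so one must use the directional version of the delta method and cannot simply linearize. A consequence is that the limit $L = V'_\psi(W)$ is typically a nonlinear (positively homogeneous) functional of $W$ and so need not be Gaussian even when $W$ is. The care required is therefore in checking that Slater's condition supplies the constraint qualification that simultaneously makes $V'_\psi$ well defined and keeps the optimal multiplier set $\Lambda^*$ compact --- which is exactly the hinge that lets Theorems~2.1 and~3.4 be chained. For the downstream use in Theorems~\ref{LS_thm:excess_risk_constrained} and~\ref{LS_thm:excess_unfairness_constrained}, only the $O_\Pb(1/f(n))$ conclusion is needed, so in those applications the precise form of $L$ may be left unspecified.
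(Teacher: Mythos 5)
Your proposal is correct and follows essentially the same route as the paper, which states only that the lemma ``follows immediately from Theorems 2.1 and 3.4'' of \cite{shapiro_asymptotic_1991}; you have simply made that chain explicit (Hadamard directional differentiability of the optimal value map under Slater's condition, then the directional delta method, then tightness to get $O_\Pb(1/f(n))$). No gaps.
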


\subsection{Intermediate result}
The next lemma applies Lemma \ref{LS_lemma:shapiro} to the risk-min and unfair-min settings. For analytical purposes, we suppose that for each $k$, the quantities $\betaopt_r, \betahat_r, \betaopt_u, \betahat_u$ are constrained to lie in some (arbitrarily large) compact set $\Theta_k \subseteq \Rb^k$. Since $k \not\rightarrow \infty$, ultimately $\Theta_k$ is fixed to some set $\Theta$. For example, $\Theta$ could be given by box constraints defined by the largest and smallest numbers the machine can represent. Since this is a device for asymptotic analysis, we do not express it in the actual optimization. Under Assumption \ref{assumption:bounded_basis}, it follows that $b^T\beta$ is uniformly bounded in $\Theta$. (Recall that in practice, the output of any predictor will be truncated to lie in $[\ell_y, u_y]$.)

Per Proposition \ref{proposition:identification} and Remark \ref{remark:nu0}, we can write the objective function for the risk-min parameter $\betaopt_r$ equivalently as $\Pb[(b^T\beta)^2 - 2(b^T\beta)\phi]$ in the counterfactual setting (or $\Pb[(b^T\beta)^2 - 2(b^T\beta)Y]$ in the observable setting), since the term $\Pb[\phi^2]$ (or $\Pb[Y^2]$) drops out of the minimization. We utilize this form for analysis.

Denote by $\psi_0, \ldots \psi_{t+1}$ and $\psihat_0, \ldots \psihat_{t+1}$ the population and empirical risk and unfairness functions, each of which is a mapping from $\Theta$ to $\Rb$. For the counterfactual setting, these are given by
\begin{equation}
\begin{aligned}[c]
    & \psi_0(\beta) = \Pb[(b^T\beta)^2 - 2(b^T\beta)\phi] \\
    & \psi_j(\beta) = (\Pb[g_j b^T\beta)])^2 \\
    & \psi_{t+1}(\beta) = \Pb[(b^T\beta)^2 - (2b^T\beta)\phi + \phibar]
\end{aligned}
\qquad
\begin{aligned}[c]
    & \psihat_0 = \Pn[(b^T\beta)^2 - 2(b^T\beta)\phihat)] \\
    & \psihat_j(\beta) = (\Pn[\ghat_j b^T\beta])^2, \qquad j = 1, \ldots t \\
    & \psi_{t+1}(\beta) = \Pn[(b^T\beta)^2 - (2b^T\beta)\phihat + \phibarhat]
\end{aligned}    
\end{equation}
The observable setting substitutes $Y$ for $\phi$, $Y^2$ for $\phibar$, and $g_j$ for $\ghat_j$. Let $\calC(\Theta)$ denote the set of continuous real-valued functions on $\Theta$, with $\calL(\Theta) = \calC(\Theta) \times \ldots \times \calC(\Theta)$ the Cartesian product (with suitable dimension). Let $\psi_{(\bullet)}, \psihat_{(\bullet)}: \Theta \mapsto \calL(\Theta)$ be the vectors of functions that define the population and empirical optimization problem, for $\bullet \in \{r, u\}$, representing the risk-min and unfair-min problems. That is, for risk-min, define
\begin{align*}
    \psi_{(r)} &= \big(\psi_0(\beta), \-\ \psi_1(\beta), \ldots, \psi_t(\beta)\big)^T \\
    \psihat_{(r)} &= \left(\psihat_0(\beta), \-\ \psihat_1(\beta), \ldots, \psihat_t(\beta)\right)^T
\end{align*}
and for unfair-min, define
\begin{align*}
    \psi_{(u)} &= \left(\sum_{j=1}^t \alpha_j \psi_j(\beta), \-\ \psi_{t+1}(\beta)\right)^T \\
    \psihat_{(u)} &= \left(\sum_{j=1}^t \alpha_j \psihat_j(\beta), \-\ \psihat_{t+1}(\beta)\right)^T
\end{align*}
The first element in each of $\psi_{(r)}$ and $\psi_{(u)}$ is the objective function, and the remaining elements are the constraint functions.

\begin{lemma}[Convergence rates of estimated functions] \label{LS_lemma:function_convergence}
Under Assumptions \ref{assumption:psd} and \ref{assumption:bounded_basis} for the observable setting, and Assumptions \ref{assumption:psd} and \ref{assumption:consistency}--\ref{assumption:ignorability} for the counterfactual setting, there exist random elements $C_r, C_u$ taking values in the appropriate space $\calL(\Theta)$ such that 
\begin{align*}
     & \sqrt{n}(\psihat_{(r)} - \psi_{(r)}) \xrightarrow{d} C_r \\
     & \sqrt{n}(\psihat_{(u)} - \psi_{(u)}) \xrightarrow{d} C_u
\end{align*}
where the convergence is in $L_2$ norm.
\end{lemma}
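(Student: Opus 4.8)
The plan is to reduce the functional statement to a finite-dimensional one by exploiting the fact that, with the basis dimension $k$ fixed and $\Theta \subseteq \Rb^k$ compact, each coordinate of $\psihat_{(r)}$ and $\psihat_{(u)}$ is a \emph{quadratic polynomial in $\beta$ whose coefficients are sample averages}. Writing $\widehat m_j = \Pn(\ghat_j b)$ and $m_j = \Pb(g_j b)$, we have $\psihat_0(\beta) = \beta^T\Pn(bb^T)\beta - 2\beta^T\Pn(b\phihat)$, $\psihat_j(\beta) = (\widehat m_j^T\beta)^2$ for $j = 1,\ldots,t$, and $\psihat_{t+1}(\beta) = \beta^T\Pn(bb^T)\beta - 2\beta^T\Pn(b\phihat) + \Pn(\phibarhat)$, with the population functions $\psi_j$ obtained by replacing each sample average by its population counterpart and $\phihat,\phibarhat,\ghat_j$ by $\phi,\phibar,g_j$. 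I would collect all these coefficients --- the entries of $\Pn(bb^T)$, the components of $\Pn(b\phihat)$ and of each $\widehat m_j$, and the scalar $\Pn(\phibarhat)$ --- into a single random vector $\widehat\theta_n$ lying in a \emph{fixed} Euclidean space (since $k\not\to\infty$), with population analogue $\theta$. The map $T$ that sends a coefficient vector to the associated tuple of quadratic functions on $\Theta$ is then a fixed polynomial map, independent of $n$, so the entire problem becomes finite-dimensional in the coefficients plus a continuity argument for the map into $\calL(\Theta)$.

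First I would establish a joint central limit theorem for $\sqrt{n}(\widehat\theta_n - \theta)$. The purely empirical coordinates, such as $\sqrt{n}(\Pn(bb^T) - \Pb(bb^T))$, converge to a Gaussian limit by the ordinary multivariate CLT; the requisite finite second moments of the relevant products hold because the basis is bounded (Assumption \ref{assumption:bounded_basis}) and, in the counterfactual setting, because $\phi$ and $\phibar$ are bounded under positivity (Assumption \ref{assumption:positivity}). The nuisance-dependent coordinates $\Pn(b\phihat)$, $\widehat m_j$, and $\Pn(\phibarhat)$ are handled exactly as in the proof of Theorem \ref{LS_thm:asymptotic_normality}: each is decomposed into an oracle empirical-process term $(\Pn-\Pb)(b\phi)$ (etc.), a mixed term controlled by Lemma \ref{LS_lemma:kennedy}, and a bias term controlled by the double-robustness bound of Lemma \ref{LS_lemma:double_robustness}. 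The latter two remainders are $o_\Pb(1/\sqrt{n})$ under the nuisance-rate Assumption \ref{assumption:nuisance_rates}, so after scaling by $\sqrt{n}$ each nuisance-dependent coordinate shares the Gaussian limit of its oracle version, and the identifying Assumptions \ref{assumption:consistency}--\ref{assumption:ignorability} guarantee that the oracle centering coincides with the population quantity appearing in $\psi$. Stacking all coordinates and invoking the Cram\'er--Wold device then yields joint asymptotic normality of $\sqrt{n}(\widehat\theta_n - \theta)$.

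Next I would transfer this convergence to $\calL(\Theta)$ by the delta method in the Banach space. Because $\Theta$ is compact and each component of $T$ is a polynomial, $T$ is Fr\'echet differentiable from the finite-dimensional coefficient space into $\calL(\Theta)$, with a bounded linear derivative $DT(\theta)$. The coordinates $\psi_0,\psi_{t+1}$ depend \emph{linearly} on $\theta$ (for fixed $\beta$), so plain continuous mapping applies there; only the squared-unfairness coordinates are nonlinear, and for these I would use the uniform-in-$\beta$ expansion $(\widehat m_j^T\beta)^2 - (m_j^T\beta)^2 = 2(m_j^T\beta)\big((\widehat m_j - m_j)^T\beta\big) + O_\Pb(1/n)$, valid because $\beta$ ranges over the compact $\Theta$. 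Consequently $\sqrt{n}(\psihat_{(r)} - \psi_{(r)})$ and $\sqrt{n}(\psihat_{(u)} - \psi_{(u)})$ converge in distribution, in the product norm of the Additional-notation remark, to the images $C_r,C_u$ of the Gaussian limit under $DT(\theta)$; each limit is a linear image of a Gaussian vector, is polynomial in $\beta$, and therefore has continuous sample paths, so it is a genuine random element of $\calL(\Theta)$.

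The main obstacle is the passage from finite-dimensional to functional convergence: I must verify that convergence of the finitely many coefficients upgrades to convergence \emph{uniformly in} $\beta$ over $\Theta$ and that the limiting paths are continuous. This is exactly where the fixed-$k$, compact-$\Theta$, polynomial structure is indispensable, since it is what makes $T$ smooth with uniformly bounded derivatives and lets the delta method operate with an infinite-dimensional target; accordingly, the bulk of the care goes into the uniform remainder bounds rather than into any new probabilistic input. The genuinely probabilistic content, namely the doubly robust control of the nuisance-dependent coordinates, is already supplied by Lemmas \ref{LS_lemma:kennedy} and \ref{LS_lemma:double_robustness} and mirrors the argument in Theorem \ref{LS_thm:asymptotic_normality}.
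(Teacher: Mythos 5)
Your proof is correct, but it reaches the conclusion by a genuinely different route than the paper. The paper establishes, for each fixed $\beta$, an asymptotically linear representation of $\psihat_{(\bullet)}(\beta)-\psi_{(\bullet)}(\beta)$ (using the same doubly robust decomposition via Lemmas \ref{LS_lemma:kennedy} and \ref{LS_lemma:double_robustness} and Assumption \ref{assumption:nuisance_rates} that you invoke), and then upgrades pointwise normality to weak convergence of the whole process by appealing to the $\Pb$-Donsker property of the class $\{b^T\beta:\beta\in\Theta\}$. You instead observe that every coordinate of $\psihat_{(r)},\psihat_{(u)}$ is a quadratic in $\beta$ whose coefficients live in a fixed finite-dimensional space, prove a joint CLT for that coefficient vector, and push it through the (Fr\'echet differentiable, polynomial) evaluation map into $\calL(\Theta)$; uniformity in $\beta$ then comes for free from compactness of $\Theta$ rather than from empirical process theory. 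Your route is more elementary and arguably more transparent for the fixed-$k$ regime the lemma actually covers, while the paper's Donsker argument is the one that would survive if the quadratic/finite-coefficient structure were lost (e.g.\ non-quadratic objectives, or the $k\rightarrow\infty$ regime sketched in Remark \ref{remark:k_to_infinity}). Two small points of care, neither fatal: first, the map from the \emph{raw} sample averages to $\Pn(\ghat_j b)$ involves ratios of the form $\Pn[\gammahat_a b]/\Pn[\gammahat_a]$, so $T$ is polynomial only after you take the linearized $\widehat m_j$ themselves as coordinates of $\widehat\theta_n$ (which you do, by deferring to the expansion in the proof of Theorem \ref{LS_thm:asymptotic_normality}); second, like the paper's own proof, your argument uses Assumptions \ref{assumption:bounded_propensity_estimator}--\ref{assumption:nuisance_rates} in the counterfactual case even though the lemma's statement lists only Assumptions \ref{assumption:consistency}--\ref{assumption:ignorability}, so you have matched the proof rather than the (understated) hypotheses.
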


\begin{proof}
We will utilize the fact that the class $\{b^T\beta: \beta \in \Theta\}$ is $\Pb$-Donsker, since $b^T\beta$ is parametric and Lipschitz in $\beta$ under Assumption \ref{assumption:bounded_basis}.

In the observable setting, we have
\begin{align*}
    \psihat_{(r)} - \psi_{(r)} &= (\Pn - \Pb)\left((b^T\beta)^2 - 2(b^T\beta)Y, \-\ g_1 b^T\beta, \-\ \ldots \-\ g_t b^T\beta\right)^T
\end{align*}
so that the result follows immediately from the central limit theorem and the Donsker condition. We now turn to the counterfactual setting. First, consider the objective function $\psi_0$.
\begin{align*}
    \psihat_0(\beta) &- \psi_0(\beta) = \Pn\left\{(b^T\beta)^2 - 2(b^T\beta)\phihat\right\} - \Pb\left\{(b^T\beta)^2 - 2(b^T\beta)\phi\right\} \\
    &= (\Pn - \Pb)\left\{(b^T\beta)^2\right\} - \left\{\Pn(b^T\beta\phihat) - \Pb(b^T\beta\phi)\right\} \\
    &= (\Pn - \Pb)\left\{(b^T\beta)^2 - \phi\right\} + (\Pn - \Pb)(2(b^T\beta)(\phi - \phihat)) + \Pb(2(b^T\beta)(\phi - \phihat))
\end{align*}
The second term is $O_\Pb(\Vert \muhat_0 - \mu_0 \Vert \Vert \pihat - \pi \Vert/\sqrt{n}) = o_\Pb(1/\sqrt{n})$ by Lemma \ref{LS_lemma:kennedy}, Lemma \ref{LS_lemma:double_robustness}, and Assumption \ref{assumption:nuisance_rates}. The third term is $o_\Pb(1/\sqrt{n})$ by Lemma \ref{LS_lemma:double_robustness} and Assumption \ref{assumption:nuisance_rates}. We therefore have
\begin{align}
     \psihat_0(\beta) - \psi_0(\beta) &= (\Pn - \Pb)\left\{(b^T\beta)^2 - \phi\right\} + o_\Pb(1/\sqrt{n}) \label{LS_eq:objective_risk_clt}
\end{align}
We now consider the unfairness functions $\psi_j, j = 1, \ldots t$. We have
\begin{align}
    &\psihat_j(\beta) - \psi_j(\beta) = \left\{\Pn(\ghat_j b^T\beta) + \Pb(g_j b^T\beta)\right\}\left\{\Pn(\ghat_j b^T\beta) - \Pb(g_j b^T\beta)\right\} \nonumber \\
    &= \left\{\Pn(\ghat_j b^T\beta) + \Pb(g_j b^T\beta)\right\}
    \left(\Pn(\gammahat_0)^{-1}, \-\ \Pn(\gammahat_1)^{-1}\right)
    (\Pn - \Pb)\begin{pmatrix}
    \eta_0 \\ \eta_1
    \end{pmatrix} + 
    o_\Pb(1/\sqrt{n}) \label{LS_eq:fairness_clt}
\end{align}
where the second line follows the derivation in Section \ref{subsec:asymptotic_normality_unfairness}, coupled with the fact that $\Pn(\ghat_j b^T\beta) + \Pb(g_j b^T\beta) = o_\Pb(1)$. Finally, the analysis of $\psi_{t+1}$ is already given in Section \ref{subsec:asymptotic_normality_risk}:
\begin{align}
    \psihat_{t+1} - \psi_{t+1} &= (\Pn - \Pb)\left((b^T\beta)^2 - 2(b^T\beta)\phi + \phibar\right) + o_\Pb(1/\sqrt{n}) \label{LS_eq:true_risk_clt2}
\end{align}
Suppose we have a single fairness function $g_j$. Combining \eqref{LS_eq:objective_risk_clt}, \eqref{LS_eq:fairness_clt}, and \eqref{LS_eq:true_risk_clt2}, we have shown that $\psihat_{(r)} - \psi_{(r)}$ can be written as
\begin{align*}
    &\psihat_{(r)} - \psi_{(r)} = M
    (\Pn - \Pb)
    \begin{pmatrix}
    (b^T\beta)^2 - \phi \\
    \eta_0 \\
    \eta_1
    \end{pmatrix} + 
    \begin{pmatrix}
    o_\Pb(1/\sqrt{n}) \\ o_\Pb(1/\sqrt{n})
    \end{pmatrix}, \text{ where } \\
    & M = \begin{bmatrix}
            1 & 0 & 0 \\
            0 & \left\{\Pn(\ghat_j b^T\beta) + \Pb(g_j b^T\beta)\right\}\Pn(\gammahat_0)^{-1} & -\left\{\Pn(\ghat_j b^T\beta) + \Pb(g_j b^T\beta)\right\}\Pn(\gammahat_1)^{-1}
            \end{bmatrix}
\end{align*}
Applying the central limit theorem, Slutsky's theorem, the continuous mapping theorem, and the delta method, we have that $\sqrt{n}(\psihat_{(r)}(\beta) - \psi_{(r)}(\beta))$ converges to a normal distribution for any fixed $\beta$. Under the Donsker condition, this convergence is uniform over $\beta$, and $\sqrt{n}(\psihat_{(r)} - \psi_{(r)})$ converges to a Gaussian process. Equivalent reasoning applies in the case of multiple fairness functions, and to $\sqrt{n}(\psihat_{(u)} - \psi_{(u)})$.
\end{proof}

We now prove Theorems \ref{LS_thm:excess_risk_constrained} and \ref{LS_thm:excess_unfairness_constrained}. The two proofs proceed along similar lines. We will again utilize the fact that $\{b^T\beta: \beta \in \Theta\}$ is $\Pb$-Donsker, so that the empirical process $\{\sqrt{n}(\Pn - \Pb)(b^T\beta): \beta \in \Theta\}$ converges to a Gaussian process.

\subsection{Proof of Theorem \ref{LS_thm:excess_risk_constrained} (Excess risk in constrained FADE)}
\begin{proof}
We consider the risk-min problem first. We expand the excess risk by adding and subtracting the objective function at the solution $\betahat_r$:
\begin{align*}
    &\Pb\left[(b^T\betahat_r)^2 - 2(b^T\betahat_r)\phihat\right] - \Pb\left[(b^T\betaopt_r)^2 - 2(b^T\betaopt_r)\phi\right]
    \\
    =& \-\ \Pb\left[(b^T\betahat_r)^2 - 2(b^T\betahat_r)\phihat\right] -  \Pn\left[(b^T\betahat_r)^2 - 2(b^T\betahat_r)\phihat\right] \-\ + \\
    & \-\ \Pn\left[(b^T\betahat_r)^2 - 2(b^T\betahat_r)\phihat\right] - \Pb\left[(b^T\betaopt_r)^2 - 2(b^T\betaopt_r)\phi\right]
\end{align*}
The second term is $O_\Pb(1/\sqrt{n})$ by Lemma \ref{LS_lemma:function_convergence} and Shapiro's theorem. The first term is just $\psi_0(\betahat_r) - \psihat_0(\betahat_r)$, which is $O_\Pb(1/\sqrt{n})$ by \eqref{LS_eq:objective_risk_clt} in the proof of Lemma \ref{LS_lemma:function_convergence} coupled with the Donsker condition. Hence, the excess risk is $O_\Pb(1/\sqrt{n})$, as claimed.

We now turn to the unfair-min problem. The excess risk is
\begin{align*}
         & \-\ \Pb[(b^T\betahat_u)^2 - 2(b^T\betahat_u)\phi + \phibar)] - \epsilon^2 \\
    \leq & \-\ \Pb[(b^T\betahat_u)^2 - 2(b^T\betahat_u)\phi + \phibar)] - \Pn[(b^T\betahat_u)^2 - 2(b^T\betahat_u)\phihat + \phibarhat)] \\
    = -&(\Pn - \Pb)\left[(b^T\betahat_u)^2 + (2b^T\betahat_u)\phi + \phibar\right] \-\ + \\
    & (\Pn - \Pb)\left[(2b^T\betahat_u)(\phihat - \phi) + (\phibarhat - \phibar)\right] \-\ + \\
    & \Pb\left[(2b^T\betahat_u)(\phihat - \phi) + (\phibarhat - \phibar)\right]
\end{align*}
The first term is $O_\Pb(1/\sqrt{n})$ by the central limit theorem and the Donsker condition. The second term is $O_\Pb(\Vert \phihat - \phi \Vert/\sqrt{n}) = o_\Pb(1/\sqrt{n})$ by Lemma \ref{LS_lemma:kennedy}, Lemma \ref{LS_lemma:double_robustness}, and Assumption \ref{assumption:consistent_nuisance_estimators}. The last term is $o_\Pb(1/\sqrt{n})$ by Lemma \ref{LS_lemma:double_robustness}. The excess risk is therefore $O_\Pb(1/\sqrt{n})$, as claimed.
\end{proof}

\subsection{Proof of Theorem \ref{LS_thm:excess_unfairness_constrained} (Excess unfairness in constrained FADE)}
\begin{proof}
We consider the unfair-min problem first. We expand the excess unfairness by adding and subtracting the objective function at the solution $\betahat_u$:
\begin{align}
    & \sum_{j=1}^t \alpha_j (\Pb[g_j b^T\betahat_u])^2 - \sum_{j=1}^t \alpha_j (\Pb[g_j b^T\betaopt_u])^2 \nonumber \\
    &= \sum_{j=1}^t \alpha_j \left\{(\Pb[g_j b^T\betahat_u])^2 - (\Pn[\ghat_j b^T\betahat_u])^2\right\} + \sum_{j=1}^t \alpha_j\left\{(\Pn[\ghat_j b^T\betahat_u])^2 - (\Pb[g_j b^T\betaopt_u])^2 \right\} \label{LS_eq:excess_unfairness_expansion}
\end{align}
Again, the second term is $O_\Pb(1/\sqrt{n})$ by Lemma \ref{LS_lemma:function_convergence} and Shapiro's theorem. The first term is equal to $\sum_{j=1}^t \alpha_j (\psi_j(\betahat) - \psihat_j(\betahat))$, which is $O_\Pb(1/\sqrt{n})$ by \eqref{LS_eq:fairness_clt} in the proof of Lemma \ref{LS_lemma:function_convergence} coupled with the Donsker condition. The excess unfairness is therefore $O_\Pb(1/\sqrt{n})$, as claimed.

We now turn to the risk-min problem. The excess unfairness for constraint $j$ is
\begin{align*}
    & \-\ (\Pb[g_j b^T\betahat_u])^2 - \epsilon^2 \\
    \leq & \-\ (\Pb[g_j b^T\betahat_u])^2 - (\Pn[\ghat_j b^T\betahat_u])^2 \\
    = & \-\ O_\Pb(1/\sqrt{n})
\end{align*}
where the last line simply uses the analysis for term (1) from \eqref{LS_eq:excess_unfairness_expansion}. The excess unfairness is therefore $O_\Pb(1/\sqrt{n})$, as claimed.
\end{proof}

\section{Proofs for penalized FADE}
Throughout this section, let
\begin{align*}
    \boldQhat_\lambda &= \Pn(bb^T) + \sum_{j=1}^t \lambda_j\Pn(\ghat_j b)\Pn(\ghat_j b)^T \\
    \boldQ_\lambda &= \Pb(bb^T) + \sum_{j=1}^t \lambda_j\Pb(g_j b)\Pb(g_j b)^T
\end{align*}
so that
\begin{align*}
    & \beta_\lambda = \boldQ_\lambda^{-1}\Pb(b\Yt) \\
    & \betahat_\lambda = \begin{cases} \boldQhat_\lambda^{-1}\Pn(bY) & \text{(Observable)} \\
                                     \boldQhat_\lambda^{-1}\Pn(b\phihat)      & \text{(Counterfactual)}
                        \end{cases}
\end{align*}
Under the assumptions of Theorems \ref{LS_thm:series_rate_risk} and \ref{LS_thm:series_rate_unfairness}, we prove several preliminary results that are used in the theorem proofs.

\begin{lemma} (Bounded norm for $\boldQhat_\lambda^{-1}$). \label{LS_lemma:Qhat_inv_norm}
\begin{align*}
    \Pb(\Vert \boldQhat_\lambda^{-1} \Vert) \leq C \rightarrow 1 \text{ for some constant } C 
\end{align*}
\end{lemma}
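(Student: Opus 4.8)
The plan is to bound $\Vert \boldQhat_\lambda^{-1}\Vert$ by $1/\lambda_{\min}(\Pn(bb^T))$ uniformly in $\lambda$, and then to show that this latter quantity is bounded with probability approaching $1$ using Assumptions \ref{assumption:psd} and \ref{assumption:bounded_basis}. The key observation is that each penalty matrix $\Pn(\ghat_j b)\Pn(\ghat_j b)^T$ is a rank-one outer product and therefore positive semidefinite, while each $\lambda_j \geq 0$. Hence, in the Loewner ordering, $\boldQhat_\lambda \succeq \Pn(bb^T)$ for every $\lambda \in \Rpos^t$, so by the Courant--Fischer characterization $\lambda_{\min}(\boldQhat_\lambda) \geq \lambda_{\min}(\Pn(bb^T))$. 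Since $\boldQhat_\lambda$ is symmetric positive definite, its operator norm inverse is the reciprocal of its smallest eigenvalue, giving $\Vert \boldQhat_\lambda^{-1}\Vert \leq 1/\lambda_{\min}(\Pn(bb^T))$. Crucially this upper bound does not depend on $\lambda$, so a single constant works uniformly over $\Lambda$, matching how the lemma is invoked in the penalized proofs. The argument is moreover identical in the observable and counterfactual settings, since it discards the penalty terms entirely and the surviving quantity $\Pn(bb^T)$ involves no nuisance estimates.

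Second, I would control $\lambda_{\min}(\Pn(bb^T))$ from below. By Assumption \ref{assumption:psd}, the eigenvalues of $\Pb(bb^T) = \E[bb^T]$ are bounded away from $0$ uniformly in $n$, so there is a constant $c > 0$ with $\lambda_{\min}(\Pb(bb^T)) \geq c$. Under Assumption \ref{assumption:bounded_basis}, each entry of $bb^T$ is bounded, so with $k$ fixed the law of large numbers gives $\Pn(bb^T) \to \Pb(bb^T)$ entrywise, hence in operator norm (all matrix norms being equivalent for fixed $k$). Weyl's inequality then yields $\lvert\lambda_{\min}(\Pn(bb^T)) - \lambda_{\min}(\Pb(bb^T))\rvert \leq \Vert \Pn(bb^T) - \Pb(bb^T)\Vert = o_\Pb(1)$, so $\lambda_{\min}(\Pn(bb^T)) \geq c/2$ with probability approaching $1$. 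Combining the two steps gives $\Vert \boldQhat_\lambda^{-1}\Vert \leq 2/c =: C$ with probability approaching $1$, which is the claim.

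There is no serious obstacle here: the entire argument rests on the monotonicity trick of dropping the nonnegative PSD penalty terms, which simultaneously removes the dependence on $\lambda$ (and hence on the nuisance-dependent $\ghat_j$ in the counterfactual case) and reduces everything to a single sample second-moment matrix. The only point requiring care is the eigenvalue control, namely confirming that the uniform-in-$n$ lower bound $c$ from Assumption \ref{assumption:psd} transfers to $\Pn(bb^T)$; the bounded-basis Assumption \ref{assumption:bounded_basis} together with the fixed-$k$ regime makes the ordinary law of large numbers, coupled with the continuous (Lipschitz) dependence of eigenvalues on the matrix via Weyl's inequality, entirely sufficient. If one wanted an explicit convergence rate instead of an $o_\Pb(1)$ statement, a matrix concentration inequality could be substituted, but this is not needed for the stated conclusion.
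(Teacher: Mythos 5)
Your proof is correct, and it takes a genuinely different (and somewhat more self-contained) route than the paper's. The paper's argument is a one-line appeal to Assumption \ref{assumption:psd} together with the consistency of $\boldQhat_\lambda$ for $\boldQ_\lambda$: since $\boldQ_\lambda \succeq \Pb(bb^T)$ has smallest eigenvalue bounded away from zero and $\boldQhat_\lambda$ converges to it, the empirical matrix is eventually well-conditioned. That route implicitly relies on the convergence of the estimated penalty terms $\Pn(\ghat_j b)\Pn(\ghat_j b)^T$ to their population counterparts (hence on the nuisance consistency of $\ghat_j$ in the counterfactual setting) and on $\lambda$ ranging over the compact set $\Lambda$ so that the convergence is uniform. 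Your argument instead discards the penalty terms outright via the Loewner ordering, observing that $\lambda_j \geq 0$ and each $\Pn(\ghat_j b)\Pn(\ghat_j b)^T$ is PSD, so $\boldQhat_\lambda \succeq \Pn(bb^T)$ and $\Vert \boldQhat_\lambda^{-1}\Vert \leq 1/\lambda_{\min}(\Pn(bb^T))$; the remaining work is just the law of large numbers plus Weyl's inequality applied to $\Pn(bb^T)$, which involves no nuisance estimates. What this buys you is a bound that is uniform over all of $\Rpos^t$ rather than only over a compact $\Lambda$, and that requires neither Assumption \ref{assumption:compact_lambda} nor Assumption \ref{assumption:consistent_nuisance_estimators} for this particular lemma; the paper's version is shorter but leans on machinery (consistency of the full $\boldQhat_\lambda$) that your approach shows is unnecessary here. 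The only cosmetic remark is that your reading of the statement as $\Pb(\Vert \boldQhat_\lambda^{-1}\Vert \leq C) \rightarrow 1$ is the intended one, and your conclusion matches the form in which the lemma is actually invoked, namely $\Vert \boldQhat_\lambda^{-1}\Vert = O_\Pb(1)$.
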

\begin{proof}
This follows from Assumption \ref{assumption:psd} plus the consistency of $\boldQhat_\lambda$ for $\boldQ_\lambda$. It follows that $\Vert \boldQhat_\lambda^{-1} \Vert = O_\Pb(1)$.
\end{proof}

\begin{lemma} \label{LS_lemma:penalized_estimator_convergence}
Fix a $\lambda \in \Lambda$. Then
\begin{align*}
     \Vert \betahat_\lambda - \betaopt_\lambda \Vert &= \begin{cases}
     O_\Pb(\sqrt{1/n}) & \text{(Observable)} \\
     O_\Pb(\sqrt{1/n}) + O_\Pb\left(h(n)\right) & \text{(Counterfactual)}
     \end{cases}
\end{align*}
\end{lemma}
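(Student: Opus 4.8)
The plan is to control $\betahat_\lambda - \betaopt_\lambda$ through a standard ``plug-in plus resolvent'' decomposition that reduces everything to the convergence of two ingredients: the response vector and the penalized Gram matrix. Writing $\vhat$ for $\Pn(bY)$ (observable) or $\Pn(b\phihat)$ (counterfactual) and $v = \Pb(b\Yt)$, so that $\betahat_\lambda = \boldQhat_\lambda^{-1}\vhat$ and $\betaopt_\lambda = \boldQ_\lambda^{-1}v$, I would first apply the resolvent identity $\boldQhat_\lambda^{-1} - \boldQ_\lambda^{-1} = -\boldQhat_\lambda^{-1}(\boldQhat_\lambda - \boldQ_\lambda)\boldQ_\lambda^{-1}$ to obtain
\begin{align*}
    \betahat_\lambda - \betaopt_\lambda = \boldQhat_\lambda^{-1}\left[(\vhat - v) - (\boldQhat_\lambda - \boldQ_\lambda)\betaopt_\lambda\right].
\end{align*}
By Lemma \ref{LS_lemma:Qhat_inv_norm}, $\Vert \boldQhat_\lambda^{-1}\Vert = O_\Pb(1)$, and under Assumption \ref{assumption:psd} the matrix $\boldQ_\lambda$ has eigenvalues bounded away from $0$ (by Weyl's inequality, since the penalty terms are PSD), so $\Vert \betaopt_\lambda \Vert = \Vert \boldQ_\lambda^{-1}\Pb(b\Yt)\Vert = O(1)$ using boundedness of $b$ and $\Yt$. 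Hence it suffices to show that $\Vert \vhat - v\Vert$ and $\Vert \boldQhat_\lambda - \boldQ_\lambda\Vert$ are each $O_\Pb(\sqrt{1/n})$ in the observable setting and $O_\Pb(\sqrt{1/n}) + O_\Pb(h(n))$ in the counterfactual setting; since $\lambda$ is fixed, all $\lambda$-dependent constants are harmless.

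For the response vector, in the observable setting $\vhat - v = (\Pn - \Pb)(bY)$ is a centered average of bounded iid vectors, hence $O_\Pb(\sqrt{1/n})$ by the CLT applied coordinatewise. In the counterfactual setting I would use the identification $\Pb(b\phi) = \Pb(bY^0) = v$ (which follows from $\E[\phi \mid W] = \mu_0$) and split
\begin{align*}
    \vhat - v = (\Pn - \Pb)(b\phi) + (\Pn - \Pb)\big(b(\phihat - \phi)\big) + \Pb\big(b(\phihat - \phi)\big).
\end{align*}
The first term is $O_\Pb(\sqrt{1/n})$ by the CLT; the second is $O_\Pb(\Vert b(\phihat - \phi)\Vert/\sqrt{n})$ by Lemma \ref{LS_lemma:kennedy} and hence $o_\Pb(h(n))$ by Lemma \ref{LS_lemma:double_robustness}; and the third is $O_\Pb(h(n))$ directly by Lemma \ref{LS_lemma:double_robustness}. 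This is exactly where the doubly robust structure delivers the product rate $h(n)$ rather than a slower first-order nuisance rate.

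For the Gram matrix I would expand
\begin{align*}
    \boldQhat_\lambda - \boldQ_\lambda = (\Pn - \Pb)(bb^T) + \sum_{j=1}^t \lambda_j\big(\Pn(\ghat_j b)\Pn(\ghat_j b)^T - \Pb(g_j b)\Pb(g_j b)^T\big).
\end{align*}
The first term is $O_\Pb(\sqrt{1/n})$ entrywise by the CLT. For each rank-one term, writing $m_j = \Pn(\ghat_j b)$ and $\bar m_j = \Pb(g_j b)$, I would use $m_j m_j^T - \bar m_j \bar m_j^T = (m_j - \bar m_j)m_j^T + \bar m_j(m_j - \bar m_j)^T$, so that (both $m_j, \bar m_j$ being bounded) the order is governed by $m_j - \bar m_j = \Pn(\ghat_j b) - \Pb(g_j b)$. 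In the observable setting $\ghat_j = g_j$ and this is $O_\Pb(\sqrt{1/n})$ by the CLT; in the counterfactual setting it is $O_\Pb(\sqrt{1/n}) + O_\Pb(h(n))$ by repeating the ratio-and-double-robustness expansion of Section \ref{subsec:asymptotic_normality_unfairness} with each coordinate of $b$ in place of $f_\beta$. Summing the finitely many fixed $\lambda_j$ terms preserves the rate, which, combined with the opening decomposition and the response bound, yields the claim.

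I anticipate the main obstacle to be the counterfactual terms involving $\phihat$ and $\ghat_j$: one must verify that the estimated fairness functions, whose denominators are themselves sample expectations $\Pn[\gammahat_a]$, still admit the clean bias--variance split, and that the Kennedy-type cross term is genuinely negligible relative to the $h(n)$ bias term. Both are handled by reusing the expansion already carried out in Section \ref{subsec:asymptotic_normality_unfairness}, so the remaining work is primarily in confirming that that argument transfers from $f_\beta$ to the fixed, bounded, Donsker basis coordinates, and that the resulting rate is $h(n)$ rather than the slower $\hbar(n)$, since $\nu_0$ never enters $\phi$ or the fairness functions.
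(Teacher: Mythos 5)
Your proposal is correct and follows essentially the same route as the paper's proof: the same resolvent decomposition $\betahat_\lambda - \betaopt_\lambda = \boldQhat_\lambda^{-1}\left[(\vhat - v) - (\boldQhat_\lambda - \boldQ_\lambda)\betaopt_\lambda\right]$, the same appeal to Lemma \ref{LS_lemma:Qhat_inv_norm}, Lemma \ref{LS_lemma:kennedy}, and Lemma \ref{LS_lemma:double_robustness}, and the same telescoping of the rank-one penalty terms (you telescope $m_jm_j^T - \bar m_j\bar m_j^T$ at the matrix level, the paper after multiplying by $\betaopt_\lambda$, which is immaterial). If anything, your treatment is slightly more careful than the paper's: you retain the $\vhat - v$ term explicitly in the observable case and you flag the ratio structure of $\ghat_j$ via the Section \ref{subsec:asymptotic_normality_unfairness} expansion, both of which the paper handles more tersely.
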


\begin{proof}
In the observable setting, we have
\begin{align*}
    \betahat_\lambda - \betaopt_\lambda &= (\boldQhat_\lambda^{-1} - \boldQ_\lambda^{-1})\Pb(bY) \\
                     &= \boldQhat_\lambda^{-1}(\boldQ_\lambda - \boldQhat_\lambda)\boldQ^{-1}\Pb(bY) \\
                     &= \boldQhat_\lambda^{-1}(\boldQ_\lambda - \boldQhat_\lambda)\betaopt_\lambda \\
     &= \boldQhat_\lambda^{-1}\Big\{(\Pn - \Pb)(bb^T\betaopt_\lambda) \-\ + \\
     & \qquad \qquad \sum_{j=1}^t \left[\lambda_j (\Pn - \Pb)(bg_j)\Pb(g_jb^T\betaopt_\lambda) + \Pn(bg_j)(\Pn - \Pb)(g_jb^T\betaopt_\lambda)\right]\Big\}                    
\end{align*}
The norm of each term in the braces is $O_\Pb(1/\sqrt{n})$ by the central limit theorem. By Lemma \ref{LS_lemma:Qhat_inv_norm}, $\boldQhat_\lambda^{-1}$ doesn't contribute to the rate, so $\Vert \betahat_\lambda - \betaopt_\lambda \Vert = O_\Pb(1/\sqrt{n})$ as claimed.

In the counterfactual setting, we have
\begin{align}
    \betahat_\lambda - \betaopt_\lambda &= \boldQhat_\lambda^{-1}\Pn(b\phihat) - \boldQ_\lambda^{-1}\Pb(b\phi) \nonumber \\
                     &= (\boldQhat_\lambda^{-1} - \boldQ_\lambda^{-1})\Pb(b\phi) + \boldQhat_\lambda^{-1}(\Pn(b\phihat) - \Pb(b\phi)) \nonumber \\
                     &= \boldQhat_\lambda^{-1}(\boldQ_\lambda - \boldQhat_\lambda)Q^{-1}\Pb(b\phi) + \boldQhat_\lambda^{-1}(\Pn(b\phihat) - \Pb(b\phi)) \nonumber \\
                     &= \underbrace{\boldQhat_\lambda^{-1}(\boldQ_\lambda - \boldQhat_\lambda)\betaopt_\lambda}_{(1)} + \underbrace{\boldQhat_\lambda^{-1}(\Pn(b\phihat) - \Pb(b\phi))}_{(2)} \label{LS_eq:betahat_expansion}
\end{align}
The norm of term (2) in \eqref{LS_eq:betahat_expansion} is $O_\Pb(1/\sqrt{n}) + O_\Pb(h(n))$ by Lemma \ref{LS_lemma:double_robustness} and Lemma \ref{LS_lemma:Qhat_inv_norm}. For term (1), ignoring the leading $\boldQhat_\lambda^{-1}$ for now, we have
\begin{align*}
    & (\boldQ_\lambda - \boldQhat_\lambda)\betaopt_\lambda = \underbrace{(\Pn - \Pb)(bb^T\betaopt_\lambda)}_{(a)} \-\ + \\ 
    & \qquad \sum_{j=1}^t \lambda_j\big[\underbrace{(\Pn(b\ghat_j) - \Pb(bg_j))\Pb(g_jb^T\betaopt_\lambda)}_{(b)} + \underbrace{\Pn(b\ghat_j)(\Pn(\ghat_j b^T\betaopt_\lambda) - \Pb(gb^T\betaopt_\lambda)}_{(c)}\big]
\end{align*}
The norm of term (a) is $O_\Pb\left(\sqrt{1/n}\right)$ by the central limit theorem. Terms (b) and (c) decompose as follows:
\begin{align*}
    (b) &= \Pb(g_jb^T\betaopt_\lambda)\left\{(\Pn - \Pb)(bg_j) + (\Pn - \Pb)(b(\ghat_j - g_j)) + \Pb(b(\ghat_j - g_j))\right\} \\
    (c) &= \Pn(b\ghat_j)\left\{(\Pn - \Pb)(g_j b^T\betaopt_\lambda) + (\Pn - \Pb)(\ghat_j - g_j)(b^T\betaopt_\lambda) + \Pb((\ghat_j - g_j)b^T\betaopt_\lambda)\right\}
\end{align*}
The norms of the first term in braces in each of these two expressions is $O_\Pb(1/\sqrt{n})$ by the central limit theorem. The norm of the second term is $o_\Pb(1/\sqrt{n})$ by Lemma \ref{LS_lemma:kennedy} and Assumption \ref{assumption:consistent_nuisance_estimators}. The norm of the third term is $O_\Pb(1/\sqrt{n}) + O_\Pb(h(n))$ by Lemma \ref{LS_lemma:double_robustness}. Using Lemma \ref{LS_lemma:Qhat_inv_norm}, the consistency of $\Pn(b\ghat_j)$ for $\Pb(bg_j)$, and the boundedness of $\Pb(g_jb^T\betaopt_\lambda)$, we have
\begin{align*}
    \Vert \betahat_\lambda - \betaopt_\lambda \Vert &= O_\Pb(\sqrt{1/n}) + O_\Pb\left(h(n)\right)
\end{align*}
as claimed.
\end{proof}

We now prove the two theorems. We will use the fact that under Assumption \ref{assumption:psd}, $\betahat_\lambda - \betaopt_\lambda$ is Lipschitz in $\lambda$, and $\Lambda$ is compact, so the set $\{\betahat_\lambda - \betaopt_\lambda: \lambda \in \Lambda\}$ is Donsker.

\subsection{Proof of Theorem \ref{LS_thm:series_rate_risk} (Excess risk in penalized FADE)}
Fix a $\lambda \in \Lambda$. We have
\begin{align*}
    \Pb\left[\left(b^T\betahat_\lambda - \Yt\right)^2\right] &- \Pb\left[\left(b^T\betaopt_\lambda - \Yt\right)^2\right] = \Vert b^T\betahat_\lambda - \Yt\Vert^2 - \Vert b^T\betaopt_\lambda - \Yt \Vert^2 \\
    & \hspace{-1em} = \left(\Vert b^T\betahat_\lambda - \Yt\Vert - \Vert b^T\betaopt_\lambda - \Yt \Vert\right)\left(\Vert b^T\betahat_\lambda - \Yt\Vert + \Vert b^T\betaopt_\lambda - \Yt \Vert\right)
\end{align*}
Since $\betahat_\lambda$ is consistent for $\betaopt_\lambda$ (by Lemma \ref{LS_lemma:penalized_estimator_convergence}), the second factor is $O_\Pb(1)$, so we can just consider the first factor.
\begin{align*}
    \Vert b^T\betahat_\lambda - \Yt\Vert - \Vert b^T\betaopt_\lambda - \Yt \Vert &\leq \Vert b^T\betahat_\lambda - b^T\betaopt_\lambda \Vert \\
    &= O_\Pb\left(\Vert \betahat_\lambda - \betaopt_\lambda \Vert\right) \\
    &= O_\Pb(\sqrt{1/n}) + O_\Pb(h(n))
\end{align*}
where the first line uses the reverse triangle inequality, the second line uses Assumption \ref{assumption:bounded_basis}, and the third line uses Lemma \ref{LS_lemma:penalized_estimator_convergence}. Under the Donsker condition, the convergence is uniform over $\Lambda$:
\begin{align*}
    \sup_{\lambda\in\Lambda}\left\{\Pb\left[\left(b^T\betahat_\lambda - \Yt\right)^2\right] - \Pb\left[\left(b^T\betaopt_\lambda - \Yt\right)^2\right]\right\} &= O_\Pb(\sqrt{1/n}) + O_\Pb(h(n))
\end{align*}

\subsection{Proof of Theorem \ref{LS_thm:series_rate_unfairness} (Excess unfairness in penalized FADE)}
Fix a $\lambda \in \Lambda$. The excess unfairness for $g_j$ is
\begin{align*}
    \Pb\left[g_jb^T\betahat_\lambda\right] - \Pb\left[g_jb^T\betaopt_\lambda\right] &= \Pb[g_jb^T(\betahat_\lambda - \betaopt_\lambda)] \\
    &\leq \Pb[\lvert g_jb^T(\betahat_\lambda - \betaopt_\lambda)\rvert] \\
    &= \sqrt{(\Pb[\lvert g_jb^T(\betahat_\lambda - \betaopt_\lambda)\rvert])^2} \\
    &\leq \sqrt{\Pb[(g_jb^T(\betahat_\lambda - \betaopt_\lambda))^2]} \\
    &=O_\Pb\left( \sqrt{\Pb[(\betahat_\lambda - \betaopt_\lambda)^2]}\right) \\
    &= \Vert \betahat_\lambda - \betaopt_\lambda \Vert \\
    &= O_\Pb(\sqrt{1/n}) + O_\Pb(h(n))
\end{align*}
where the last line uses Lemma \ref{LS_lemma:penalized_estimator_convergence}. Under the Donsker condition, the convergence is uniform over $\Lambda$:
\begin{align*}
    \sup_{\lambda\in\Lambda}\left\{\max_{j \in 1, \ldots, t}\left(\Pb\left[g_jb^T\betahat_\lambda\right] - \Pb\left[g_jb^T\betaopt_\lambda\right]\right)\right\} &= O_\Pb(\sqrt{1/n}) + O_\Pb(h(n))
\end{align*}

\section{Bases with dimension \texorpdfstring{$k \geq n$}{larger than n}} \label{appendix:growing_bases}
We can generalize our estimators slightly to accommodate case where where $k \geq n$, meaning the dimension of the basis is greater than the sample size, as is the case for example with smoothing splines or RKHSs. We simply add an appropriate penalty matrix term $\lambda_0\beta^T\boldK\beta$ to the penalized estimator expression or to the objective function for the constrained estimators, where $\boldK$ is a $k \times k$ smoothing matrix. In the former case, for example, the estimator $\betahat_\lambda$ becomes
\begin{align}
    & \-\ \argmin_{\beta\in\Rb^k} \Pn[(b^T\beta - \phihat)^2]  + \lambda_0\beta^T\boldK\beta + \sum_{j=1}^t \lambda_j (\Pn[\ghat_j b^T\beta])^2 \\
    = & \-\ \left(\Pn(bb^T) + \lambda_0\boldK + \sum_{j=1}^t \lambda_j\Pn(\ghat_j b)\Pn(\ghat_j b)^T\right)^{-1}\Pn(b\phihat) \label{LS_eq:closed_form_with_smoothing}
\end{align}
For instance, in a smoothing spline setting, $b$ represents a spline basis, and $\boldK_{ij} = \int_\calW b_i^{''}(w)b_j^{''}(w)dw$. In an RKHS, we'd have $b_i = \sum_{j=1}^n k(\cdot, w_j)$ and $\boldK_{ij} = k(w_i, w_j)$. In a ridge setting we'd have $\boldK = I$. The penalty term ensures the invertibility of the large matrix in \eqref{LS_eq:closed_form_with_smoothing}, and it preserves the fast computability of a large set of solutions $\betahat_\lambda$. 

This penalty term may also be useful to prevent overfitting even in cases where $k < n$, if $k$ is close to $n$ or if the basis is very expressive. 

\end{appendices}

\bibliography{references}

\end{document}